\newtheorem{theorem}{Theorem}[section]
\newtheorem{claim}[theorem]{Claim}
\newtheorem{proposition}[theorem]{Proposition}
\newtheorem{lemma}[theorem]{Lemma}
\newtheorem{corollary}[theorem]{Corollary}
\newtheorem{observation}[theorem]{Observation}
\newtheorem{fact}[theorem]{Fact}
\newtheorem{definition}[theorem]{Definition}
\newtheorem{assumption}[theorem]{Assumption}
\newcommand{\newreptheorem}[2]{%
\newenvironment{rep#1}[1]{%
 \def\rep@title{#2 \ref{##1}}%
 \begin{rep@theorem}}%
 {\end{rep@theorem}}}
\newtheorem{remark}[theorem]{Remark}
\numberwithin{equation}{section}
\def\S{{\mathbb S}}
\newcommand{\A}{\mathcal{A}}
\newcommand{\K}{\ensuremath{\mathcal K}}
\def\regret{\mbox{{Regret\ }}}
\newcommand{\ignore}[1]{}
\newcommand{\neutralize}[1]{\expandafter\let\csname c@#1\endcsname\count@}
\newtheorem*{theorem*}{Theorem}
\newtheorem*{lemma*}{Lemma}
\newtheorem*{corollary*}{Corollary}
\newtheorem*{proposition*}{Proposition}
\newtheorem*{claim*}{Claim}
\newtheorem*{fact*}{Fact}
\newtheorem*{observation*}{Observation}
\theoremstyle{definition}
\newtheorem*{definition*}{Definition}
\newtheorem*{remark*}{Remark}
\newtheorem*{example*}{Example}
 \theoremstyle{plain}
\DeclareMathAlphabet{\mathbfsf}{\encodingdefault}{\sfdefault}{bx}{n}
\newcommand{\norm}[1]{\left\lVert#1\right\rVert}
\renewcommand{\O}{O}
\newcommand{\E}{\mathop{\mathbb{E}}}
\newcommand{\poly}{\mathrm{poly}}
\newcommand{\reals}{\mathbb{R}}
\newcommand{\eps}{\varepsilon}
\renewcommand{\leq}{~\le~}
\renewcommand{\geq}{~\ge~}
\let\oldtfrac\tfrac
\renewcommand{\tfrac}[2]{\smash{\oldtfrac{#1}{#2}}}
\let\nablaold\nabla
\renewcommand{\nabla}{\nablaold\mkern-2.5mu}
\title{Non-Stochastic Control with Bandit Feedback}
\author{
  Paula Gradu \\
  Department of Mathematics\\
  Princeton University\\
  \texttt{pgradu@princeton.edu} \\
  \AND
  John Hallman \\
  Department of Mathematics\\
  Princeton University\\
  \texttt{hallman@princeton.edu} \\
  \And
  Elad Hazan \\
  Department of Computer Science\\
  Princeton University \\
  \texttt{ehazan@cs.princeton.edu}
}
\title{Non-Stochastic Control with Bandit Feedback}
\date{\today}
\author{  Paula Gradu$^{1, 3}$ \qquad  John Hallman$^{1, 3}$ \qquad  Elad Hazan$^{2, 3}$ \\
  $^1$ Department of Mathematics, Princeton University \\
  $^2$ Department of Computer Science, Princeton University \\
  $^3$ Google AI Princeton \\
  \texttt{\{pgradu,hallman,ehazan\}@princeton.edu }  \\
 }
\begin{document}

\maketitle

\begin{abstract}
We study the problem of controlling a linear dynamical system with adversarial perturbations where the only feedback available to the controller is the scalar loss, and the loss function itself is unknown. For this problem, with either a known or unknown system, we give an efficient sublinear regret algorithm.  
The main algorithmic difficulty is the dependence of the loss on past controls. To overcome this issue,  we  propose an efficient algorithm for the general setting of bandit convex optimization for loss functions with memory, which may be of independent interest. 
\end{abstract}

\section{Introduction}

The fields of Reinforcement Learning (RL), as well as its differentiable counterpart of Control, formally model the setting of learning through interaction in a reactive environment. The crucial component in RL/control that allows learning is the feedback, or reward/penalty, which the agent iteratively observes and reacts to. 

While some signal is necessary for learning, different applications have different feedback to the learning agent. In many reinforcement learning and control problems it is unrealistic to assume that the learner has feedback for actions other than their own. One example is in game-playing, such as the game of Chess, where a player can observe the adversary's move for their own choice of play, but it is unrealistic to expect knowledge of the adversary's play for any possible move. This type of feedback is commonly known in the learning literature as ``bandit feedback". 

Learning in Markov Decision Processes (MDP) is a general and difficult problem for which there are no known algorithms that have sublinear dependence on the number of states. For this reason we look at structured MDPs, and in particular the model of control in Linear Dynamical Systems (LDS), a highly structured special case that is known to admit more efficient methods as compared to general RL. 

In this paper we study learning in linear dynamical systems with bandit feedback. This generalizes the well-known Linear Quadratic Regulator to systems with only bandit feedback over any convex loss function. Further, our results apply to the non-stochastic control problem which allows for adversarial perturbations and adversarially chosen loss functions, even when the underlying linear system is unknown.  
\subsection{Our Results}

We give the first sublinear regret algorithm for controlling a linear dynamical system with bandit feedback in the non-stochastic control model. Specifically, we consider the case in which the underlying system is linear, but has potentially adversarial perturbations (that can model deviations from linearity), i.e.
\begin{equation} \label{eqn:shalom}
x_{t+1} = A x_t + B u_t + w_t ,
\end{equation}
where $x_t$ is the (observed) dynamical state, $u_t$ is a learner-chosen control and $w_t$ is an adversarial perturbation. The goal of the controller is to minimize a sum of sequentially revealed adversarial cost functions $c_t(x_t,u_t)$ over the state-control pairs that it visits. More precisely, the goal of the learner in this adversarial setting is to minimize regret compared to a class of policies $\Pi$: 

$$ \regret =  \sum_{t=1}^T c_t (x_t ,u_t) - \min_{\pi \in \Pi} \sum_{t=1}^T c_t(x^\pi_t , u^\pi_t), $$

where the cost of the benchmark is measured on the counterfactual state-action sequence $(x^\pi_t, u^\pi_t)$ that the benchmark policy in consideration visits, as opposed to the state-sequence visited by the the learner. The target class of policies we compare against in this paper are disturbance action controllers (DAC), 
whose control is a linear function of past disturbances plus a stabilizing linear operator over the current state $u_t = K x_t + \sum_{i=1}^H M_i w_{t-i}$, for some history-length parameter $H$. This comparator class is known to be more general than the state-of-the-art in linear control: linear dynamical controllers (LDC). This choice is a consequence of recent advances in convex relaxation for control  \cite{agarwal2019online,agarwal2019logarithmic,hazan2019nonstochastic,simchowitz2020improper}. 

For the setting we consider, the controller can only observe the scalar $c_t(x_t,u_t)$, and  {\bf does not have access to the gradients or any other information about the loss}.
Our main results are efficient algorithms for the non-stochastic control problem which attain the following guarantees:
\begin{theorem}[Informal Statement]
For a \textbf{known} linear dynamical system where the perturbations $w_t$ (and convex costs $c_t$) are bounded and chosen by an  adversary, there exists an efficient algorithm that with bandit feedback generates an adaptive sequence of controls $\{u_t\}$ for which 
$$ \regret  = \mathcal{\widetilde{O}}(\poly(\texttt{natural-parameters})T^{3/4}) .$$
\end{theorem}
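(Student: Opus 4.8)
The plan is to follow the by-now-standard reduction from non-stochastic control to online convex optimization with memory, and then to solve the resulting problem with a new bandit (zeroth-order) subroutine.

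\textbf{Step 1: from control to OCO with memory.} Fix a stabilizing gain $K$ so that $A+BK$ is $(\kappa,\gamma)$-strongly stable and parametrize controls by the DAC class $u_t = Kx_t + \sum_{i=1}^H M_i w_{t-i}$. Since $A,B$ (hence $K$) are known and the disturbances $w_t$ are recoverable from consecutive observed states and the played control, the state reached under a time-varying sequence $M_{t-H:t}$ of DAC parameters is, up to an error decaying like $\poly(\text{params})\,\kappa^2\gamma^H$, an explicit affine function of those parameters and of the observed $w_s$. Substituting into $c_t$ yields convex losses $f_t$ with memory length $H+1$, i.e.\ the round-$t$ loss is $f_t(M_{t-H},\dots,M_t)$, and the control regret equals the memory-OCO regret against the best fixed $M$ up to an additive $T\cdot\poly(\text{params})\,\kappa^2\gamma^H$; taking $H=\Theta(\gamma^{-1}\log T)$ makes this term $O(1)$. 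The same truncation bound applied to a fixed comparator DAC run in the true system shows that comparing the surrogate losses to $\min_M \sum_t f_t(M,\dots,M)$ is legitimate.

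\textbf{Step 2: bandit OCO with memory.} The difficulty is that the only feedback is the scalar $f_t(M_{t-H:t})$ --- no gradient, not even the functional form of $f_t$ --- and the loss couples $H+1$ consecutive plays. I would run online gradient descent on the \emph{unary} relaxation $F_t(M):=f_t(M,\dots,M)$ using a one-point spherical gradient estimate: at round $t$ sample a uniform direction $A_t$ in the (dimension-$d_M$) DAC parameter space, play the perturbed controller $\widehat M_t = M_t + \delta A_t$, observe $y_t = f_t(\widehat M_{t-H},\dots,\widehat M_t)$, and feed $g_t \propto (d_M/\delta)\,y_t A_t$ into OGD, projected onto a slightly shrunken parameter set so the $\widehat M_t$ remain feasible and stabilizing (hence all states/controls bounded, so $f_t$ is Lipschitz with a finite constant). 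Two facts make this work: (i) in expectation $g_t$ is, up to lower-order terms, the gradient of the $\delta$-smoothed $\widehat F_t$, whose optimum value is within $O(\delta)$ of that of $F_t$; and (ii) OGD moves by $O(\eta\,d_M/\delta)$ per step, so $\|M_{t-i}-M_t\|\le O(\eta H d_M/\delta)$, which bounds the ``movement cost'' $|f_t(\widehat M_{t-H:t})-F_t(\widehat M_t)|$ by $O(\eta H d_M/\delta)$ per round and simultaneously controls the error in the unbiasedness calculation caused by replacing the whole perturbed window with $\widehat M_t$.

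\textbf{Step 3: balancing the terms.} The memory-OCO regret decomposes into the OGD-with-estimated-gradients term $\lesssim \mathrm{Diam}^2/\eta + \eta T (d_M/\delta)^2$, the smoothing bias $\lesssim \delta T$, the memory movement cost $\lesssim \eta H d_M T/\delta$, and the $O(1)$ truncation term from Step 1; the extra cost incurred by actually playing the perturbed $\widehat M_t$ rather than $M_t$ is also $O(\delta T)$ and absorbed. Choosing $\eta=\Theta(T^{-3/4})$ and $\delta=\Theta(T^{-1/4})$ makes the first three terms $\tO(\poly(\text{params})\,T^{3/4})$ (the movement term is in fact lower order), which combined with Step 1 gives the claimed control regret. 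The main obstacle is Step 2: engineering the zeroth-order estimator so that a single scalar which depends on an entire window of independently perturbed controls still yields an essentially unbiased, variance-$O((d_M/\delta)^2)$ estimate of the unary gradient --- this is exactly where the slowly-moving-iterate argument is indispensable, and where one must verify that the exploration perturbations keep the realized DAC controls bounded and stabilizing so that every Lipschitz/boundedness constant used above is finite.
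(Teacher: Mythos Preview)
Your reduction in Step~1 and the balancing in Step~3 match the paper. The gap is in Step~2, exactly where you flag the main obstacle.

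The estimator you propose, $g_t \propto (d_M/\delta)\, y_t\, A_t$, does \emph{not} estimate the unary gradient $\nabla F_t(M_t)=\sum_{i=0}^{H}\nabla_i f_t(M_t,\dots,M_t)$. Conditioning on everything but $A_t$ and applying the sphere lemma yields only the \emph{last}-slot partial $\nabla_H f_t(\widehat{M}_{t-H},\dots,\widehat{M}_{t-1},M_t)$ (smoothed); the partials $\nabla_0,\dots,\nabla_{H-1}$ are simply absent, and the slowly-moving-iterate argument cannot manufacture them---it only controls the drift $\|M_{t-i}-M_t\|$, not which directional derivative you have sampled. The paper's fix is to multiply the observed scalar by the \emph{sum} of all window perturbations, $g_t=\tfrac{d}{\delta}\,f_t(y_{t-\bar{H}:t})\sum_{i=0}^{\bar{H}}u_{t-i}$, so that the $i$-th summand picks up $\nabla_{\bar{H}-i} f_t$ and the total estimates $\nabla \tilde{f}_t$.

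That corrected estimator, however, creates a dependence problem you also do not address: if OGD uses $g_t$ immediately, then $M_t$ depends on $A_{t-1},\dots,A_{t-H}$ through $g_{t-1}$, so the window $M_{t-\bar{H}:t}$ is \emph{not} independent of the perturbations $A_{t-\bar{H}:t}$ and the sphere-lemma calculation is invalid. The paper resolves this with an \emph{artificial $\bar{H}$-step delay} in the update, $M_{t+1}=\Pi[M_t-\eta_t\, g_{t-\bar{H}}]$, which makes each $M_t$ measurable with respect to $A_s$ for $s\le t-H$ only and restores the required independence; the extra drift introduced by the delay is then absorbed by the same movement bound you already invoke. Without both ingredients---summing the window perturbations in the estimator and delaying the update---your claim~(i) in Step~2 is false and the argument does not close.
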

This theorem can be further extended to unknown systems: 
\begin{theorem}[Informal Statement]
For an \textbf{unknown} linear dynamical system where the perturbations $w_t$ (and convex costs $c_t$) are bounded and chosen by an  adversary, there exists an efficient algorithm that with bandit feedback generates an adaptive sequence of controls $\{u_t\}$ for which 
$$ \regret  = \mathcal{\widetilde{O}}(\poly(\texttt{natural-parameters})T^{3/4}) .$$
\end{theorem}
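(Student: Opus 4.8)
The plan is to reduce the unknown-system case to the known-system algorithm of the previous theorem by an \emph{explore-then-commit} (certainty-equivalence) scheme, assuming---as is standard for control of unknown systems---that a stabilizing controller $K$ for the true system is available. For the first $T_0$ rounds the learner ignores the revealed costs and plays $u_t = K x_t + \eta_t$ with i.i.d.\ bounded isotropic probes $\eta_t$ (e.g.\ coordinate-wise Rademacher), which keeps the state bounded while exciting the dynamics; it uses these rounds to identify the system, and over the remaining $T-T_0$ rounds it runs the known-system bandit controller on the estimated model. The resulting algorithm is efficient, since both identification (least squares) and the known-system subroutine run in polynomial time.

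\textbf{Identification.} Since $x_t$ is observed, I would regress $x_{t+1}$ on $(x_t,u_t)$ by least squares over the exploration phase to obtain $(\widehat A,\widehat B)$. The probes are independent of the adversarial disturbances $w_t$ and isotropic, so the design matrix is well conditioned; a standard finite-sample bound (as in the references cited in the introduction) then yields, with high probability,
\[
\max\{\|\widehat A - A\|,\ \|\widehat B - B\|\}\;\le\;\epsG\;=\;\BigOhTil{\poly(\texttt{natural-parameters})/\sqrt{T_0}},
\]
the bounded adversarial disturbance contributing only an additive term of the same order, as it enters the regression as bounded noise uncorrelated with the probes. In the control phase the learner then works with the recovered disturbances $\widehat w_t \eqdef x_{t+1} - \widehat A x_t - \widehat B u_t$, which satisfy $\|\widehat w_t - w_t\| = \BigOh{\poly\cdot\epsG}$.

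\textbf{Control phase and error accounting.} I would feed $(\widehat A,\widehat B)$ and the disturbances $\widehat w_t$ to the known-system algorithm. Beyond its native $\BigOhTil{\poly\cdot(T-T_0)^{3/4}}$ guarantee, two error sources must be absorbed: (i) over the bounded disturbance-action policy class the counterfactual (``ideal'') cost of a policy is Lipschitz in the system matrices and in the disturbance history, so evaluating it at $(\widehat A,\widehat B,\widehat w_{\le t})$ rather than the truth perturbs the surrogate loss by $\BigOh{\poly\cdot\epsG}$ per round; and (ii) the true state visited by the algorithm differs from the state predicted by its estimated model, but since the closed loop under $K$ contracts geometrically this gap is again $\BigOh{\poly\cdot\epsG}$ per round. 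Summing these, and adding the $\BigOh{T_0}$ cost of exploration (each such round costs at most an $\BigOh{1}$ excess over the comparator), gives
\[
\Regret\;\le\;\BigOh{T_0}\;+\;\BigOhTil{\poly\cdot(T-T_0)^{3/4}}\;+\;\BigOhTil{\poly\cdot T/\sqrt{T_0}}.
\]
Taking $T_0=\Theta(\sqrt T)$ (any $T_0\in[\sqrt T,\,T^{3/4}]$ works) balances the three terms and yields $\Regret=\BigOhTil{\poly(\texttt{natural-parameters})\cdot T^{3/4}}$.

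\textbf{Main obstacle.} The hard part will be the interaction between the model error $\epsG$ and the bandit-convex-optimization-with-memory subroutine underlying the known-system algorithm. Because the loss carries memory $H=\Theta(\log T)$, a parameter mismatch propagates across the whole memory window, and the one-point gradient estimators operate with a smoothing radius $\delta$; I would need to check that $\epsG$ is small relative to $\delta$ and to the movement (low-switching) budget assumed by the underlying OCO-with-memory regret bound, so that the additive cost perturbation from certainty equivalence neither corrupts the estimated gradients nor violates the stability of iterates on which the memory-to-memoryless reduction relies. Making this quantitative---effectively re-running the analysis of the known-system theorem with an additively perturbed, still-adversarial cost sequence---is the technical crux; the surrounding perturbation bounds for strongly stable linear systems are routine.
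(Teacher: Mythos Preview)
Your explore-then-commit architecture and three-term regret split (exploration cost $+$ known-system regret on the estimated model $+$ model-mismatch cost) match the paper exactly. Two points deserve comment.

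First, a technical caveat on identification: the paper does \emph{not} use least squares on $(x_t,u_t)$; it uses a method-of-moments estimator that correlates future states only with the independent probes $\xi_t$. In your regression the regressor $x_t$ depends on $w_{0:t-1}$, and an oblivious adversary can choose a correlated $w$-sequence so that $\tfrac{1}{T_0}\sum_t x_t w_t^\top$ does not vanish, biasing the estimate of $A$. The paper (and the non-stochastic identification references it cites) avoids this by instrumenting only with the exogenous noise; you should do the same, after which the $\epsG=\widetilde{\mathcal O}(T_0^{-1/2})$ rate you quote is correct.

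Second, the difficulty you isolate as the ``main obstacle''---that $\epsG$ might interact with the BCO-with-memory gradient estimator, its smoothing radius $\delta$, and the low-movement budget---never arises in the paper's argument. The key observation (a simulation-lemma step) is that running the known-system controller with $(\widehat A,\widehat B)$ on the true system produces the \emph{identical} state-action trajectory as running it on a fictitious system with dynamics $(\widehat A,\widehat B)$ and disturbances $\widehat w_t \doteq x_{t+1}-\widehat A x_t-\widehat B u_t$. Since the known-system theorem holds for \emph{any} bounded adversarial disturbance sequence, it applies verbatim to $\widehat w$, giving the $\widetilde{\mathcal O}(T^{3/4})$ bound for the middle term with no re-analysis of the bandit subroutine whatsoever. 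All model error is then confined to the third term, $J(M^\star\mid\widehat A,\widehat B,\widehat w)-J(M^\star\mid A,B,w)=\mathcal O(T\cdot\epsG)$, a purely deterministic comparison of one fixed policy on two nearby systems---much simpler than the perturbed-loss analysis you anticipate. (The paper takes $T_0=\Theta(T^{2/3})$; your $T_0=\Theta(\sqrt T)$ also balances the terms.)
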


\paragraph{Techniques.}
To derive these results, we combine the convex relaxation technique of \cite{agarwal2019online} with the non-stochastic system identification method for environments with adversarial perturbations from \cite{hazan2019nonstochastic,simchowitz2019learning}. However, the former result relies on gradient based optimization methods, and it is non-trivial to apply gradient estimation techniques in this black-box zero-order information setting. The main difficulty stems from the fact that the gradient-based methods from non-stochastic control apply to {\it functions with memory}, and depend on the system state going back many iterations. The natural way of creating unbiased gradient estimates, such as in \cite{flaxman2004online}, have no way of accounting for functions with memory.

To solve this difficulty, we introduce an efficient algorithm for the setting of {\it bandit convex optimization with memory}. This method combines the gradient-based methods of \cite{anava2013online} with the unbiased gradient estimation techniques of \cite{flaxman2004online}. The naive way of combining these techniques introduces time dependencies between the random gradient estimators, as a direct consequence of the memory in the loss functions. To resolve this issue, we introduce an artificial intentional delay to the gradient updates and show that this delay has only a limited effect on the overall regret. 

\paragraph{Paper outline.}
After describing related work, we cover preliminaries and define notation in section \ref{sec:preliminaries}. In section \ref{sec:alg} we describe the algorithm for BCO with memory and the main theorem regarding its performance. We then introduce the bandit control setting in section \ref{sec:control}, and provide algorithms for known and unknown systems in sections \ref{sec:known} and \ref{sec:unknown} respectively, together with relevant theoretical results. We then present experimental results in section \ref{sec:experiments}.

\subsection{Related Work\label{ssec:prior_work}}

\paragraph{Reinforcement learning with bandit feedback.}  Online learning techniques for reinforcement learning were studied in \cite{even2009online} and generalized in \cite{yu2009markov}. Online learning for RL with bandit feedback was studied in \cite{neu2010online}. For general RL it is impossible to obtain regret bounds that are sublinear in the number of states, even with full feedback. This is the reason we focus on much more structured problem of control, where our regret bounds depend on the dimension despite an infinite number of states, even in the bandit setting.

\paragraph{Robust Control:} The classical control literature deals with  adversarial perturbations in the dynamics in a framework known as  $H_\infty$ control, see e.g. \cite{z2,z1}. In this setting, the controller solves for the best linear controller assuming worst case noise to come. This is different from the setting we study which minimizes regret on a per-instance basis. 

\paragraph{Learning to control stochastic LDS:} 
There has been a resurgence of literature on control of linear dynamical systems in the recent machine learning venues. The case of known systems was extensively studied in the control literature, see the survey \cite{z2}. Sample complexity and regret bounds for control (under Gaussian noise) were obtained in \cite{abbasi2011regret,dean2018regret,a2,mania2019certainty,cohen2019learning,anima1,anima2,anima3}. 
The works of \cite{abbasi2014tracking}, \cite{cohen2018online} and \cite{agarwal2019logarithmic} allow for control in LDS with adversarial loss functions. Provable control in the Gaussian noise setting via the policy gradient method was studied in \cite{fazel2018global}. These works operate in the absence of perturbations or assume that they are i.i.d. Gaussian, as opposed to adversarial which is what we consider. Other relevant work from the machine learning literature includes spectral filtering techniques for learning and open-loop control of partially observable systems \cite{hazan2017learning, arora2018towards, hazan2018spectral}.

\paragraph{Non-stochastic control:}
Regret minimization for control of dynamical systems with adversarial perturbations was initiated in the recent work of \cite{agarwal2019online}, who use online learning techniques and convex relaxation to obtain provable bounds for controlling LDS with adversarial perturbations. These techniques were extended in \cite{agarwal2019logarithmic} to obtain logarithmic regret under stochastic noise, in \cite{hazan2019nonstochastic} for the control of unknown systems, and in \cite{simchowitz2020improper} for control of systems with partially observed states.

\paragraph{System identification.}
For the stochastic setting, several works \cite{faradonbeh2018finite, simchowitz2018learning, sarkar2019near} propose to use the least-squares procedure for parameter identification. In the adversarial setting, least-squares can lead to inconsistent estimates. For the partially observed stochastic setting, \cite{oymak2019non, sarkar2019finite, simchowitz2018learning} give results guaranteeing parameter recovery using Gaussian inputs. Provable system identification in the adversarial setting was obtained in \cite{simchowitz2019learning,hazan2019nonstochastic}.

\section{Preliminaries} \label{sec:preliminaries}

\paragraph{Online convex optimization with memory.} \label{oco_mem}

The setting of online convex optimization (OCO) efficiently models iterative decision making. A player iteratively choses an action from a convex decision set $x_t \in \K \subseteq \reals^d$, and suffers a loss according to an adversarially chosen loss function $f_t(x_t)$. In the bandit setting of OCO, called Bandit Convex Optimization (BCO), the only information available to the learner after each iteration is the loss value itself, a scalar, and no other information about the loss function $f_t$. 

A variant which is relevant to our setting of control is BCO with memory. This is used to capture time dependence of the reactive environment. Here, the adversaries pick loss functions $f_t$ with bounded memory $H$ of our previous predictions, and as before we assume that we may observe the value but have no access to the gradient of our losses $f_t$. The goal is to minimize regret, defined as:
\begin{align*}
\mathop{\text{Regret}} = \mathop{\mathbb{E}}_{\mathcal{R}_\mathcal{A}}\left[\sum_{t=H}^T f_t(x_{t-\bar{H}:t})\right] - \min_{x^{\star} \in \mathcal{K}} \sum_{t=H}^T f_t(x^{\star}, \ldots, x^{\star}),
\end{align*}
where we denote $\bar{H}=H-1$ and $x_{t-\bar{H}:t} = (x_{t-\bar{H}}, \ldots, x_t)$ for clarity, $x_1, \ldots, x_T$ are the predictions of algorithm $\mathcal{A}$, and $\mathcal{R}_\mathcal{A}$ represents the randomness due to the algorithm $\mathcal{A}$.

For the settings of theorem \ref{thm:bco_main}, we assume that the loss functions $f_t$ are convex with respect to $x_{t-\bar{H}:t}$, $G$-Lipschitz, $\beta$-smooth, and bounded. We can assume without loss of generality that the loss functions are bounded by $1$ in order to simplify computations. In the case where the functions are bounded by some $|f_t(x_{t-\bar{H}:t})| \leq M$, one can obtain the same results with an additional factor $M$ in the regret bounds by dividing the gradient estimator by $M$.

\section{An Algorithm for BCO with Memory} \label{sec:alg}

This section describes the main building block for our control methods: an algorithm for BCO with memory. Our algorithm takes a non-increasing sequence of learning rates $\{ \eta_t\}_{t=1}^T$ and a \textit{perturbation constant} $\delta$, a hyperparameter associated with the gradient estimator. Note that the algorithm projects $x_t$ onto the Minkowski subset $\mathcal{K}_{\delta} = \{ x \in \mathcal{K} \: : \: \frac{1}{1 - \delta}x \in \mathcal{K} \}$ to ensure that $y_t = x_t + \delta u_t \in \mathcal{K}$ holds.

\begin{algorithm}
\caption{BCO with Memory}
\label{alg1}
\begin{algorithmic}[1]
    \STATE \textbf{Input:} $\mathcal{K}$, $T$, $H$, $\{ \eta_t \}$ and $\delta$
    \STATE Initialize $x_{1} = \cdots = x_{H} \in \mathcal{K}_{\delta}$ arbitrarily
    \STATE Sample $u_{1}, \ldots, u_{H} \in_{\mathbf{R}} \mathbb{S}_{1}^{d}$
    \STATE Set $y_i = x_i + \delta u_i$ for $i = 1, \ldots, H$
    \STATE Set $g_i = 0$ for $i = 1, \ldots, \bar{H}$
    \STATE Predict $y_i$ for $i = 1, \ldots, \bar{H}$
    \FOR {$t = H, \ldots, T$}
        \STATE predict $y_t$
        \STATE suffer loss $f_t(y_{t-\bar{H}:t})$ 
        \STATE store $g_t = \frac{d}{\delta}f_t(y_{t-\bar{H}:t}) \sum\limits_{i=0}^{\bar{H}} u_{t-i}$ \label{gt_def}
        \STATE set $x_{t+1} = \mathop{\Pi}\limits_{\mathcal{K}_{\delta}} \left[x_{t} - \eta_{t} \; g_{t-\bar{H}} \right]$
        \STATE sample $u_{t+1} \in_{\text{R}} \mathbb{S}_{1}^d$ \label{sample1}
        \STATE set $y_{t+1} = x_{t+1} + \delta u_{t+1}$
    \ENDFOR
    \STATE return
\end{algorithmic}
\end{algorithm}

The main performance guarantee for this algorithm is given in the following theorem:

\begin{theorem}\label{thm:bco_main}
Setting $\eta_t = \Theta(t^{-3/4} H^{-3/2} d^{-1} D^{2/3} G^{-2/3})$ and $\delta = \Theta(T^{-1/4} D^{1/3} G^{-1/3})$, Algorithm \ref{alg1} produces a sequence $\{y_t\}_{t=0}^{T}$ that satisfies:
\begin{align*}
\mathop{\text{Regret}} \leq \mathcal{O}\left(T^{3/4} H^{3/2} d D^{4/3} G^{2/3} \right).
\end{align*} 

In particular, $\text{\emph{Regret}} \leq \mathcal{O}\left(T^{3/4}\right)$.
\end{theorem}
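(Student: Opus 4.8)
The plan is to reduce Algorithm~\ref{alg1} to projected online gradient descent (OGD) run on a sequence of \emph{smoothed, diagonalized} surrogate losses, and then to balance three error terms: the perturbation/smoothing gap, the ``memory movement'' gap from the slow drift of the iterates, and the usual OGD regret with a one-point zero-order gradient estimate. The learning rate and smoothing radius quoted in the theorem are exactly the values that equalize these contributions; as the bookkeeping below shows, the dominant term is the $D^2/\eta_T$ term from OGD, which produces the stated $T^{3/4}H^{3/2}dD^{4/3}G^{2/3}$ rate.

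For each $t$ let $\widehat f_t$ be the $\delta$-smoothing of $f_t$ obtained by averaging over independent perturbations of size $O(\delta)$ in each of the $H$ block-arguments, and set $\widehat F_t(x):=\widehat f_t(x,\ldots,x)$; then $\widehat f_t$ and $\widehat F_t$ are convex, $G$-Lipschitz, bounded, and $|\widehat f_t(z)-f_t(z)|=O(\delta G\sqrt H)$ pointwise. The heart of the argument is to show that the stored vector $g_t$ from line~\ref{gt_def} is, in conditional expectation, essentially $\nabla\widehat F_t(x_t)$. This is where the $\bar H$-step delay is used: since the update takes a step along $g_{t-\bar H}$ rather than $g_t$, an easy induction shows $x_s$ depends only on $u_1,\dots,u_{s-1-\bar H}$, so with $\mathcal G_t:=\sigma(u_1,\dots,u_{t-1-\bar H})$ the iterates $x_{t-\bar H},\dots,x_t$ are $\mathcal G_t$-measurable while $u_{t-\bar H},\dots,u_t$ are i.i.d.\ uniform on $\mathbb S_1^d$ and independent of $\mathcal G_t$. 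Conditioning on $\mathcal G_t$ and applying the spherical one-point/Stokes identity blockwise to $g_t=\tfrac d\delta f_t(y_{t-\bar H:t})\sum_{i=0}^{\bar H}u_{t-i}$ gives $\mathbb E[g_t\mid\mathcal G_t]=\nabla\widehat F_t(x_t)$ up to a correction controlled by $\max_i\|x_{t-i}-x_t\|$; that drift is bounded using $\|x_{t-i}-x_t\|\le\bar H\,\eta_{t-\bar H}\max\|g\|$ together with the deterministic estimate $\|g_t\|\le\tfrac d\delta|f_t|(\bar H+1)\le\tfrac{dH}\delta$. Without the delay, $g_t$ is correlated with its own evaluation point through the $\bar H$ shared perturbations and conditional unbiasedness breaks; this is the single most delicate step.

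Everything after that is standard. The OGD potential argument applied to $x_{t+1}=\Pi_{\mathcal K_\delta}[x_t-\eta_t g_{t-\bar H}]$, using $\{\eta_t\}$ non-increasing and $\mathrm{diam}(\mathcal K)=D$, gives $\sum_t\langle g_{t-\bar H},x_t-x^\star\rangle\le\tfrac{D^2}{2\eta_T}+\tfrac12\sum_t\eta_t\|g_{t-\bar H}\|^2$ for any fixed $x^\star\in\mathcal K_\delta$. Re-indexing by $s=t-\bar H$, replacing $x_{s+\bar H}$ by $x_s$ (cost $O(\tfrac{d^2H^3}{\delta^2}\sum_s\eta_s)$), taking expectations, invoking the conditional-unbiasedness identity and convexity of $\widehat F_s$, then undoing the smoothing and the $y_s=x_s+\delta u_s$ perturbation (each $O(TG\sqrt H\,\delta)$), rescaling the comparator from $\mathcal K$ into $\mathcal K_\delta$ (cost $O(TG\sqrt H\,\delta D)$), and adding the memory-movement gap between $f_s(x_{s-\bar H:s})$ and $\widehat F_s(x_s)$ (cost $O(\tfrac{GdH^{5/2}}{\delta}\sum_s\eta_s)$) yields
\[
\Regret\;\le\;\mathcal O\!\left(TG\sqrt H\,\delta(1+D)+\frac{D^2}{\eta_T}+\frac{d^2H^3}{\delta^2}\sum_t\eta_t+\frac{GdH^{5/2}}{\delta}\sum_t\eta_t\right).
\]

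It remains to plug in $\eta_t=\Theta(t^{-3/4}H^{-3/2}d^{-1}D^{2/3}G^{-2/3})$ (so $\eta_T$ is of the same order and $\sum_{t\le T}\eta_t=\Theta(T^{1/4}H^{-3/2}d^{-1}D^{2/3}G^{-2/3})$) and $\delta=\Theta(T^{-1/4}D^{1/3}G^{-1/3})$. A term-by-term check gives $D^2/\eta_T=\Theta(T^{3/4}H^{3/2}dD^{4/3}G^{2/3})$ (the dominant term), while the other three are $O(T^{3/4}\sqrt H\,G^{2/3}D^{4/3})$, $O(T^{3/4}dH^{3/2})$, and $O(T^{1/2}HG^{2/3}D^{1/3})$ respectively (and any $\beta$-dependent remainder from the unbiasedness step is likewise lower order), which proves $\Regret=\mathcal O(T^{3/4}H^{3/2}dD^{4/3}G^{2/3})$; treating $H,d,D,G$ as constants gives $\Regret=\mathcal O(T^{3/4})$. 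The main obstacle, to reiterate, is the second paragraph: pinning down the right smoothed surrogate and proving that the delayed estimator is conditionally unbiased for its diagonal gradient — everything else is the standard OGD-with-memory machinery.
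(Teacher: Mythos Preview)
Your plan is essentially the paper's own: use the $\bar H$-step delay to make $u_{t-\bar H:t}$ independent of $x_{t-\bar H:t}$ (the paper's Claim~\ref{obs:independence}), feed the resulting gradient estimate into the OGD potential argument, and add the smoothing, drift, and Minkowski-comparator corrections (the paper's Lemmas~\ref{lem:dist_to_policy} and~\ref{lem:regret_f_hat_tilde}). Your final decomposition and parameter balancing match.

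One technical point deserves care. Your claim that the blockwise Stokes identity gives $\mathbb E[g_t\mid\mathcal G_t]=\nabla\widehat F_t(x_t)$ exactly (modulo drift) is not correct for any single smoothed surrogate $\widehat f_t$: the $j$-th summand of $g_t$ produces, under Stokes, the $j$-th partial of a function that is \emph{ball}-smoothed in block $j$ but \emph{sphere}-smoothed in the remaining blocks, a different object for each $j$, so their sum is not the gradient of one scalar function. The paper handles this not via a smoothed surrogate but by linearizing $f_t$ and bounding the bias to the \emph{true} gradient directly (its Theorem~\ref{lem:prod_basic_estimator} and Corollary~\ref{lem:unbiased_estimator}), incurring a $2d\delta H^{3/2}G$ bias. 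That bias, multiplied by $D$ and summed over $T$, yields $\Theta(T^{3/4}H^{3/2}dD^{4/3}G^{2/3})$ --- one of the two dominant terms in the paper's analysis, the other being exactly your $D^2/\eta_T$. So the missing correction is not a ``lower-order $\beta$-remainder'' as you suggest, but a $G$- and $d$-dependent term of the \emph{same} order as your leading term; fortunately the final bound is unchanged because $D^2/\eta_T$ already attains it.
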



The proof consists of four parts: In \ref{sec:sphere_estimator_basic} we cover notation for functions and sets relevant to our analysis. In \ref{sec:peturbation_properties}, we cover some properties of the exploration noises $u_t$. In \ref{sec:gradient_estimator}, we prove a few important lemmas about the gradient estimator $g_t$. Finally, in \ref{sec:proofBCO} we combine our lemmas from above with a reduction of the main theorem to obtain our main result.

\subsection{Notation and Basic Results}\label{sec:sphere_estimator_basic}

Denote the ball and sphere of dimension $d$ with radius $r$ respectively as 
\begin{align*}
\mathbb{B}_r^d \doteq \{ x \in \mathbb{R}^k \: : \: \norm{x} \leq r \} \ ,  \ 
\mathbb{S}_r^d \doteq \{ x \in \mathbb{R}^k \: : \: \norm{x} = r \}.
\end{align*}
Consider a convex set $\mathcal{K} \subset \mathbb{R}^d$ bounded with diameter $D$ and containing the unit ball $\mathbb{B}$.\footnote{We suppress the radius and dimensionality indices for $\mathbb{S}_1^d$ and $\mathbb{B}_1^d$ for the sake of presentation.} For $0 < \delta < 1$, consider the Minkowski subset: 
$$\mathcal{K}_{\delta} \doteq \{ x \in \mathcal{K} \: : \: \frac{1}{1 - \delta}x \in \mathcal{K} \},$$ 
and observe that $\mathcal{K}_{\delta}$ is convex and $\forall u \in B_1^d, x \in \mathcal{K}_{\delta}$ we have $x + \delta u \in \mathcal{K}$ because $\mathcal{K}$ contains the unit ball.

Next, we define the $\delta$-smoothed version of a function $f: \mathbb{R}^d \rightarrow \mathbb{R}$ to be:
\begin{align}\hat{f}_\delta(x) \doteq \mathop{\mathbb{E}} \limits_{v \sim \mathbb{B}} \left[ f(x + \delta v) \right] \label{eq:delta_smoothed_f}
\end{align}

The following standard facts about the gradient of a smoothed function can be found in the literature, e.g. \cite{hazan201210} chapter 2:

\begin{fact} \label{lem:basic_estimator} Let $f$ be $G$-Lipschitz, and $\hat{f}_\delta$ as defined in eq. \ref{eq:delta_smoothed_f}. We then have:
\begin{enumerate}
    \item $\mathop{\mathbb{E}}\limits_{u \sim \mathbb{S}} \left[ f(x + \delta u) u \right] = \dfrac{\delta}{d} \nabla \hat{f}_\delta (x)$
    \item $|\hat{f}_\delta (x) - f(x)| \leq \delta G, \; \forall x \in \mathcal{K}$
\end{enumerate}
\end{fact}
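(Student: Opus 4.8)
The plan is to prove the two parts separately: part~2 is an immediate consequence of the Lipschitz hypothesis, while part~1 is the ``Stokes/divergence theorem'' identity underlying one-point gradient estimators (as in \cite{flaxman2004online}) and carries the only substantive computation. For part~2, I would start from the definition $\hat f_\delta(x) = \mathop{\mathbb{E}}\limits_{v\sim\mathbb{B}}[f(x+\delta v)]$, pull $f(x)$ inside the expectation, and apply Jensen's inequality together with $G$-Lipschitzness:
\[
|\hat f_\delta(x) - f(x)| = \left|\mathop{\mathbb{E}}\limits_{v\sim\mathbb{B}}\bigl[f(x+\delta v) - f(x)\bigr]\right| \le \mathop{\mathbb{E}}\limits_{v\sim\mathbb{B}}\bigl[G\,\norm{\delta v}\bigr] \le \delta G,
\]
where the last step uses $\norm{v}\le 1$ on the unit ball. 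This requires no differentiability of $f$.

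For part~1, the key idea is to realize $\hat f_\delta$ as a normalized integral over a translating ball and differentiate it via the divergence theorem. Writing $V_d$ for the volume of the unit ball $\mathbb{B}$ and $S_{d-1}$ for the surface area of the unit sphere $\mathbb{S}$, the change of variables $y = x + \delta v$ gives $\hat f_\delta(x) = \tfrac{1}{\delta^d V_d}\int_{B(x,\delta)} f(y)\,dy$, where $B(x,\delta)$ is the ball of radius $\delta$ centered at $x$. Recentering the domain and differentiating, the divergence theorem turns the gradient of the volume integral into a boundary flux, $\nabla_x\int_{B(x,\delta)} f(y)\,dy = \int_{\partial B(x,\delta)} f(y)\,n(y)\,dA(y)$, with outward unit normal $n(y) = (y-x)/\delta$. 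Parametrizing the boundary by $y = x+\delta u$ with $u\in\mathbb{S}$ (so $n=u$ and $dA(y) = \delta^{d-1}\,dA(u)$) yields
\[
\nabla \hat f_\delta(x) = \frac{\delta^{d-1}}{\delta^d V_d}\int_{\mathbb{S}} f(x+\delta u)\,u\,dA(u) = \frac{S_{d-1}}{\delta V_d}\,\mathop{\mathbb{E}}\limits_{u\sim\mathbb{S}}\bigl[f(x+\delta u)\,u\bigr].
\]
Invoking the elementary identity $S_{d-1}/V_d = d$ for the unit ball (which follows from $\Gamma(\tfrac d2 + 1) = \tfrac d2\,\Gamma(\tfrac d2)$) and rearranging gives exactly $\mathop{\mathbb{E}}\limits_{u\sim\mathbb{S}}[f(x+\delta u)\,u] = \tfrac{\delta}{d}\nabla\hat f_\delta(x)$.

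The main technical point to be careful about is the differentiation of the integral when $f$ is merely Lipschitz, hence possibly non-differentiable. I expect this to be the only real obstacle, and I would dispatch it in one of two standard ways: either observe that $\hat f_\delta$ is a convolution of $f$ with the (scaled) indicator of the ball and is therefore differentiable independently of the regularity of $f$, with the boundary-flux expression for its gradient valid for continuous $f$; or, to avoid regularity subtleties entirely, first prove the identity for smooth $f$ via the divergence theorem as above, then approximate a general $G$-Lipschitz $f$ uniformly by smooth functions and pass to the limit, the uniform convergence being controlled by the Lipschitz bound. Both parts are classical (cf. \cite{hazan201210}), so beyond the divergence-theorem computation and the ratio $S_{d-1}/V_d = d$ the argument is routine.
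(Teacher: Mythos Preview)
Your proof is correct and is precisely the standard argument; the paper does not actually give a proof of this Fact but simply cites the literature (``can be found in the literature, e.g. \cite{hazan201210} chapter 2''), and your divergence-theorem derivation for part~1 together with the direct Lipschitz bound for part~2 is exactly the proof one finds there and in \cite{flaxman2004online}.
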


We additionally introduce the function $\tilde{f}_t : \mathcal{K} \rightarrow \mathbb{R}$ for loss functions with memory defined as:
\begin{align*} 
\tilde{f}_t(x) & \doteq f_t(\overbrace{x, \ldots, x}^{\times H})
\end{align*}

Throughout our analysis, it will be helpful to denote the collection of vectors $(v_{t-n}, \ldots, v_t)$ by $v_{t-n:t}$. Using this notation, addition and scalar multiplication will also be compactly expressed as $v_{t-n:t} + \alpha \, w_{t-n:t} \doteq (v_{t-n} + \alpha w_{t-n},\ldots, v_t + \alpha w_t)$. Because we are interested in loss functions with $H$ inputs, we will mostly be interested in collections of the form $v_{t-H+1:t}$. To avoid the excessive use of $H \pm 1$ throughout the rest of the paper, we will introduce the  notation $\bar{H} \doteq H - 1$. 

We now introduce the index-wise gradients $\nabla_i f_t$ to be the derivative of $f_t$ with respect to the $i$'th input vector, namely:
\begingroup
\setlength{\abovedisplayskip}{10pt}
\setlength{\belowdisplayskip}{10pt}
\begin{align*}
    \nabla_i f_t(x_{t-\bar{H}:t}) & = \frac{\partial f_t(x_{t-\bar{H}}, \ldots, x_t)}{\partial x_{t-\bar{H}+i}}
\end{align*}
\endgroup
such that $\nabla f_t = (\nabla_{0} f_t, \ldots, \nabla_{\bar{H}} f_t)$. 
We make the following observation about the gradients $\nabla_i f_t$.

\begin{lemma}\label{lem:chain_rule}
The gradient $\nabla \tilde{f}_t(x) = \frac{\partial \tilde{f}_t (x)}{\partial x}$ is related to the gradient of $f_t$ by
\begingroup
\setlength{\abovedisplayskip}{10pt}
\setlength{\belowdisplayskip}{10pt}
\begin{align*}
    \nabla \tilde{f}_t (x) = \sum_{i=0}^{\bar{H}} \nabla_i f_t (x_{t-\bar{H}}, \ldots, x_t) \biggr \vert_{x_{t-\bar{H}} = \ldots = x_t = x}
\end{align*}
\endgroup
which we denote as $\nabla \tilde{f}_t(x) = \sum\limits_{i=0}^{\bar{H}} \nabla_i \tilde{f}_t(x)$.
\end{lemma}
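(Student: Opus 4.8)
The plan is to recognize $\tilde f_t$ as the composition of $f_t$ with the ``diagonal embedding'' $\phi : \reals^d \to (\reals^d)^H$ given by $\phi(x) = (x,\ldots,x)$, and then apply the multivariate chain rule. Concretely, $\tilde f_t = f_t \circ \phi$, and since $f_t$ is differentiable (indeed $\beta$-smooth by assumption) and $\phi$ is linear, the composition is differentiable with $\nabla \tilde f_t(x) = (D\phi(x))\tr \, \nabla f_t(\phi(x))$.

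First I would compute the Jacobian $D\phi(x)$. Writing $\phi$ in block form, its $i$-th block (corresponding to the $i$-th copy of $x$, for $i = 0, \ldots, \bar H$) is the identity map on $\reals^d$, so $D\phi(x)$ is the $Hd \times d$ matrix obtained by vertically stacking $\bar H + 1$ copies of $I_d$, and its transpose is the $d \times Hd$ matrix $[\,I_d \ \cdots\ I_d\,]$ formed by horizontally concatenating $\bar H+1$ copies of $I_d$. Pairing this with the block decomposition $\nabla f_t(\phi(x)) = (\nabla_0 f_t(\phi(x)), \ldots, \nabla_{\bar H} f_t(\phi(x)))$ of the gradient into its index-wise pieces (exactly as defined just above the lemma) yields
\[
\nabla \tilde f_t(x) \;=\; [\,I_d \ \cdots\ I_d\,] \begin{pmatrix}\nabla_0 f_t(\phi(x))\\ \vdots \\ \nabla_{\bar H} f_t(\phi(x))\end{pmatrix} \;=\; \sum_{i=0}^{\bar H} \nabla_i f_t(\phi(x)),
\]
which is the claimed identity, since $\phi(x) = (x,\ldots,x)$ is precisely ``evaluate at $x_{t-\bar H} = \cdots = x_t = x$.'' Alternatively one can avoid matrices and argue coordinatewise: for a fixed coordinate, the scalar chain rule applied to the $\bar H + 1$ occurrences of that variable in $f_t(x,\ldots,x)$ gives the sum of the $\bar H + 1$ partials of $f_t$ with respect to that coordinate of each of its arguments, evaluated on the diagonal.

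There is essentially no real obstacle here; the statement is a direct application of the chain rule. The only point requiring a little care is bookkeeping around the ``evaluate on the diagonal'' convention built into the definition of $\nabla_i f_t$: one must differentiate $f_t$ treating its $H$ arguments as formally independent, form the sum $\sum_i \nabla_i f_t$, and only then restrict to $x_{t-\bar H} = \cdots = x_t = x$. This ordering is exactly what the chain-rule derivation above produces, so the proof is immediate once the composition structure $\tilde f_t = f_t\circ\phi$ is made explicit.
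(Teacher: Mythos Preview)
Your proof is correct and is essentially the same as the paper's: both express $\tilde f_t$ as $f_t$ composed with the diagonal map $x \mapsto (x,\ldots,x)$, apply the chain rule, observe that the Jacobian of this map is $H$ stacked copies of $I_d$, and conclude that $\nabla\tilde f_t(x)$ is the sum of the index-wise partials evaluated on the diagonal. The only difference is presentational (you name the diagonal embedding $\phi$ explicitly and write the chain rule as $(D\phi)^\top \nabla f_t$, whereas the paper writes out the block matrices directly).
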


\begin{proof} Applying chain rule over $f_t(x_{t-\bar{H}:t})$ with $x_{t-i}(x) = x, \: i = 0, \ldots, \bar{H}$ yields the product of the $dH$ dimensional gradient $\frac{\partial f_t}{\partial x_{t-\bar{H}:t}}$ and the $dH \times d$ dimensional Jacobian $\frac{\partial x_{t-\bar{H}:t}}{\partial x}$, which is equal to $H$ copies of the $d \times d$ identity matrix. Specifically,
\begin{align*}
    \nabla \tilde{f}_t (x) & = \frac{\partial \tilde{f}_t (x)}{\partial x} 
     = \frac{\partial f_t (x_{t-\bar{H}:t})}{\partial x_{t-\bar{H}:t}}^{\top} \cdot \frac{\partial x_{t-\bar{H}:t}}{\partial x} \\
    & = \begin{bmatrix} \frac{\partial f_t(x_{t-\bar{H}:t})}{\partial x_{t-\bar{H}}} \\ \vdots \\ \frac{\partial f_t(x_{t-\bar{H}:t})}{\partial x_{t}} \end{bmatrix}^{\top} \cdot \begin{bmatrix} I_d \\ \vdots \\ I_d \end{bmatrix} \\
    & = \sum_{i=0}^{\bar{H}} \frac{\partial f_t(x_{t-\bar{H}:t})}{\partial x_{t-i}} 
     = \sum_{i=0}^{\bar{H}} \nabla_i f_t (x_{t-\bar{H}}, \ldots, x_t) \biggr \vert_{x_{t-\bar{H}} = \ldots = x_t = x}
\end{align*}
where the derivatives $\frac{\partial f_t(x_{t-\bar{H}:t})}{\partial x_{t-\bar{H}:t}}$ are evaluated at $x_{t-\bar{H}} = \ldots = x_t = x$ implicitly on lines 2 through 4 for clarity.
\end{proof}

Finally, we denote the optimizer over $\mathcal{K}$ with respect to all observed loss functions as $x^\star = \arg \min_{x \in \mathcal{K}} \sum_{t=H}^T f_t(x, \ldots, x)$, and its projection onto the corresponding Minkowski subset as $x^{\star}_{\delta} = \Pi_{\mathcal{K}_{\delta}}(x^{\star})$.

\subsection{Properties of the random exploration noise} \label{sec:peturbation_properties}

\begin{claim} \label{obs:independence}\normalfont{\textbf{(Independence)}}
$x_t$ is independent of $u_{t-\bar{H}}, \ldots, u_t$.
\end{claim}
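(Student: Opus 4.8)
The plan is to prove the following stronger, deterministic statement by strong induction on $t$: the iterate $x_t$ is a measurable function of $u_1,\dots,u_{t-H}$ alone (a constant when $t\le H$, interpreting the list as empty). The claim then follows at once: the exploration noises $u_1,\dots,u_T$ are drawn mutually independently in Algorithm~\ref{alg1} and the loss functions $f_t$ are fixed in advance, so any function of $u_1,\dots,u_{t-H}$ is independent of the block $(u_{t-\bar{H}},\dots,u_t)=(u_{t-H+1},\dots,u_t)$, whose indices are all strictly larger than $t-H$.

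For the base cases I would note that $x_1=\cdots=x_H$ is the fixed initialization, hence constant; and since $g_i=0$ for $i=1,\dots,\bar{H}$, whenever $t-\bar{H}\le\bar{H}$ the update $x_{t+1}=\Pi_{\mathcal{K}_\delta}[x_t-\eta_t g_{t-\bar{H}}]$ collapses to $x_{t+1}=x_t$, so $x_{H+1}=\cdots=x_{2H-1}=x_H$ are constant as well, consistent with the claimed (empty) noise dependence.

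For the inductive step, assuming the statement for all indices up to $t$, I consider $x_{t+1}=\Pi_{\mathcal{K}_\delta}[x_t-\eta_t g_{t-\bar{H}}]$. The term $x_t$ depends only on $u_1,\dots,u_{t-H}$ by hypothesis. If $t-\bar{H}\le\bar{H}$ then $g_{t-\bar{H}}=0$ and there is nothing further to check; otherwise $t-\bar{H}\ge H$ (the two cases being exhaustive since $H=\bar{H}+1$), and line~\ref{gt_def} gives $g_{t-\bar{H}}=\tfrac{d}{\delta}f_{t-\bar{H}}\!\left(y_{t-2\bar{H}:\,t-\bar{H}}\right)\sum_{i=0}^{\bar{H}}u_{t-\bar{H}-i}$. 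The explicit noises in this sum have indices in $\{t-2\bar{H},\dots,t-\bar{H}\}$, each at most $t-\bar{H}=(t+1)-H$; and for every $j\in\{t-2\bar{H},\dots,t-\bar{H}\}$ we have $y_j=x_j+\delta u_j$ with $1\le j\le t$, so by the inductive hypothesis $x_j$ depends only on $u_1,\dots,u_{j-H}$, whence $y_j$, and hence $f_{t-\bar{H}}(y_{t-2\bar{H}:\,t-\bar{H}})$, depends only on noises of index at most $t-\bar{H}=(t+1)-H$. Combining, $x_{t+1}$ is a deterministic function of $u_1,\dots,u_{(t+1)-H}$, which closes the induction.

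I expect the only delicate point to be the index bookkeeping: one must verify that every noise that can propagate into $x_{t+1}$ through $g_{t-\bar{H}}$ has index at most $(t+1)-H$, and that the warm-up phase in which the stored gradients vanish is cleanly absorbed into the base case. There is no analytic content; the argument is purely a matter of tracking indices through the recursion and then invoking independence of the $u_i$.
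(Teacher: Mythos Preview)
Your proof is correct and follows essentially the same inductive approach as the paper: both track which noises $u_i$ can influence $x_{t+1}$ through the delayed update $x_{t+1}=\Pi_{\mathcal{K}_\delta}[x_t-\eta_t g_{t-\bar{H}}]$. Your strengthening of the inductive hypothesis to a measurability statement (that $x_t$ is a function of $u_1,\dots,u_{t-H}$ only) and use of strong induction make the bookkeeping slightly cleaner than the paper's direct independence argument, but the core idea is identical.
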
 
\begin{proof}
Base case: for $t \leq H$ all $x_t$'s are set arbitrarily to be equal and so the conclusion is immediate. For $t \geq H$: Assume this holds for $x_t$ and observe that $x_{t+1} = x_t + \eta_t g_{t-\bar{H}}$ is uniquely defined by $x_{t}$ and $g_{t-\bar{H}}$, for which the latter satisfies 
\begin{align*}
    g_{t-\bar{H}} = \frac{d}{\delta} f_{t-\bar{H}} (x_{t-2\bar{H}:t-\bar{H}}+ u_{t-2\bar{H}:t-\bar{H}}) \sum_{i = 0}^{H-1}u_{t-\bar{H}-i}.
\end{align*}
Now, since $f_{t-\bar{H}}$ and $u_{t-2\bar{H}:t-\bar{H}}$ are sampled before $u_{t-\bar{H}+1:t+1}$, the random variables that uniquely determine $g_t$ are independent from $u_{t-\bar{H}+1:t+1}$. Furthermore, by induction hypothesis $x_{t}$ is independent of $u_{t-\bar{H}}, \ldots, u_t$ and clearly also of $u_{t+1}$. Thus, the components that uniquely define $x_{t+1}$ are independent of $u_{t-\bar{H}+1:t+1}$, which means that $x_{t+1}$ is independent of $u_{t-\bar{H}+1:t+1}$ as well, as desired.
\end{proof}

\paragraph{Remark.} Claim \ref{obs:independence} above allows us to conclude that $u_{t-\bar{H}:t}$ is independent of $x_{t-\bar{H}:t}$, which crucially allows us to apply fact \ref{lem:basic_estimator} to our gradient estimator $g_t$.

\begin{lemma}\label{lem:perturbation_norm} The sum of $u_{t-\bar{H}}, \ldots, u_t$ for all $t$ has expected squared norm less than or equal to $H$.
\end{lemma}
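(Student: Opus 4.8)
The plan is to expand the squared norm and exploit that the exploration noises are i.i.d., mean-zero, and of unit norm. First I would recall from Algorithm~\ref{alg1} that each $u_s$ is drawn independently and uniformly from the unit sphere $\mathbb{S}_1^d$. In particular $\|u_s\| = 1$ with probability one, and by the rotational symmetry of the sphere $\mathbb{E}[u_s] = 0$.

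Next I would write
\[
\mathbb{E}\left[\left\|\sum_{i=0}^{\bar{H}} u_{t-i}\right\|^2\right] = \sum_{i=0}^{\bar{H}}\sum_{j=0}^{\bar{H}} \mathbb{E}\bigl[\langle u_{t-i}, u_{t-j}\rangle\bigr],
\]
and split the double sum into diagonal terms ($i=j$) and off-diagonal terms ($i\neq j$). Each diagonal term equals $\mathbb{E}[\|u_{t-i}\|^2] = 1$, and there are exactly $H$ of them since $i$ ranges over $0,\ldots,\bar{H}$. For $i \neq j$ the indices $t-i$ and $t-j$ are distinct, so $u_{t-i}$ and $u_{t-j}$ are sampled independently, which gives $\mathbb{E}[\langle u_{t-i}, u_{t-j}\rangle] = \langle \mathbb{E}[u_{t-i}], \mathbb{E}[u_{t-j}]\rangle = 0$. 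Summing, the total is exactly $H$, which is in particular at most $H$, as claimed.

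There is essentially no obstacle here; the only points to be careful about are that the independence of distinct $u_s$'s is precisely what the sampling step of Algorithm~\ref{alg1} guarantees (each $u_{t+1}$ is drawn fresh and independently), and that for the sum $u_{t-\bar{H}:t}$ to be well-defined we restrict to $t \ge H$, matching the range in the regret definition. The same second-moment computation will be reused when controlling the magnitude and variance of the gradient estimator $g_t$ in \ref{sec:gradient_estimator}.
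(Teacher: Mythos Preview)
Your proof is correct and follows essentially the same approach as the paper: expand the squared norm into diagonal and off-diagonal inner products, use $\|u_s\|=1$ for the diagonal terms and independence plus mean-zero for the cross terms, obtaining exactly $H$. The only difference is that you spell out explicitly why $\mathbb{E}[u_s]=0$ and why the cross terms factor, which the paper leaves implicit.
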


\begin{proof}
Since $u_t \in_R \mathbb{S} \; \forall t$, we have $\mathbb{E}[u_i \cdot u_j] = 0$ whenever $i \neq j$, hence
\begin{align*}
    \mathbb{E} \left[ \norm{\sum_{i=0}^{\bar{H}} u_{t-i}}^2 \right] &= \mathbb{E} \left[ \left( \sum_{i=0}^{\bar{H}} u_{t-i} \right) \cdot \left( \sum_{i=0}^{\bar{H}} u_{t-i} \right) \right] \\
    & = \mathbb{E} \left[ \sum_{i=0}^{\bar{H}} \norm{ u_{t-i} }^2 \right] + \mathbb{E} \left[ \sum_{i\neq j} u_{t-i} \cdot u_{t-j} \right] \\
    & = H
\end{align*}
\end{proof}

\subsection{Properties of the gradient estimator}\label{sec:gradient_estimator}

\noindent
The goal of this section is to prove a lemma showing that our gradient estimator $g_t$ is a valid estimator of $\nabla \tilde{f}_{t}(x_{t+\bar{H}})$ by bounding the difference in expectation between the two, as well as bounding the norm of $g_t$ itself. We recall our previous assumptions that the loss functions are bounded by one, have gradients bounded by $||\nabla f_t || \leq G$ (which implies $f_t$ is $G$-Lipschitz), and have hessians bounded by $||\nabla^2 f_t || \leq \beta$ (which implies $f_t$ is $\beta$-smooth).

We start by bounding the expected square norm of our gradient estimator. We will use this to bound the distance between the predictions $x_t$ of our algorithm so that we may replace $\nabla f_t(x_{t-\bar{H}:t})$ with $\nabla \tilde{f}_t(x_t)$ in our analysis.

\begin{lemma}\label{lem:gt_norm}
The gradient estimator $g_t$ satisfies $\mathbb{E} \left[ \norm{g_t}^2 \right] \leq \frac{d^2 H}{\delta^2}$.
\end{lemma}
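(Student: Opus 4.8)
The plan is to directly expand the definition of $g_t$ from line \ref{gt_def} of Algorithm \ref{alg1}, namely $g_t = \frac{d}{\delta} f_t(y_{t-\bar{H}:t}) \sum_{i=0}^{\bar{H}} u_{t-i}$, take squared norms, and pull out the scalar factor. Since $f_t$ is bounded by $1$ by assumption, we have $\norm{g_t}^2 = \frac{d^2}{\delta^2} f_t(y_{t-\bar{H}:t})^2 \norm{\sum_{i=0}^{\bar{H}} u_{t-i}}^2 \le \frac{d^2}{\delta^2} \norm{\sum_{i=0}^{\bar{H}} u_{t-i}}^2$ pointwise (before taking expectations). Taking expectations and invoking Lemma \ref{lem:perturbation_norm}, which states $\mathbb{E}\big[\norm{\sum_{i=0}^{\bar{H}} u_{t-i}}^2\big] \le H$ (in fact equals $H$), yields $\mathbb{E}[\norm{g_t}^2] \le \frac{d^2 H}{\delta^2}$ immediately.

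So the whole argument is a two-line chain: bound $f_t^2 \le 1$, factor it out, then apply Lemma \ref{lem:perturbation_norm}. I would write it as:
\begin{align*}
\mathbb{E}\left[\norm{g_t}^2\right] &= \mathbb{E}\left[\norm{\frac{d}{\delta} f_t(y_{t-\bar{H}:t}) \sum_{i=0}^{\bar{H}} u_{t-i}}^2\right] = \frac{d^2}{\delta^2}\, \mathbb{E}\left[f_t(y_{t-\bar{H}:t})^2 \norm{\sum_{i=0}^{\bar{H}} u_{t-i}}^2\right] \\
&\leq \frac{d^2}{\delta^2}\, \mathbb{E}\left[\norm{\sum_{i=0}^{\bar{H}} u_{t-i}}^2\right] \leq \frac{d^2 H}{\delta^2},
\end{align*}
where the first inequality uses $|f_t| \le 1$ and the second uses Lemma \ref{lem:perturbation_norm}.

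There is essentially no obstacle here — the lemma is a routine consequence of boundedness of the losses and the already-established second-moment bound on the exploration-noise sum. The only point requiring a moment's care is that the bound $|f_t| \le 1$ is applied inside the expectation (it holds surely, so it survives the expectation), and that no independence between $f_t(y_{t-\bar{H}:t})$ and the $u_{t-i}$ is needed because we are only using a crude worst-case bound on the $f_t^2$ factor rather than trying to decouple it. If one wanted the sharper constant one could note equality holds in Lemma \ref{lem:perturbation_norm}, but the stated bound $\frac{d^2 H}{\delta^2}$ is all that is needed downstream.
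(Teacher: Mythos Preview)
Your proof is correct and essentially identical to the paper's own argument: expand the definition of $g_t$, use $|f_t|\le 1$ to drop the scalar factor, and apply Lemma~\ref{lem:perturbation_norm} to bound the expected squared norm of the noise sum by $H$. There is nothing to add.
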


\begin{proof}
Combining lemma \ref{lem:perturbation_norm} with the assumption $f_t(y_{t-\bar{H}:t}) \leq 1$ and the definition of $g_t$, it follows that 
\begin{align}
    \mathbb{E} \left[ \norm{g_t}^2 \right] & = \mathbb{E}_{u_{t-\bar{H}:t}} \left[\norm{\frac{d}{\delta} f_t \left( x_{t-\bar{H}:t} + \delta u_{t-\bar{H}:t} \right) \cdot \sum_{i=0}^{\bar{H}} u_{t-i}}^2 \right] \notag\\
    & = \mathbb{E} \left[ \frac{d^2}{\delta^2} f_t(y_{t-\bar{H}:t})^2 \norm{\sum_{i=0}^{H-1} u_{t-i}}^2 \right] \notag \\
    & \leq \frac{d^2}{\delta^2} \mathbb{E} \left[ \norm{\sum_{i=0}^{H-1} u_{t-i}}^2 \right] \notag \\
    & \leq \frac{d^2 H}{\delta^2}. \notag
\end{align}
\noindent
\textbf{Remark:} \label{wlog_bdd_1} Even if the losses $f_t$ are bounded by some constant $M > 1$, the results for our algorithm and proof still hold if one scales down the gradient estimator to $\frac{1}{M}g_t$ and add a factor $M$ to the regret bound.
\end{proof}

Using the lemma above, we can now bound the distance between our predictions as follows:

\begin{lemma}\label{lem:sq_stability}
For $x_0, \ldots, x_T$ selected according to Algorithm \ref{alg1}, we have that:
\begin{align*}
\mathbb{E} \left[ \norm{x_{t-\bar{H}:t} - (x_{t+\bar{H}}, \ldots, x_{t+\bar{H}}) }^2 \right] & \leq 8 \eta_{t-\bar{H}}^2 \frac{d^2H^4}{\delta^2}, \\
\mathbb{E} \left[ \norm{x_{t-\bar{H}:t} - (x_{t}, \ldots, x_{t}) }^2 \right] & \leq \eta_{t-\bar{H}}^2 \frac{d^2H^4}{\delta^2}.
\end{align*}
\end{lemma}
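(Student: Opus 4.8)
The plan is to reduce both inequalities to a single estimate controlling how far two iterates of Algorithm~\ref{alg1} can drift apart over a window of length $O(H)$, and then to combine that estimate with the second–moment bound of Lemma~\ref{lem:gt_norm}.

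\emph{Step 1 (single–step drift).} Since each $x_j$ lies in $\mathcal{K}_{\delta}$ and the Euclidean projection $\Pi_{\mathcal{K}_{\delta}}$ onto a convex set is $1$-Lipschitz, the update rule gives $\norm{x_{j+1}-x_j} = \norm{\Pi_{\mathcal{K}_{\delta}}[x_j-\eta_j g_{j-\bar{H}}]-\Pi_{\mathcal{K}_{\delta}}[x_j]} \le \eta_j\norm{g_{j-\bar{H}}}$ for every $j$; for the early indices where $g_{j-\bar{H}}=0$ this is trivially an equality with both sides zero.

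\emph{Step 2 (window drift).} For any $a\le b$, the triangle inequality and Step~1 yield $\norm{x_a-x_b}\le\sum_{j=a}^{b-1}\eta_j\norm{g_{j-\bar{H}}}$. Whenever $a\ge t-\bar{H}$, monotonicity of $\{\eta_s\}$ gives $\eta_j\le\eta_{t-\bar{H}}$ for every $j$ in this sum, so $\norm{x_a-x_b}\le\eta_{t-\bar{H}}\sum_{j=a}^{b-1}\norm{g_{j-\bar{H}}}$. Squaring, applying Cauchy–Schwarz over the $b-a$ summands, taking expectations, and using $\mathbb{E}\norm{g_k}^2\le d^2H/\delta^2$ from Lemma~\ref{lem:gt_norm} gives $\mathbb{E}[\norm{x_a-x_b}^2]\le\eta_{t-\bar{H}}^2\,(b-a)^2\,d^2H/\delta^2$.

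\emph{Step 3 (assembling the two bounds).} Write each left-hand side coordinate-wise as $\norm{x_{t-\bar{H}:t}-(z,\dots,z)}^2=\sum_{i=0}^{\bar{H}}\norm{x_{t-i}-z}^2$. For the second inequality, take $z=x_t$: each pair $(a,b)=(t-i,t)$ satisfies $a\ge t-\bar{H}$ and $b-a=i\le\bar{H}\le H$, so Step~2 gives $\mathbb{E}\norm{x_{t-i}-x_t}^2\le\eta_{t-\bar{H}}^2 H^2\,d^2H/\delta^2$, and summing the $H$ terms yields $\eta_{t-\bar{H}}^2 d^2H^4/\delta^2$. For the first inequality, take $z=x_{t+\bar{H}}$: now $(a,b)=(t-i,t+\bar{H})$ with $a\ge t-\bar{H}$ and $b-a=\bar{H}+i\le 2H$, so $\mathbb{E}\norm{x_{t-i}-x_{t+\bar{H}}}^2\le\eta_{t-\bar{H}}^2(2H)^2 d^2H/\delta^2$; summing over the $H$ values of $i$ and tracking the absolute constant gives the stated bound $8\eta_{t-\bar{H}}^2 d^2H^4/\delta^2$.

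This argument is essentially bookkeeping, so I do not expect a substantive obstacle; the only points needing care are (i) checking that every learning rate and every gradient estimator appearing in the telescoped sums carries an index $\ge t-\bar{H}$, so that monotonicity of $\{\eta_s\}$ legitimately upper-bounds all of them by $\eta_{t-\bar{H}}$, and (ii) counting the window lengths correctly so the powers of $H$ and the numerical constants come out as claimed. The boundary iterations $t\le H$ are immediate since there the relevant iterates coincide and the estimators vanish.
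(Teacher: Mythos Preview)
Your proposal is correct and follows essentially the same approach as the paper's own proof: both telescope each $\norm{x_{t-i}-z}$ via consecutive differences, bound single steps using nonexpansiveness of $\Pi_{\mathcal{K}_\delta}$ and the update rule, replace all $\eta_j$ by $\eta_{t-\bar{H}}$ via monotonicity, then apply Cauchy--Schwarz together with Lemma~\ref{lem:gt_norm}. Your presentation is slightly more streamlined (you isolate a clean ``window drift'' estimate and in fact obtain constant $4$ rather than $8$ for the first inequality), but the argument is the same.
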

\begin{proof}
Starting with the first inequality, since $x_{t+1} = \mathop{\Pi}\limits_{\mathcal{K}_{\delta}} [x_{t} - \eta_{t} g_{t-\bar{H}}]$, we have that:
\begin{align*}
\mathbb{E} \left[ \left\lVert (x_{t-\bar{H}}, \ldots, x_{t}) - (x_{t+\bar{H}}, \ldots, x_{t+\bar{H}}) \right\rVert^2 \right] & = \mathbb{E} \left[ \sum_{i=0}^{\bar{H}} \left\lVert x_{t+\bar{H}} - x_{t-i} \right\rVert^2 \right] \\
& \leq \mathbb{E} \left[ \sum_{i=1}^{2\bar{H}} \left( \sum_{j=1}^i \left\lVert x_{t+\bar{H}-j+1} - x_{t+\bar{H}-j} \right\rVert \right)^2 \right] & \text{($\bigtriangleup$-ineq.)}\\
& \leq \mathbb{E} \left[ \sum_{i=1}^{2\bar{H}} \left( \sum_{j=1}^i \eta_{t+\bar{H}-j} \left\lVert g_{t-j} \right\rVert \right)^2 \right] &\text{(projection property)}\\
& \leq \mathbb{E} \left[ \eta_{t-\bar{H}}^2 \sum_{i=1}^{2\bar{H}} \left( \sum_{j=1}^i \norm{g_{t-j}} \right)^2 \right] &\text{($\eta_t$ decreasing)}\\
& \leq 8 \eta_{t-\bar{H}}^2 H^3 \frac{d^2H}{\delta^2} & \text{(lemma \ref{lem:gt_norm})}
\end{align*}
and following the steps above but summing up to $\bar{H}$ instead of $2\bar{H}$, we similarly obtain the second inequality
\begin{align*}
    \mathbb{E} \left[ \norm{x_{t-\bar{H}:t} - (x_{t}, \ldots, x_{t}) }^2 \right] \leq \eta_{t-\bar{H}}^2 \frac{d^2H^4}{\delta^2}.
\end{align*}
\end{proof}

\begin{corollary} \label{cor:stability}
We also have that
\begin{align*}
    \mathbb{E} \left[ \norm{x_{t-\bar{H}:t} - (x_{t+\bar{H}}, \ldots, x_{t+\bar{H}}) } \right] \leq 3 \eta_{t-\bar{H}} \dfrac{d H^{2}}{\delta}, \\
    \mathbb{E} \left[ \norm{x_{t-\bar{H}:t} - (x_{t}, \ldots, x_{t}) } \right] \leq \eta_{t-\bar{H}} \dfrac{d H^{2}}{\delta}.
\end{align*}
\end{corollary}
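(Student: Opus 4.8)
The plan is to obtain the corollary directly from Lemma~\ref{lem:sq_stability} by an application of Jensen's inequality. Since the map $z \mapsto \sqrt{z}$ is concave on $[0,\infty)$, for any random vector $Z$ with finite second moment we have $\mathbb{E}[\norm{Z}] = \mathbb{E}[\sqrt{\norm{Z}^2}] \le \sqrt{\mathbb{E}[\norm{Z}^2]}$; all the vectors appearing below have finite second moment because $\mathcal{K}$ is bounded (equivalently, because $g_t$ has bounded expected squared norm by Lemma~\ref{lem:gt_norm}), so these expectations are well defined.

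First I would apply this inequality with $Z = x_{t-\bar{H}:t} - (x_{t+\bar{H}}, \ldots, x_{t+\bar{H}})$ and invoke the first bound of Lemma~\ref{lem:sq_stability}:
$$
\mathbb{E}\left[ \norm{ x_{t-\bar{H}:t} - (x_{t+\bar{H}}, \ldots, x_{t+\bar{H}}) } \right] \le \sqrt{ 8 \eta_{t-\bar{H}}^2 \frac{d^2 H^4}{\delta^2} } = 2\sqrt{2}\, \eta_{t-\bar{H}} \frac{d H^2}{\delta} \le 3 \eta_{t-\bar{H}} \frac{d H^2}{\delta},
$$
where the final step uses $2\sqrt{2} \le 3$ together with $\eta_{t-\bar{H}}, d, H, \delta > 0$. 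Then I would repeat the same two steps with $Z = x_{t-\bar{H}:t} - (x_t, \ldots, x_t)$ and the second bound of Lemma~\ref{lem:sq_stability}, which carries no extra constant, yielding
$$
\mathbb{E}\left[ \norm{ x_{t-\bar{H}:t} - (x_t, \ldots, x_t) } \right] \le \sqrt{ \eta_{t-\bar{H}}^2 \frac{d^2 H^4}{\delta^2} } = \eta_{t-\bar{H}} \frac{d H^2}{\delta}.
$$

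I do not expect any genuine obstacle here: beyond citing Lemma~\ref{lem:sq_stability}, the only content is the concavity-of-square-root step and the elementary numerical estimate $\sqrt{8} \le 3$. The only thing to be mildly careful about is ensuring the relevant second moments are finite so that Jensen applies, which follows immediately from the boundedness of $\mathcal{K}$ and of the gradient estimators established earlier.
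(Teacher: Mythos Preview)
Your proposal is correct and matches the paper's own proof, which simply states that the corollary is an immediate consequence of Lemma~\ref{lem:sq_stability} via the inequality $\mathbb{E}[\norm{X}]^2 \le \mathbb{E}[\norm{X}^2]$. Your Jensen argument is exactly this inequality spelled out, together with the numerical step $\sqrt{8}\le 3$ that the paper leaves implicit.
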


\begin{proof}
This is an immediate consequence of lemma \ref{lem:sq_stability} since $\mathbb{E}[\norm{X}]^2 \leq \mathbb{E}[\norm{X}^2]$.
\end{proof}

We continue by proving our desired properties about the estimator $g_t$. We first observe the following property for \textit{linear} $\delta$-smoothed functions.

\begin{lemma}\label{lem:linear_estimator}
For $f$ \textbf{linear} and satisfying our assumptions, we have that
\begin{align*}
    \mathop{\E}\limits_{u_{t-\bar{H}:t} \sim \mathop{\oplus}\limits_{t = 1}^H \mathbb{S}} \left[ \dfrac{d}{\delta} f(x_{t-\bar{H}:t} + \delta u_{t-\bar{H}:t}) u_{t-\bar{H}:t} \right] & = \nabla f(x_{t-\bar{H}:t})
\end{align*}
\end{lemma}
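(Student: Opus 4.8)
The plan is to exploit the fact that a linear (equivalently affine) function coincides with its own $\delta$-smoothing, so that the general estimator identity of Fact~\ref{lem:basic_estimator}(1) becomes \emph{exact} and, moreover, separates across the $H$ blocks. First I would write $f$ in coordinates: since $f$ is linear on $(\reals^{d})^{H}$, there are constant vectors $\nabla_0 f,\dots,\nabla_{\bar{H}} f\in\reals^{d}$ and a scalar $b$ with $f(z_0,\dots,z_{\bar{H}}) = b + \sum_{i=0}^{\bar{H}}\langle \nabla_i f,\, z_i\rangle$, and consequently $\nabla f = (\nabla_0 f,\dots,\nabla_{\bar{H}} f)$ at every point (this is the special structure that kills all higher-order terms). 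The remaining task is then to evaluate, block by block, the $dH$-dimensional expectation $\E[\tfrac{d}{\delta} f(x_{t-\bar{H}:t}+\delta u_{t-\bar{H}:t})\, u_{t-\bar{H}:t}]$ and to show its $i$-th block equals $\nabla_i f$.

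To compute the $i$-th block, fix $i\in\{0,\dots,\bar{H}\}$ and condition on all the sphere samples other than $u_{t-\bar{H}+i}$. Since the noises are drawn independently and uniformly from $\mathbb{S}=\mathbb{S}_1^{d}$, conditionally $u_{t-\bar{H}+i}$ is still uniform on $\mathbb{S}$, and by linearity $f(x_{t-\bar{H}:t}+\delta u_{t-\bar{H}:t}) = c_i + \delta\langle \nabla_i f,\, u_{t-\bar{H}+i}\rangle$, where $c_i$ gathers $b$, the term $\langle \nabla_i f, x_{t-\bar{H}+i}\rangle$, and the (now frozen) contributions of the other blocks. Hence the $i$-th block of the conditional expectation is
\[
\tfrac{d}{\delta}\,\E_{u_{t-\bar{H}+i}\sim\mathbb{S}}\!\big[(c_i+\delta\langle \nabla_i f,\, u_{t-\bar{H}+i}\rangle)\, u_{t-\bar{H}+i}\big] \;=\; \tfrac{d}{\delta}\,c_i\,\E[u_{t-\bar{H}+i}] \;+\; d\,\E\!\big[u_{t-\bar{H}+i}u_{t-\bar{H}+i}^{\top}\big]\,\nabla_i f .
\]
At this point I invoke the two standard moment facts for the uniform distribution on the unit sphere in $\reals^{d}$: $\E[u]=\mathbf{0}$ and $\E[uu^{\top}]=\tfrac1d I_d$. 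The first annihilates the $c_i$ term and the second collapses the remaining one to exactly $\nabla_i f$. (Equivalently, this step is just Fact~\ref{lem:basic_estimator}(1) applied to the single-block linear map $z\mapsto f(\dots,z,\dots)$, whose $\delta$-smoothing is itself.)

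Since the resulting value $\nabla_i f$ does not depend on the frozen samples, the tower property lets us drop the conditioning, and stacking over $i=0,\dots,\bar{H}$ yields $\E\big[\tfrac{d}{\delta}\, f(x_{t-\bar{H}:t}+\delta u_{t-\bar{H}:t})\, u_{t-\bar{H}:t}\big] = (\nabla_0 f,\dots,\nabla_{\bar{H}} f) = \nabla f(x_{t-\bar{H}:t})$, as claimed. I do not expect a genuine obstacle here; the only points worth being careful about are (i) that linearity is precisely what removes the $O(\delta)$/$O(\delta^2)$ slack present in Fact~\ref{lem:basic_estimator}(2), so the identity holds on the nose rather than approximately, and (ii) that it is the product structure of the noise — independence across blocks, each mean zero — that makes all off-diagonal blocks vanish, which is why the correct prefactor is $d$ and not $dH$.
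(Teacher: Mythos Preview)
Your proof is correct and follows essentially the same route as the paper: freeze all sphere noises except the one in the block of interest, use linearity of $f$ to decouple (so the frozen terms drop out via $\E[u]=\mathbf{0}$), and then reduce the remaining single-block expectation to the standard sphere-estimator identity. The only cosmetic difference is that you compute the sphere moments $\E[u]=\mathbf{0}$ and $\E[uu^\top]=\tfrac{1}{d}I_d$ directly, whereas the paper packages this step as an appeal to Fact~\ref{lem:basic_estimator}(1) together with $\hat f_\delta=f$ for linear $f$.
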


\begin{proof}
By the independence of $x_{t-\bar{H}:t}$ and $u_{t-\bar{H}:t}$ (\ref{obs:independence}), we can apply Fact \ref{lem:basic_estimator} to each index $i = 0, \ldots, \bar{H}$ and obtain
\begin{align*}
\mathop{\E}\limits_{u_{t-\bar{H}:t} \sim \mathop{\oplus}\limits_{t = 1}^H \mathbb{S}} \left[ f(x_{t-\bar{H}:t} + \delta u_{t-\bar{H}:t}) u_{t-i} \right] &= \mathop{\E}\limits_{u_{t-i} \sim \mathbb{S}} \left[ \mathop{\E}\limits_{u_{[t-\bar{H}:t] \setminus \{t-i\}} \sim \mathop{\oplus}\limits_{t = 1}^{H-1} \mathbb{S}} \left[ f(x_{t-\bar{H}:t} + \delta u_{t-\bar{H}:t}) u_{t-i} \right] \right] \\
&= \mathop{\mathbb{E}} \limits_{u_{t-i} \sim \mathbb{S}} \left[ f(x_{t-\bar{H}:t} + \delta (\mathbf{0}, \ldots, u_{t-i}, \ldots, \mathbf{0})) u_{t-i} \right] \\
&= \dfrac{\delta}{d} \nabla_{\bar{H}-i} \hat{f}_{\delta}(x_{t-\bar{H}:t}) \\
&= \dfrac{\delta}{d} \nabla_{\bar{H}-i} f(x_{t-\bar{H}:t})
\end{align*}
where the second and last lines follows by the linearity of $f$ and the symmetry of the sphere.
Since $\nabla f(x_{t-\bar{H}:t}) = (\nabla_0 f(x_{t-\bar{H}:t}), \ldots, \nabla_{\bar{H}} f(x_{t-\bar{H}:t}))$, the lemma then follows.
\end{proof}

Using the theorem above, we can generalize fact \ref{lem:basic_estimator} in the following manner:

\begin{theorem} \label{lem:prod_basic_estimator} For general convex $f$ satisfying our assumptions, we have:
\begin{align*}
    \norm{ \mathop{\E} \limits_{u_{t-\bar{H}:t} \sim \mathop{\oplus}\limits_{t = 1}^H \S} \left[ \dfrac{d}{\delta} f(x_{t-\bar{H}:t} + \delta u_{t-\bar{H}:t}) u_{t-\bar{H}:t} \right] - \nabla f(x_{t-\bar{H}:t}) } \leq 2 d\delta GH
\end{align*}
\end{theorem}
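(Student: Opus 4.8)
The plan is to compare the spherical-sampling estimator for the general convex $f$ against the one for a linear approximation of $f$, where Lemma~\ref{lem:linear_estimator} already gives exactly the gradient. Concretely, fix the point $z = x_{t-\bar H:t}$ and let $\ell$ be the first-order Taylor approximation of $f$ at $z$, i.e. $\ell(w) = f(z) + \langle \nabla f(z), w - z\rangle$. This $\ell$ is linear (affine), and by Lemma~\ref{lem:linear_estimator} applied to $\ell$ we have
\begin{align*}
\mathop{\E}\limits_{u_{t-\bar H:t}} \left[ \tfrac{d}{\delta} \ell(z + \delta u_{t-\bar H:t}) u_{t-\bar H:t} \right] = \nabla \ell(z) = \nabla f(z).
\end{align*}
So it suffices to bound the norm of
\begin{align*}
\mathop{\E}\limits_{u_{t-\bar H:t}} \left[ \tfrac{d}{\delta}\big(f(z + \delta u_{t-\bar H:t}) - \ell(z + \delta u_{t-\bar H:t})\big) u_{t-\bar H:t}\right].
\end{align*}

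The main step is to control $|f(z+\delta u) - \ell(z+\delta u)|$ pointwise using $\beta$-smoothness: since $f$ has Hessian bounded by $\beta$ (in operator norm, viewing $f$ as a function of the $dH$-dimensional concatenated input), the standard quadratic remainder bound gives $|f(z + \delta u) - \ell(z + \delta u)| \le \tfrac{\beta}{2}\|\delta u\|^2 = \tfrac{\beta \delta^2}{2}\|u_{t-\bar H:t}\|^2$. Here $\|u_{t-\bar H:t}\|^2 = \sum_{i=0}^{\bar H}\|u_{t-i}\|^2 = H$ deterministically since each $u_{t-i}\in\mathbb S$. Hence the integrand has norm at most $\tfrac{d}{\delta}\cdot\tfrac{\beta\delta^2 H}{2}\cdot \|u_{t-\bar H:t}\| = \tfrac{d}{\delta}\cdot\tfrac{\beta\delta^2 H}{2}\cdot\sqrt{H}$. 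Taking expectation and using Jensen ($\|\E[\cdot]\|\le\E\|\cdot\|$) yields a bound of order $d\delta\beta H^{3/2}$, which is of the claimed form (with $\beta$ in place of $G$, matching the paper's convention in some places; if instead one wants the literal $GH$ bound one uses $G$-Lipschitzness of $f$ together with the $\delta G$ smoothing error from Fact~\ref{lem:basic_estimator} on each coordinate — I will follow whichever constant the statement as written uses, pulling the $2d\delta GH$ out of the per-coordinate $\delta G$ errors summed over $H$ coordinates plus the linear-term cancellation).

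The cleanest route to exactly $2d\delta GH$: write the estimator coordinate-by-coordinate as in the proof of Lemma~\ref{lem:linear_estimator}, so that the $(t-i)$-th block of $\E[\tfrac d\delta f(z+\delta u)u_{t-i}]$ equals $\nabla_{\bar H - i}\hat f_\delta(z')$ for the appropriately smoothed-in-one-block function, by the same conditioning and sphere-symmetry argument (this does not need linearity). Then invoke Fact~\ref{lem:basic_estimator} part 2 to replace $\hat f_\delta$ by $f$ at a cost of $\delta G$ per block in gradient, i.e. the difference $\|\nabla_{\bar H - i}\hat f_\delta(z') - \nabla_{\bar H-i}f(z)\|$ is bounded using $\beta$-smoothness / Lipschitzness of $\nabla f$; summing the $H$ block errors and accounting for the factor $d$ and the two sources of error (smoothing plus the slight shift of evaluation point) gives the stated $2d\delta GH$. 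The main obstacle is bookkeeping: making sure the one-block-at-a-time conditioning argument from Lemma~\ref{lem:linear_estimator} still goes through for non-linear $f$ (it does, because that step only used symmetry of $\mathbb S$ to kill the cross terms, not linearity — linearity was only used to identify the surviving term with $\nabla f$ exactly rather than $\nabla\hat f_\delta$), and then correctly combining the per-coordinate $\delta$-smoothing errors with the constant $d$ without losing a factor of $H$ somewhere.
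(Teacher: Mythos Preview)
Your core approach in the first two paragraphs---linearize $f$ by its first-order Taylor expansion $\ell$ at $z=x_{t-\bar H:t}$, apply Lemma~\ref{lem:linear_estimator} to $\ell$ to get exactly $\nabla f(z)$, then bound the remainder $|f(z+\delta u)-\ell(z+\delta u)|$ pointwise and push the norm inside the expectation---is precisely the paper's argument. The only difference is the remainder estimate: the paper uses $G$-Lipschitzness (crudely, $|f(z+\delta u)-f(z)-\nabla f(z)\cdot\delta u|\le |f(z+\delta u)-f(z)|+|\nabla f(z)\cdot\delta u|\le 2G\|\delta u\|$), then multiplies by $\tfrac{d}{\delta}\|u_{t-\bar H:t}\|$ and uses $\|u_{t-\bar H:t}\|^2=H$ to arrive at the stated constant. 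Your $\beta$-smoothness alternative is also valid and in fact gives a cleaner $O(d\delta\beta H^{3/2})$ bound.

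Your third paragraph's coordinate-by-coordinate route is not what the paper does and is more fragile than you suggest: Fact~\ref{lem:basic_estimator}(2) bounds differences of \emph{function values}, not of gradients, so ``invoke Fact~\ref{lem:basic_estimator} part 2 to replace $\hat f_\delta$ by $f$ at a cost of $\delta G$ per block in gradient'' does not follow directly. Drop that detour and keep the linearization argument.
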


\begin{proof}
Consider the linear function $\bar{f}_{x_{t-\bar{H}:t}}(z_{t-\bar{H}:t}) = f(x_{t-\bar{H}:t}) + \nabla f(x_{t-\bar{H}:t}) (z_{t-\bar{H}:t}-x_{t-\bar{H}:t})$. By lemma \ref{lem:linear_estimator} above,
\begin{align*}
    \mathop{\E}\limits_{u_{t-\bar{H}:t} \sim \mathop{\oplus}\limits_{t = 1}^H \mathbb{S}} \left[ \frac{d}{\delta} \bar{f}_{x_{t-\bar{H}:t}}(x_{t-\bar{H}:t} + \delta u_{t-\bar{H}:t})u_{t-\bar{H}:t} \right] &= \nabla \bar{f}_{x_{t-\bar{H}:t}} (x_{t-\bar{H}:t}) \\
    &= \nabla f (x_{t-\bar{H}:t}).
\end{align*}
The lemma then follows when we bound the difference between ${f}$ and $\bar{f}_{x_{t-\bar{H}:t}}$ such that:
    \begin{align*}
    & \quad \norm{\mathop{\E} \left[ \frac{d}{\delta} f (x_{t-\bar{H}:t} + \delta u_{t-\bar{H}:t})u_{t-\bar{H}:t} \right] - \nabla f(x_{t-\bar{H}:t}) } & \\
    &\leq \norm{ \mathop{\E} \left[ \frac{d}{\delta} f(x_{t-\bar{H}:t} + \delta u_{t-\bar{H}:t})u_{t-\bar{H}:t} \right] - \mathop{\E} \left[ \frac{d}{\delta} \bar{f}_{x_{t-\bar{H}:t}}(x_{t-\bar{H}:t} + \delta u_{t-\bar{H}:t})u_{t-\bar{H}:t} \right] } & \\
    &\leq \mathop{\E} \left[ \frac{d}{\delta} | f(x_{t-\bar{H}:t} + \delta u_{t-\bar{H}:t}) - \bar{f}_{x_{t-\bar{H}:t}}(x_{t-\bar{H}:t} + \delta u_{t-\bar{H}:t})| \norm{\delta u_{t-\bar{H}:t}} \right] & \\
    & \leq \mathop{\E} \left[ \frac{d}{\delta} |f(x_{t-\bar{H}:t} + \delta u_{t-\bar{H}:t}) - \left(f(x_{t-\bar{H}:t}) + \nabla f(x_{t-\bar{H}:t}) \cdot \delta u_{t-\bar{H}:t} \right)| \norm{\delta u_{t-\bar{H}:t}} \right] & \\
    & \leq \mathop{\E} \left[ \frac{2d}{\delta} G\norm{\delta u_{t-\bar{H}:t}}^2 \right] & (\text{$G$-Lipschitzness}) \\
    &\leq 2d\delta GH & 
\end{align*}
where the expectations are taken over $u_{t-\bar{H}:t} \sim \mathop{\oplus}\limits_{t = 1}^H \mathbb{S}$.
\end{proof}

\begin{corollary} \label{lem:unbiased_estimator}
$g_t$ satisfies:
\begin{align*}
    \norm{\mathbb{E}[g_t] - \sum_{i=0}^{\bar{H}} \nabla_i  f_{t} (x_{t-\bar{H}:t})} \leq 2d\delta H^{3/2} G
\end{align*}
where the expectation is over all randomness in the algorithm.
\end{corollary}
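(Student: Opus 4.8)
The plan is to recognize $g_t$ as the ``block-sum'' of the $\mathbb{R}^{dH}$-valued one-point estimator that already appears in \cref{lem:prod_basic_estimator}, so that the corollary follows from that theorem together with one short norm inequality. Write $u_{t-\bar{H}:t}=(u_{t-\bar{H}},\dots,u_t)\in\mathbb{R}^{dH}$ and let $V_t \doteq \frac{d}{\delta}f_t(x_{t-\bar{H}:t}+\delta u_{t-\bar{H}:t})\,u_{t-\bar{H}:t}$ be exactly the estimator from that theorem; its $i$-th $d$-dimensional block is $\frac{d}{\delta}f_t(y_{t-\bar{H}:t})\,u_{t-i}$, so summing the $H$ blocks of $V_t$ produces precisely $g_t$. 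Similarly, summing the $H$ blocks of $\nabla f_t(x_{t-\bar{H}:t})$ produces $\sum_{i=0}^{\bar{H}}\nabla_i f_t(x_{t-\bar{H}:t})$. Hence both objects in the corollary are the image of the $dH$-dimensional quantities in \cref{lem:prod_basic_estimator} under the linear ``sum of blocks'' map $B:(a_0,\dots,a_{\bar{H}})\mapsto\sum_{i=0}^{\bar{H}}a_i$.

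First I would record the elementary fact that $\norm{Ba-Bb}\le\sqrt{H}\,\norm{a-b}$ for all $a,b\in\mathbb{R}^{dH}$: this is the triangle inequality followed by Cauchy--Schwarz over the $H$ blocks. Next, I would condition on all algorithmic randomness up to and including the choice of $x_{t-\bar{H}:t}$; by \cref{obs:independence} and the remark after it, $u_{t-\bar{H}:t}$ is independent of $x_{t-\bar{H}:t}$, which is exactly the hypothesis needed to invoke \cref{lem:prod_basic_estimator} with $x_{t-\bar{H}:t}$ frozen. That gives $\norm{\E_{u}[V_t]-\nabla f_t(x_{t-\bar{H}:t})}\le 2d\delta GH$. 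Applying $B$, commuting it with the expectation by linearity, and using $\norm{Ba-Bb}\le\sqrt{H}\norm{a-b}$ yields
\[
\norm{\E_{u}[g_t]-\textstyle\sum_{i=0}^{\bar{H}}\nabla_i f_t(x_{t-\bar{H}:t})}\;\le\;\sqrt{H}\cdot 2d\delta GH\;=\;2d\delta H^{3/2}G .
\]
Since this bound holds pointwise in $x_{t-\bar{H}:t}$, taking the outer expectation over the remaining randomness and applying Jensen's inequality ($\norm{\E[\,\cdot\,]}\le\E\norm{\cdot}$) upgrades it to the stated inequality with expectation over all algorithmic randomness.

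I do not expect a genuine obstacle here; the only thing requiring care is the bookkeeping that identifies $g_t$ exactly as $B$ applied to the estimator of \cref{lem:prod_basic_estimator} (so the Lipschitz-linearization argument of that theorem is reused without change) and the verification of its independence hypothesis via \cref{obs:independence}. The $\sqrt{H}$ blow-up relative to \cref{lem:prod_basic_estimator} is simply the price of collapsing $H$ dimension-$d$ blocks into a single $d$-dimensional update vector.
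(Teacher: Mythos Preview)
Your proposal is correct and follows essentially the same route as the paper: identify $g_t$ as the block-sum of the $\mathbb{R}^{dH}$-valued estimator in \cref{lem:prod_basic_estimator}, apply Cauchy--Schwarz over the $H$ blocks to pick up the $\sqrt{H}$ factor, and invoke that theorem for the remaining $2d\delta GH$. Your extra care with conditioning on $x_{t-\bar{H}:t}$ and then taking an outer expectation via Jensen is a welcome clarification of something the paper leaves implicit, but the argument is otherwise identical.
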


\begin{proof} 
By the definition of $g_t$ (line \ref{gt_def} of Algorithm \ref{alg1}), we have \begin{align*}
    & \quad \norm{\mathbb{E}[g_t] - \sum_{i=0}^{\bar{H}} \nabla_i  f_{t} (x_{t-\bar{H}:t})} & \\ &\leq \norm{\mathop{\E} \left[ \frac{d}{\delta} f (x_{t-\bar{H}:t} + \delta u_{t-\bar{H}:t}) \sum_{i=0}^{\bar{H}} u_{t-i} \right] - \sum_{i=0}^{\bar{H}} \nabla_i  f_{t} (x_{t-\bar{H}:t})} & \\
    &\leq \sqrt{H} \norm{ \mathop{\E} \left[ \dfrac{d}{\delta} f(x_{t-\bar{H}:t} + \delta u_{t-\bar{H}:t}) u_{t-\bar{H}:t} \right] - \nabla f(x_{t-\bar{H}:t})} & (\text{Cauchy-Schwarz}) \\
    &\leq 2d \delta GH^{3/2}. & (\text{lemma \ref{lem:prod_basic_estimator}})
\end{align*}
\end{proof}

\begin{lemma}\label{lem:gradient_difference}
We have that:
\begin{align*}
\mathbb{E}\left[\norm{ \sum_{i=0}^{\bar{H}} \nabla_i  f_{t} (x_{t-\bar{H}:t}) - \nabla \tilde{f}_{t} (x_{t+\bar{H}})}^2 \right] \leq 3 \frac{\beta\eta_{t-\bar{H}}H^{5/2}d}{\delta}
\end{align*}
\end{lemma}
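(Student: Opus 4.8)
The plan is to rewrite the vector inside the norm as a difference of the (blocked) gradient of $f_t$ at two nearby configurations, and then combine $\beta$-smoothness with the trajectory-stability estimates already established in Corollary~\ref{cor:stability} / Lemma~\ref{lem:sq_stability}. The first step is the chain-rule identity of Lemma~\ref{lem:chain_rule}: it gives $\nabla\tilde f_t(x_{t+\bar H}) = \sum_{i=0}^{\bar H}\nabla_i f_t(x_{t+\bar H},\dots,x_{t+\bar H})$, so the quantity whose norm we must control equals $\sum_{i=0}^{\bar H}\bigl[\nabla_i f_t(x_{t-\bar H:t}) - \nabla_i f_t(x_{t+\bar H},\dots,x_{t+\bar H})\bigr]$. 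Write $a \eqdef x_{t-\bar H:t}$ and $b \eqdef (x_{t+\bar H},\dots,x_{t+\bar H})$, both regarded as elements of $(\mathbb{R}^d)^H$.

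Next I would bound the norm of this sum by $\sum_{i=0}^{\bar H}\norm{\nabla_i f_t(a) - \nabla_i f_t(b)}$ via the triangle inequality, and then apply Cauchy--Schwarz across the $H$ summands to get $\sqrt H\,\bigl(\sum_{i=0}^{\bar H}\norm{\nabla_i f_t(a) - \nabla_i f_t(b)}^2\bigr)^{1/2} = \sqrt H\,\norm{\nabla f_t(a) - \nabla f_t(b)}$; using Cauchy--Schwarz here rather than the naive term-by-term triangle bound is exactly what avoids paying a factor $H$ and produces the $H^{1/2}$ that becomes $H^{5/2}$ at the end. Then $\beta$-smoothness of $f_t$ (i.e. $\nabla f_t$ is $\beta$-Lipschitz on $(\mathbb{R}^d)^H$) gives $\norm{\nabla f_t(a) - \nabla f_t(b)} \le \beta\norm{a-b} = \beta\norm{x_{t-\bar H:t} - (x_{t+\bar H},\dots,x_{t+\bar H})}$, so pointwise the vector inside the original norm has length at most $\sqrt H\,\beta\,\norm{x_{t-\bar H:t} - (x_{t+\bar H},\dots,x_{t+\bar H})}$.

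Finally, I would take expectations and plug in the first inequality of Corollary~\ref{cor:stability}, which bounds $\mathbb{E}\bigl[\norm{x_{t-\bar H:t} - (x_{t+\bar H},\dots,x_{t+\bar H})}\bigr]$ by $3\eta_{t-\bar H}dH^2/\delta$; this yields the claimed $3\beta\eta_{t-\bar H}dH^{5/2}/\delta$. To match the squared norm in the statement against this exact right-hand side one squares the pointwise bound, keeps one factor $\sqrt H\,\beta\,\norm{a-b}$, and bounds the remaining factor deterministically using $\norm{\nabla_i f_t}\le G$ (so $\norm{\mathrm{diff}}^2 \le 2\sqrt H G\cdot\sqrt H\beta\norm{a-b}$), which again leaves only a first power of the displacement and closes through Corollary~\ref{cor:stability}.

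The triangle and Cauchy--Schwarz manipulations are routine; the one place needing care is the bookkeeping of the powers of $H$ and of the order in $\beta$ and $\eta$ in the last two steps — specifically, using the Cauchy--Schwarz-over-blocks bound rather than a blockwise Lipschitz estimate, and applying the \emph{linear}-in-displacement stability estimate (Corollary~\ref{cor:stability}), rather than the squared one in Lemma~\ref{lem:sq_stability}, after (if needed) trading away one factor of the displacement norm via the uniform gradient bound.
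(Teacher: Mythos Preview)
Your approach is essentially the paper's: chain rule (Lemma~\ref{lem:chain_rule}) to rewrite $\nabla\tilde f_t(x_{t+\bar H})$, Cauchy--Schwarz across the $H$ blocks to pick up a factor $\sqrt H$, $\beta$-smoothness of $f_t$, and then the trajectory-stability estimate. The only cosmetic difference is that the paper squares first, uses Lemma~\ref{lem:sq_stability} to bound $\mathbb{E}[\|a-b\|^2]$ by $8\eta_{t-\bar H}^2 d^2 H^4/\delta^2$, obtains $\mathbb{E}[\|\cdot\|^2]\le 8\beta^2\eta_{t-\bar H}^2 d^2 H^5/\delta^2$, and only then takes a square root via $\mathbb{E}[\|X\|]\le\sqrt{\mathbb{E}[\|X\|^2]}$, whereas your Path~A stays with the unsquared norm throughout and invokes Corollary~\ref{cor:stability} directly. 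Both routes land on the same bound $3\beta\eta_{t-\bar H}dH^{5/2}/\delta$ for $\mathbb{E}[\|\cdot\|]$.

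One remark: the squared norm in the displayed statement is a typo in the paper --- its own proof ends by passing to $\mathbb{E}[\|X\|]$, and Corollary~\ref{lemma2} (the only place this lemma is used) also needs only the first-moment bound. So your ``Path~B'' workaround, where you trade one factor of the displacement for $2\sqrt H G$ via the uniform gradient bound, is unnecessary and in fact does not reproduce the stated constant (it leaves an extra $2G\sqrt H$); you should simply drop it and present Path~A as the proof of the intended (unsquared) inequality.
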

\begin{proof}
Using the results derived thus far, we obtain:
\begin{align*}
    & \quad \mathbb{E}\left[\norm{ \sum_{i=0}^{\bar{H}} \nabla_i  f_{t} (x_{t-\bar{H}:t}) - \nabla \tilde{f}_{t} (x_{t+\bar{H}})}^2 \right] & \\
    &= \mathbb{E} \left[ \norm{\sum_{i=0}^{\bar{H}} \nabla_i f_{t}(x_{t-\bar{H}:t}) - \sum_{i=0}^{\bar{H}} \nabla_i f_{t}(x_{t+\bar{H}}, \ldots, x_{t+\bar{H}})}^2 \right] & \\
    & \leq H \mathbb{E} \left[ \sum_{i=0}^{\bar{H}} \norm{\nabla_i f_{t}(x_{t-\bar{H}:t}) - \nabla_i f_{t}(x_{t+\bar{H}}, \ldots, x_{t+\bar{H}})}^2 \right] & \text{(Cauchy-Schwarz)} \\
    & = H  \mathbb{E} \left[ \norm{\nabla f_{t}(x_{t-\bar{H}}, \ldots, x_t) - \nabla f_{t}(x_{t+\bar{H}}, \ldots, x_{t+\bar{H}})}^2 \right] & \text{(\ref{lem:chain_rule})} \notag \\
    & \leq H \beta^2 \mathbb{E} \left[ \norm{(x_{t-\bar{H}}, \ldots,  x_t) - (x_{t+\bar{H}}, \ldots, x_{t+\bar{H}})}^2 \right] \notag \\
    & \leq 8 H\beta^2  \frac{\eta_{t-\bar{H}}^2H^{4}d^2}{\delta^2} & \text{(\ref{lem:sq_stability})} \\
    & = \frac{8 \beta^2\eta_{t-\bar{H}}^2H^{5}d^{2}}{\delta^2}.
\end{align*}
after which our result follows by $\mathbb{E}[\norm{X}]^2 \leq \mathbb{E}[\norm{X}^2]$.
\end{proof}

The lemmas above allow us to obtain our desired result regarding the gradient estimator $g_t$, presented below.

\begin{corollary}\label{lemma2}
The gradient estimator $g_t$ satisfies:
\begin{align*}
\mathbb{E} \left[ \norm{\mathbb{E}[g_t] - \nabla \tilde{{f}}_{t}(x_{t+\bar{H}})} \right] 
&\leq 2 d \delta H^{3/2} G + 3 \frac{\beta\eta_{t-\bar{H}}H^{5/2}d}{\delta}
\end{align*}
\end{corollary}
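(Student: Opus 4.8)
The plan is to combine the two preceding results, namely Corollary~\ref{lem:unbiased_estimator} and Lemma~\ref{lem:gradient_difference}, via a single triangle inequality applied to the quantity $\mathbb{E}[g_t] - \nabla \tilde{f}_t(x_{t+\bar{H}})$ with the intermediate term $\sum_{i=0}^{\bar{H}} \nabla_i f_t(x_{t-\bar{H}:t})$. Concretely, I would write
\begin{align*}
\norm{\mathbb{E}[g_t] - \nabla \tilde{f}_t(x_{t+\bar{H}})} \leq \norm{\mathbb{E}[g_t] - \sum_{i=0}^{\bar{H}} \nabla_i f_t(x_{t-\bar{H}:t})} + \norm{\sum_{i=0}^{\bar{H}} \nabla_i f_t(x_{t-\bar{H}:t}) - \nabla \tilde{f}_t(x_{t+\bar{H}})}.
\end{align*}
The first term on the right is bounded deterministically by $2d\delta H^{3/2}G$ thanks to Corollary~\ref{lem:unbiased_estimator}, so it survives the outer expectation unchanged. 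For the second term, I would take expectation and apply Jensen's inequality in the form $\mathbb{E}[\norm{X}] \le \sqrt{\mathbb{E}[\norm{X}^2]}$ together with Lemma~\ref{lem:gradient_difference}, which controls exactly $\mathbb{E}[\norm{\sum_i \nabla_i f_t(x_{t-\bar{H}:t}) - \nabla \tilde{f}_t(x_{t+\bar{H}})}^2]$. Note that Lemma~\ref{lem:gradient_difference} as stated already gives a bound on the quantity with the square, and the corollary's right-hand side $3\beta\eta_{t-\bar{H}}H^{5/2}d/\delta$ matches the square root of $8\beta^2\eta_{t-\bar{H}}^2 H^5 d^2/\delta^2 = 8(\beta\eta_{t-\bar{H}}H^{5/2}d/\delta)^2$ up to the constant $\sqrt{8} < 3$; I should double-check whether the lemma statement's right-hand side is intended to be the squared bound or already square-rooted, and phrase the final step so the constants line up (taking the larger constant $3$ to be safe).

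Putting these together, the outer expectation of the left side is at most $2d\delta H^{3/2}G + \mathbb{E}\!\left[\norm{\sum_i \nabla_i f_t(x_{t-\bar{H}:t}) - \nabla \tilde{f}_t(x_{t+\bar{H}})}\right] \le 2d\delta H^{3/2}G + 3\beta\eta_{t-\bar{H}}H^{5/2}d/\delta$, which is the claim. The only genuine subtlety is bookkeeping: making sure the expectation in the statement is over all algorithmic randomness and that it is legitimate to pull the deterministic bound from Corollary~\ref{lem:unbiased_estimator} out, and that Jensen is applied to the second term only. There is essentially no real obstacle here — this corollary is a one-line consequence of the two results immediately above it — so the "hard part" is merely being careful that the norm on the outside versus inside matches what the earlier lemmas deliver and that the constant $3$ absorbs the $\sqrt{8}$ from Jensen applied to Lemma~\ref{lem:gradient_difference}.
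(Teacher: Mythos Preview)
Your proposal is correct and matches the paper's own proof exactly: the paper states in one line that the corollary ``follows from Corollary~\ref{lem:unbiased_estimator} and Lemma~\ref{lem:gradient_difference} due to the triangle inequality,'' which is precisely the decomposition and combination you describe. Your observation about the $\sqrt{8}<3$ constant and the squared-versus-unsquared ambiguity in Lemma~\ref{lem:gradient_difference} is also on point, as the paper's lemma proof ends with the same Jensen step $\mathbb{E}[\norm{X}]^2\le\mathbb{E}[\norm{X}^2]$ you invoke.
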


\begin{proof}
This follows from Corollary \ref{lem:unbiased_estimator} and Lemma \ref{lem:gradient_difference} due to the triangle inequality.
\end{proof}

\subsection{Proof of Theorem \ref{thm:bco_main}} \label{sec:proofBCO}

We start by performing a reduction from bounding the regret over $f_t(y_{t-\bar{H}:t}) - \tilde{f}_t(x_t)$ to that over $\tilde{f}_t(x^{\star}) - \tilde{f}_t(x^{\star}_{\delta})$.

\begin{lemma} \label{lem:dist_to_policy}
We have that:
\begin{align*}
    \E\left[ \sum_{t=H}^T \left( f_t(y_{t-\bar{H}:t}) - \tilde{f}_{t} (x^{\star}) \right)\right] - \E\left[ \sum_{t=H}^T \left( \tilde{f}_{t} (x_{t}) - \tilde{f}_{t} (x^{\star}_{\delta}) \right)\right] \leq 2\delta G D \sqrt{H} T + \dfrac{d G H^{2}}{\delta} \sum_{t=1}^T \eta_t
\end{align*}
\end{lemma}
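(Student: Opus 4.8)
The plan is to expand the left-hand side, regroup it into two sums, and bound each separately: one by an elementary ``perturbation'' estimate and the other by the stability bound of Corollary~\ref{cor:stability}. Since the $\tilde{f}_t(x^\star)$ and $\tilde{f}_t(x^\star_\delta)$ terms carry no randomness, the quantity to bound equals $A+B$, where
\[
A = \E\left[\sum_{t=H}^T\left(f_t(y_{t-\bar{H}:t}) - \tilde{f}_t(x_t)\right)\right], \qquad B = \sum_{t=H}^T\left(\tilde{f}_t(x^\star_\delta) - \tilde{f}_t(x^\star)\right).
\]

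For $B$: since $f_t$ is $G$-Lipschitz, $\tilde{f}_t$ is $\sqrt{H}G$-Lipschitz, because mapping $x\mapsto(x,\ldots,x)$ scales distances by $\sqrt{H}$. Because $\mathcal{K}\supseteq\mathbb{B}$ we have $(1-\delta)x^\star\in\mathcal{K}_\delta$, so the projection property of $x^\star_\delta=\Pi_{\mathcal{K}_\delta}(x^\star)$ gives $\norm{x^\star-x^\star_\delta}\le\norm{x^\star-(1-\delta)x^\star}=\delta\norm{x^\star}\le\delta D$. Hence each summand of $B$ is at most $\delta D G\sqrt{H}$, so $B\le\delta D G\sqrt{H}\,T$.

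For $A$: fix $t$ and write $f_t(y_{t-\bar{H}:t}) - \tilde{f}_t(x_t) = \big(f_t(y_{t-\bar{H}:t}) - f_t(x_{t-\bar{H}:t})\big) + \big(f_t(x_{t-\bar{H}:t}) - f_t(x_t,\ldots,x_t)\big)$. Since $y_{t-i}=x_{t-i}+\delta u_{t-i}$ and $\norm{u_{t-\bar{H}:t}}^2=\sum_{i=0}^{\bar{H}}\norm{u_{t-i}}^2=H$, $G$-Lipschitzness bounds the first bracket by $G\norm{\delta u_{t-\bar{H}:t}}=\delta G\sqrt{H}$, deterministically. For the second bracket, $G$-Lipschitzness gives $|f_t(x_{t-\bar{H}:t}) - f_t(x_t,\ldots,x_t)|\le G\norm{x_{t-\bar{H}:t}-(x_t,\ldots,x_t)}$, whose expectation is at most $G\eta_{t-\bar{H}}\,dH^2/\delta$ by Corollary~\ref{cor:stability}. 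Summing over $t=H,\ldots,T$ (at most $T$ terms) and reindexing $\sum_{t=H}^T\eta_{t-\bar{H}}=\sum_{s=1}^{T-\bar{H}}\eta_s\le\sum_{t=1}^T\eta_t$ yields $A\le\delta G\sqrt{H}\,T+\frac{dGH^2}{\delta}\sum_{t=1}^T\eta_t$. Adding the two bounds and using $D\ge1$ gives $A+B\le 2\delta D G\sqrt{H}\,T+\frac{dGH^2}{\delta}\sum_{t=1}^T\eta_t$, as claimed.

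The argument is mostly bookkeeping; the only place real content enters is the second bracket of $A$, where Corollary~\ref{cor:stability} is essential — it is precisely the deliberate delay in the gradient update (so $x_{t+1}$ uses $g_{t-\bar{H}}$) that keeps consecutive iterates close enough for this triangle-inequality estimate to be summable. Consequently the main thing to watch is correctly matching the index $\eta_{t-\bar{H}}$ (and, inside the corollary, the $\norm{g_t}$ bound it relies on); everything else is a direct application of Lipschitzness and the geometry of $\mathcal{K}_\delta$.
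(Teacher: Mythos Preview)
Your proof is correct and follows essentially the same approach as the paper: the paper splits the difference into the same two pieces (your $A$ and $B$), bounds the first via $G$-Lipschitzness plus Corollary~\ref{cor:stability}, and bounds the second via $\|x^\star-x^\star_\delta\|\le\delta D$. Your write-up is in fact a bit more careful than the paper's --- you make explicit the use of $D\ge1$ (which follows from $\mathbb{B}\subseteq\mathcal{K}$) to absorb the $\delta G\sqrt{H}\,T$ term into $2\delta DG\sqrt{H}\,T$, and you spell out the reindexing of $\sum\eta_{t-\bar{H}}$, both of which the paper leaves implicit.
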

\begin{proof} Using that $f_t$ is $G$-Lipschitz, we have that:
\begin{align*}
\E\left[ \left( f_t(y_{t-\bar{H}:t}) - \tilde{{f}}_{t} (x_{t}) \right)\right] &= \E\left[ \left( f_t(x_{t-\bar{H}:t} + \delta u_{t-\bar{H}:t}) - \tilde{f}_{t} (x_{t}) \right)\right]  \\
&\leq \E\left[ \left( f_t(x_{t-\bar{H}:t}) - \tilde{{f}}_{t} (x_{t}) \right)\right] + \delta G \sqrt{H} \\
&\leq \dfrac{d G H^{2}\eta_{t-\bar{H}}}{\delta}  + \delta G \sqrt{H} & \text{(\ref{cor:stability})}
\end{align*}
and by the properties of $\mathcal{K}_\delta$, we have that $\left\lVert x^{\star} - x^{\star}_{\delta} \right\rVert \leq \delta D$ and therefore
\begin{align*}
|\tilde{f}_t(x_\delta^{\star}) - \tilde{f}_t(x^{\star})| &\leq G \left\lVert (x^{\star}, \ldots, x^{\star}) - (x^{\star}_{\delta}, \ldots, x^{\star}_{\delta}) \right\rVert \\
&\leq \delta GD \sqrt{H}.
\end{align*}
Combining these two results concludes the proof.
\end{proof}

We now move on to bounding the regret over $\tilde{f}_t(x_t) - \tilde{f}_t(x^{\star}_{\delta})$.

\begin{observation} \label{obs:total_exp}
If we denote by $\mathbb{E}$ the expectation over the $u_t$'s and apply the law of total expectation, we have that:
\begin{align*}
\mathbb{E}\left[ (\mathbb{E}[g_{t-\bar{H}}] - g_{t-\bar{H}}) \cdot (x_{t} - x_{\delta}^*)\right] = \mathbb{E}\left[ \mathbb{E}[(\mathbb{E}[g_{t-\bar{H}}] - g_{t-\bar{H}}) \cdot (x_{t} - x_{\delta}^*) \; | \; (u_0, \ldots, u_{t-\bar{H}})]\right] = 0
\end{align*}
\end{observation}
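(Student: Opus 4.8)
The plan is to exploit the artificial delay built into Algorithm \ref{alg1}: at step $t$ the update direction is $g_{t-\bar{H}}$, and $g_{t-\bar{H}}$ is fully determined by the randomness $(u_0,\ldots,u_{t-\bar{H}})$ together with the (deterministic, or adversary-chosen) loss functions. First I would invoke Claim \ref{obs:independence}, or rather the reasoning in its proof, to argue that $x_t$ is also measurable with respect to $(u_0,\ldots,u_{t-\bar{H}})$: indeed $x_t$ is obtained from $x_{t-1}$ by subtracting $\eta_{t-1} g_{t-1-\bar{H}}$ and projecting, and unrolling this recursion back to the arbitrary initialization shows $x_t$ depends only on $g_{\bar{H}}, g_{\bar{H}+1},\ldots,g_{t-1-\bar{H}}$, each of which in turn depends only on $u_0,\ldots,u_{t-1-\bar{H}}$. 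Since $x_\delta^\star$ is a fixed (non-random) point, the entire vector $(\mathbb{E}[g_{t-\bar{H}}] - g_{t-\bar{H}})$ aside, the factor $(x_t - x_\delta^\star)$ is constant once we condition on $(u_0,\ldots,u_{t-\bar{H}})$.

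Next I would apply the law of total expectation, conditioning on $\mathcal{F}_{t-\bar{H}} \eqdef \sigma(u_0,\ldots,u_{t-\bar{H}})$:
\begin{align*}
\mathbb{E}\left[ (\mathbb{E}[g_{t-\bar{H}}] - g_{t-\bar{H}}) \cdot (x_{t} - x_{\delta}^*)\right] = \mathbb{E}\left[ \mathbb{E}\bigl[(\mathbb{E}[g_{t-\bar{H}}] - g_{t-\bar{H}}) \cdot (x_{t} - x_{\delta}^*) \mid \mathcal{F}_{t-\bar{H}}\bigr]\right].
\end{align*}
Inside the conditional expectation, $x_t - x_\delta^\star$ is $\mathcal{F}_{t-\bar{H}}$-measurable and so pulls out, leaving $\mathbb{E}[\,(\mathbb{E}[g_{t-\bar{H}}] - g_{t-\bar{H}}) \mid \mathcal{F}_{t-\bar{H}}\,]\cdot(x_t - x_\delta^\star)$. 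Here a small subtlety must be handled: the outer $\mathbb{E}[g_{t-\bar{H}}]$ is the full (unconditional) expectation of the estimator, whereas conditioning on $\mathcal{F}_{t-\bar{H}}$ fixes $x_{t-\bar{H}:t-\bar{H}+\bar{H}}$... actually $g_{t-\bar{H}}$ involves $u_{t-2\bar{H}:t-\bar{H}}$, all of which lie in $\mathcal{F}_{t-\bar{H}}$, so conditioning on $\mathcal{F}_{t-\bar{H}}$ actually determines $g_{t-\bar{H}}$ completely, giving $\mathbb{E}[g_{t-\bar{H}}\mid\mathcal{F}_{t-\bar{H}}] = g_{t-\bar{H}}$; meanwhile $\mathbb{E}[g_{t-\bar{H}}]$ is a constant, and the claim's assertion that the whole thing is zero then forces me to read the statement more carefully — the intended conditioning must be only on the randomness \emph{outside} of what generates $g_{t-\bar{H}}$, i.e. on $(u_0,\ldots,u_{t-2\bar{H}})$ or on the $\sigma$-algebra generated by everything used to build $x_t$ but not $u_{t-2\bar{H}+1:t-\bar{H}}$. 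I would therefore reformulate: let $\mathcal{G}$ be the $\sigma$-algebra generated by all $u_i$ except those appearing in the definition of $g_{t-\bar{H}}$; then $x_t$ is $\mathcal{G}$-measurable (as the delay ensures $x_t$ is built from $g_{\bar{H}},\ldots,g_{t-1-\bar{H}}$ which avoid $u_{t-2\bar{H}+1:t-\bar H}$ — this is precisely Claim \ref{obs:independence}), so $\mathbb{E}[(\mathbb{E}[g_{t-\bar{H}}] - g_{t-\bar{H}})\cdot(x_t - x_\delta^\star)\mid\mathcal{G}] = (x_t - x_\delta^\star)\cdot\mathbb{E}[\mathbb{E}[g_{t-\bar{H}}] - g_{t-\bar{H}}\mid\mathcal{G}] = (x_t - x_\delta^\star)\cdot(\mathbb{E}[g_{t-\bar{H}}] - \mathbb{E}[g_{t-\bar{H}}\mid\mathcal{G}])$, and the inner $\mathbb{E}[g_{t-\bar{H}}\mid\mathcal{G}]$ equals $\mathbb{E}[g_{t-\bar{H}}]$ because $g_{t-\bar{H}}$'s defining randomness is independent of $\mathcal{G}$, so the bracket vanishes.

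The main obstacle, then, is not any hard estimate but getting the conditioning exactly right: one must pin down precisely which block of exploration vectors enters $g_{t-\bar{H}}$, verify via the argument of Claim \ref{obs:independence} that $x_t$ avoids that block, and then the martingale-difference cancellation is immediate. I would phrase the final writeup so that it conditions on the $\sigma$-algebra $\mathcal{F}$ generated by $(u_0,\ldots,u_{t-\bar H})$ \emph{together with} the understanding (as in the proof of Claim \ref{obs:independence}) that $x_t$ depends on these only through indices up to $t-1-\bar H$, hence is independent of the fresh randomness $u_{t-\bar H+1}$ — matching the paper's stated conditioning set $(u_0,\ldots,u_{t-\bar H})$ — and note that $\mathbb{E}[g_{t-\bar H}\mid u_{0:t-\bar H}]$ is what the paper means by the inner expectation, so that $x_t - x_\delta^\star$ being measurable w.r.t. this conditioning yields the result directly once we observe $\mathbb{E}[g_{t-\bar H}] = \mathbb{E}[\mathbb{E}[g_{t-\bar H}\mid u_{0:t-\bar H}]]$ under the tower property. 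I expect this to take only a few lines.
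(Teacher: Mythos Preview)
Your instinct that the conditioning must be chosen carefully is right, but the fix via $\mathcal{G}$ (all $u_i$ except those entering $g_{t-\bar H}$) does not work. The estimator $g_{t-\bar H}$ uses $u_{t-2\bar H},\ldots,u_{t-\bar H}$, while $x_t$ is built from $g_{\bar H},\ldots,g_{t-1-\bar H}$, and already $g_{t-1-\bar H}$ uses $u_{t-2\bar H},\ldots,u_{t-1-\bar H}$; consecutive gradient estimates share $\bar H$ of their $H$ exploration vectors. Thus $x_t$ is \emph{not} $\mathcal{G}$-measurable, and Claim~\ref{obs:independence} does not give you this: it only asserts independence of $x_t$ from $u_{t-\bar H},\ldots,u_t$, not from the full block $u_{t-2\bar H:t-\bar H}$. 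Your companion claim that $\mathbb{E}[g_{t-\bar H}\mid\mathcal{G}]=\mathbb{E}[g_{t-\bar H}]$ fails for the same reason, since $g_{t-\bar H}$ also depends on $x_{t-2\bar H:t-\bar H}$, which is a function of the early $u$'s that remain in $\mathcal{G}$.

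The conditioning that actually makes $x_t$ measurable is $\sigma(u_0,\ldots,u_{t-\bar H-1})$; under it the only fresh randomness in $g_{t-\bar H}$ is the single vector $u_{t-\bar H}$, and then the tower argument is one line. In other words, the ``inner'' $\mathbb{E}[g_{t-\bar H}]$ should be read as $\mathbb{E}[g_{t-\bar H}\mid u_{0:t-\bar H-1}]$, not as the full expectation and not as a conditional on $u_{0:t-\bar H}$ (which, as you noticed, already fixes $g_{t-\bar H}$). The paper supplies no proof and is loose about this; your final paragraph conflates the two readings. Be aware also that this conditional expectation is not literally the quantity bounded in Corollary~\ref{lemma2} (which averages over the whole block $u_{t-2\bar H:t-\bar H}$), so when you move to Lemma~\ref{lem:regret_f_hat_tilde} you must either redo the bias estimate for this partial average or absorb the extra discrepancy into the error term.
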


\begin{observation} \label{obs:conv} By convexity of $\tilde{{f}}_{t}$, we have that: 
\begin{align*}
    \tilde{{f}}_{t}(x_{t}) - \tilde{{f}}_{t}(x^{\star}_{\delta}) \leq \nabla \tilde{f}_{t}(x_{t})^\top (x_{t} - x^{\star}_{\delta})
\end{align*}
\end{observation}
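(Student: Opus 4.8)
The plan is to recognize Observation \ref{obs:conv} as nothing more than the standard first-order characterization of convexity applied to the single-argument function $\tilde{f}_t$, so the only real content is to verify that $\tilde{f}_t$ inherits convexity and differentiability from $f_t$. Once that is in hand, the claimed inequality drops out of the gradient inequality by a one-line rearrangement.

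First I would establish that $\tilde{f}_t : \mathcal{K} \to \mathbb{R}$ is convex. By the assumptions stated before Theorem \ref{thm:bco_main}, each $f_t$ is convex as a function of its concatenated argument $x_{t-\bar{H}:t} \in \mathcal{K}^H$. Since $\tilde{f}_t(x) = f_t(x, \ldots, x)$ is the composition of $f_t$ with the diagonal embedding $x \mapsto (x, \ldots, x)$, which is linear (hence affine), and the composition of a convex function with an affine map is convex, it follows that $\tilde{f}_t$ is convex on the convex set $\mathcal{K}$. Differentiability transfers in the same way: $f_t$ is $\beta$-smooth and therefore differentiable, and the diagonal map is linear, so $\tilde{f}_t$ is differentiable, with its gradient $\nabla \tilde{f}_t$ given explicitly by the chain-rule identity of Lemma \ref{lem:chain_rule}.

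Next I would invoke the first-order condition for convex differentiable functions: for any such $g$ on $\mathcal{K}$ and any $a, b \in \mathcal{K}$ one has $g(b) \geq g(a) + \nabla g(a)^\top (b - a)$. Applying this with $g = \tilde{f}_t$, $a = x_t$, and $b = x^\star_\delta$ gives $\tilde{f}_t(x^\star_\delta) \geq \tilde{f}_t(x_t) + \nabla \tilde{f}_t(x_t)^\top (x^\star_\delta - x_t)$, and rearranging yields exactly $\tilde{f}_t(x_t) - \tilde{f}_t(x^\star_\delta) \leq \nabla \tilde{f}_t(x_t)^\top (x_t - x^\star_\delta)$. Both evaluation points are legitimate: $x_t \in \mathcal{K}_\delta \subseteq \mathcal{K}$ by the projection step of Algorithm \ref{alg1}, and $x^\star_\delta = \Pi_{\mathcal{K}_\delta}(x^\star) \in \mathcal{K}_\delta \subseteq \mathcal{K}$ by definition.

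There is no substantive obstacle here, as the statement is a direct instance of the gradient inequality. The single point meriting a word of care is confirming that convexity of $f_t$ in its full vector argument descends to $\tilde{f}_t$ under restriction to the diagonal, which the affine-composition argument settles immediately.
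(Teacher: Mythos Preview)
Your proposal is correct and matches the paper's approach: the paper states this as an observation without proof, relying on the reader to recognize it as the standard first-order gradient inequality for convex differentiable functions. Your write-up simply fills in the routine verification that $\tilde{f}_t$ inherits convexity and differentiability from $f_t$ via the affine diagonal embedding, which is exactly the intended justification.
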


\begin{lemma} \label{lem:regret_f_hat_tilde}
The delayed regret in terms of $\tilde{{f}}$ satisfies:
\begin{align*}
    \mathbb{E} \left[ \sum_{t = H}^{T} \tilde{{f}}_{t}(x_{t}) \right] - \sum_{t = H}^{T} \tilde{f}_t(x^{\star}_{\delta}) \leq \frac{D^2}{\eta_{T}} + \frac{d^2 H}{2\delta^2} \cdot \sum_{t = 1}^{T} \eta_t + 3 \frac{\beta H^{3}dD}{\delta} \sum_{t = 1}^{T} \eta_{t} + 2 d \delta H^{3/2} GDT
\end{align*}
\end{lemma}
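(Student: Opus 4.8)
The plan is to run a delayed online-gradient-descent analysis. The key observation is that the update $x_{t+1}=\Pi_{\mathcal{K}_{\delta}}[x_{t}-\eta_{t}g_{t-\bar{H}}]$ is an OGD step taken at the \emph{current} iterate $x_{t}$ with direction $g_{t-\bar{H}}$, and that by Corollary~\ref{lemma2} applied at index $t-\bar{H}$ (so that $x_{(t-\bar{H})+\bar{H}}=x_{t}$), the vector $g_{t-\bar{H}}$ is a good proxy for $\nabla\tilde f_{t-\bar{H}}(x_{t})$, with expected bias at most $2d\delta H^{3/2}G+O(\beta\eta_{t-2\bar{H}}H^{5/2}d/\delta)$. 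So the method is effectively projected OGD on the loss sequence $\{\tilde f_{s}\}$, in which the gradient step for $\tilde f_{s}$ is taken only once the iterate has advanced to $x_{s+\bar{H}}$; the proof will pay for this delay through the stability bounds of Section~\ref{sec:gradient_estimator}.

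First I would carry out the standard OGD telescoping for the stored iterates. Since $\Pi_{\mathcal{K}_{\delta}}$ is non-expansive and $x^{\star}_{\delta}\in\mathcal{K}_{\delta}$,
$$\langle g_{t-\bar{H}},\,x_{t}-x^{\star}_{\delta}\rangle \;\le\; \frac{1}{2\eta_{t}}\bigl(\|x_{t}-x^{\star}_{\delta}\|^{2}-\|x_{t+1}-x^{\star}_{\delta}\|^{2}\bigr)+\frac{\eta_{t}}{2}\|g_{t-\bar{H}}\|^{2}.$$
Summing over $t=H,\dots,T$ and applying Abel summation with $\{\eta_{t}\}$ non-increasing and $\|x_{t}-x^{\star}_{\delta}\|\le D$ bounds the telescoping part by $D^{2}/(2\eta_{T})$, while Lemma~\ref{lem:gt_norm} gives $\mathbb{E}\|g_{t-\bar{H}}\|^{2}\le d^{2}H/\delta^{2}$; hence $\mathbb{E}\bigl[\sum_{t=H}^{T}\langle g_{t-\bar{H}},x_{t}-x^{\star}_{\delta}\rangle\bigr]\le D^{2}/(2\eta_{T})+\tfrac{d^{2}H}{2\delta^{2}}\sum_{t=1}^{T}\eta_{t}$, which already accounts for the first two terms of the bound.

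Next I would convert this into a regret bound. By Observation~\ref{obs:total_exp} the mean-zero part $\mathbb{E}[(\mathbb{E}[g_{t-\bar{H}}]-g_{t-\bar{H}})^{\top}(x_{t}-x^{\star}_{\delta})]$ vanishes — here Claim~\ref{obs:independence} is what makes $x_{t}$ measurable with respect to the noises preceding the fresh randomness in $g_{t-\bar{H}}$ — and the residual bias, multiplied by $\|x_{t}-x^{\star}_{\delta}\|\le D$ and summed over $t$, contributes $2d\delta H^{3/2}GDT+O(\beta H^{5/2}dD\,\delta^{-1}\sum_{t}\eta_{t})$ by Corollary~\ref{lemma2}. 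Convexity of $\tilde f_{t-\bar{H}}$ (Observation~\ref{obs:conv}) gives $\tilde f_{t-\bar{H}}(x_{t})-\tilde f_{t-\bar{H}}(x^{\star}_{\delta})\le\langle\nabla\tilde f_{t-\bar{H}}(x_{t}),x_{t}-x^{\star}_{\delta}\rangle$, so the previous two paragraphs bound $\mathbb{E}\bigl[\sum_{t=H}^{T}(\tilde f_{t-\bar{H}}(x_{t})-\tilde f_{t-\bar{H}}(x^{\star}_{\delta}))\bigr]$. Re-indexing $s=t-\bar{H}$ rewrites this as (essentially) $\mathbb{E}\bigl[\sum_{s}(\tilde f_{s}(x_{s+\bar{H}})-\tilde f_{s}(x^{\star}_{\delta}))\bigr]$; to recover the target $\mathbb{E}\bigl[\sum_{t=H}^{T}(\tilde f_{t}(x_{t})-\tilde f_{t}(x^{\star}_{\delta}))\bigr]$ I would replace each $\tilde f_{s}(x_{s+\bar{H}})$ by $\tilde f_{s}(x_{s})$, which costs $\sqrt{H}G\,\mathbb{E}\|x_{s}-x_{s+\bar{H}}\|\le 3\sqrt{H}G\,\eta_{s-\bar{H}}dH^{2}/\delta$ per term by Corollary~\ref{cor:stability} (using that $\tilde f_{s}$ is $\sqrt{H}G$-Lipschitz), the $O(\bar{H})$ boundary terms from the mismatched index ranges each being at most $2$. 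Collecting all of these contributions and absorbing the lower-order $H$-powers into the $\beta$-term and into $D^{2}/\eta_{T}$ produces the stated inequality.

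The main obstacle is the delay bookkeeping. One must choose the conditioning in Observation~\ref{obs:total_exp} carefully so that the gradient noise is orthogonal in expectation to $x_{t}-x^{\star}_{\delta}$ even though $g_{t-\bar{H}}$ and the iterate $x_{t}$ share exploration noises $u_{t}$, and one must correctly compose the two distinct shifts at play: $g_{t}$ estimates the gradient of $\tilde f_{t}$ at the \emph{future} iterate $x_{t+\bar{H}}$, whereas $g_{t-\bar{H}}$ is the vector actually used to form $x_{t+1}$. It is precisely this mismatch that forces the re-indexing step and the stability bridge above; everything else is the routine projected-OGD potential argument combined with the estimator bounds already established.
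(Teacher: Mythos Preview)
Your proposal is correct and follows the same route as the paper: the projected-OGD potential inequality for the update $x_{t+1}=\Pi_{\mathcal K_\delta}[x_t-\eta_t g_{t-\bar H}]$, combined with Observation~\ref{obs:total_exp} to kill the noise term, Corollary~\ref{lemma2} to bound the bias, and convexity (Observation~\ref{obs:conv}) to pass to function values. The paper's write-up is slightly more compressed: it decomposes $\nabla\tilde f_{t}(x_{t})$ directly as $g_{t-\bar H}+(\mathbb E[g_{t-\bar H}]-g_{t-\bar H})+(\nabla\tilde f_{t}(x_{t})-\mathbb E[g_{t-\bar H}])$ and invokes Corollary~\ref{lemma2} on the last piece, thereby skipping the re-indexing $s=t-\bar H$ and the stability bridge $\tilde f_s(x_{s+\bar H})\to\tilde f_s(x_s)$ that you add. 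Strictly speaking Corollary~\ref{lemma2} applied at index $t-\bar H$ controls $\nabla\tilde f_{t-\bar H}(x_t)$ rather than $\nabla\tilde f_{t}(x_t)$, so your version is the more careful one; the extra Lipschitz/stability term you incur is of order $GdH^{5/2}\delta^{-1}\sum_t\eta_t$ and, as you say, folds into the $\beta H^{3}dD/\delta$ term up to constants.
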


\begin{proof}
First, observe that:
\begin{align}
    \left\lVert x_{t+1} - x^{\star}_{\delta} \right\rVert^2 & = \left\lVert \Pi_{\mathcal{K}_{\delta}}[x_{t} - \eta_{t} g_{t-\bar{H}}] - x^{\star}_{\delta} \right\rVert^2 \notag \\
    & \leq \left\lVert x_{t} - \eta_{t} g_{t- \bar{H}} - x^{\star}_{\delta} \right\rVert^2 \notag \\
    & = \left\lVert x_{t} - x^{\star}_{\delta} \right\rVert^2 + \left\lVert \eta_{t} g_{t-\bar{H}} \right\rVert^2 - 2 \eta_{t} g_{t-\bar{H}}^\top \cdot (x_{t} - x^{\star}_{\delta}) \notag \\
    \Rightarrow \quad \quad 2 g_{t-\bar{H}}^\top \cdot (x_{t} - x^{\star}_{\delta}) & \leq \frac{\left\lVert x_{t} - x^{\star}_{\delta} \right\rVert^2 - \left\lVert x_{t+1} - x^{\star}_{\delta} \right\rVert^2}{\eta_{t}} + \eta_{t} \left\lVert g_{t-\bar{H}} \right\rVert^2. \label{eq:it_ineq}
\end{align}
Therefore, we get:
\begin{align*}
    \mathbb{E} \left[ \sum_{t = H}^{T} \tilde{{f}}_{t}(x_{t}) \right] - \sum_{t = H}^{T} \tilde{{f}}_{t}(x^{\star}_{\delta})
    & = \mathbb{E} \left[ \sum_{t = H}^{T} \left(\tilde{{f}}_{t}(x_{t}) - \tilde{{f}}_{t}(x^{\star}_{\delta}) \right) \right] \\
    & \leq \mathbb{E} \left[ \sum_{t = H }^{T} \nabla \tilde{f}_{t}(x_{t})^\top \left(x_{t} - x^{\star}_{\delta} \right) \right] \\
    & = \mathbb{E} \left[ \sum_{t = H}^{T} \left(g_{t-\bar{H}} + (\E[g_{t-\bar{H}}] - g_{t-\bar{H}}) + (\nabla \tilde{f}_{t} (x_{t}) - \E[g_{t-\bar{H}}])\right)^\top \left(x_{t} - x^{\star}_{\delta} \right) \right]
\end{align*}
By equation \ref{eq:it_ineq}, observation \ref{obs:total_exp} and Cauchy-Schwarz, we have:
\begin{align*}
    \leq & \frac{1}{2} \mathbb{E} \left[ \sum_{t = H}^{T} \left(\frac{\left\lVert x_{t} - x^{\star}_{\delta}\right\rVert^2 - \left\lVert x_{t+1} - x^{\star}_{\delta} \right\rVert^2}{\eta_{t}} + \eta_{t} \left\lVert g_{t-\bar{H}} \right\rVert^2 \right) \right] + 0 \\
    & + \mathbb{E} \left[ \sum_{t = H }^{T} \left\lVert \nabla \tilde{{f}}_{t}(x_{t}) - \mathbb{E}[g_{t-\bar{H}}] \right\rVert \cdot \left\lVert x_{t} - x^{\star}_{\delta} \right\rVert \right] \\
    \leq & \frac{1}{2} \mathbb{E} \left[ \sum_{t = H}^{T} \left\lVert x_{t} - x^{\star}_{\delta} \right\rVert^2 \left(\frac{1}{\eta_{t}} - \frac{1}{\eta_{t-1}}  \right) + \frac{||x_{H} - x^{\star}_{\delta}||^2}{\eta_{\bar{H}}} \right] + \frac{d^2 H}{2\delta^2} \cdot \sum_{t = H}^{T} \eta_t & \text{(\ref{lem:gt_norm})} \\
    & + \sum_{t=H}^T \left( 2 d \delta H^{3/2} G + 3 \frac{\beta \eta_{t-\bar{H}} H^{5/2}d}{\delta} \right) \cdot D & \text{(\ref{lemma2})}
\end{align*}
Observing that $\eta_t$ is decreasing, we have:
\begin{align*}
    \mathbb{E} \left[ \sum_{t = H}^{T} \tilde{{f}}_{t}(x_{t}) \right] - \sum_{t = H}^{T} \tilde{f}_{t}(x^{\star}_{\delta}) & \leq D^2 \left( \frac{1}{2\eta_{\bar{H}}} + \frac{1}{2\eta_{T}} \right) + \frac{d^2 H}{2\delta^2} \cdot \sum_{t = 1}^{T} \eta_t + 3 \frac{\beta H^{3}dD}{\delta} \sum_{t = 1}^{T} \eta_{t} + 2 d \delta H^{3/2} GDT  \\
    & \leq \frac{D^2}{\eta_{T}} + \frac{d^2 H}{2\delta^2} \cdot \sum_{t = 1}^{T} \eta_t + 3 \frac{\beta H^{3}dD}{\delta} \sum_{t = 1}^{T} \eta_{t} + 2 d \delta H^{3/2} GDT 
\end{align*}
\end{proof}

We are now able to conclude our main proof.
\paragraph{Theorem \ref{thm:bco_main}.}
Setting $\eta_t = \Theta(t^{-3/4} H^{-3/2} d^{-1} D^{2/3} G^{-2/3})$ and $\delta = \Theta(T^{-1/4} D^{1/3} G^{-1/3})$, Algorithm \ref{alg1} produces a sequence $\{y_t\}_{t=0}^{T}$ that satisfies:
\begin{align*}
\mathop{\text{Regret}} \leq \mathcal{O}\left(T^{3/4} H^{3/2} d D^{4/3} G^{2/3} \right)
\end{align*} 

\begin{proof}
Putting \ref{lem:dist_to_policy} and \ref{lem:regret_f_hat_tilde} together, we get: 
\begin{align*}
\mathop{\text{Regret}} &= \mathbb{E} \left[  \sum_{t=H}^T \left( f_t(y_{t-\bar{H}:t}) - \tilde{f}_t(x^{\star}) \right) \right] \\
&\leq \left( \frac{D^2}{\eta_{T}} + \frac{d^2 H}{2\delta^2} \cdot \sum_{t = 1}^{T} \eta_t + 3 \frac{\beta H^{3}dD}{\delta} \sum_{t = 1}^{T} \eta_{t} + 2 d \delta H^{3/2} GDT \right) + \left( 2\delta G D \sqrt{H} T + \dfrac{d G H^{2}}{\delta} \sum_{t=1}^T \eta_t \right) \\
&\leq \frac{D^2}{\eta_{T}} + \frac{d^2 H}{2\delta^2} \cdot \sum_{t = 1}^{T} \eta_t + 4 \frac{\beta H^{3}dGD}{\delta} \sum_{t = 1}^{T} \eta_{t} + 4 d \delta H^{3/2} GDT 
\end{align*}
Setting the parameters as specified yields the desired result, concluding the proof of theorem \ref{thm:bco_main}.
\end{proof}

\section{Application to Online Control of LDS}\label{sec:control}

In this section we introduce the setting of online bandit control and relevant assumptions, with the objective of converting our control problem to one of BCO with memory, which will allow us to use algorithm \ref{alg1} to control linear dynamical systems using only bandit feedback.

In online control, the learner iteratively observes a state $x_t$, chooses an action $u_t$, and then suffers a convex cost $c_t(x_t, u_t)$ selected by an adversary. We assume for simplicity of analysis that $x_0=0$. Since the adversary can set $w_0$ arbitrarily, this does not change the generality of this setting. Because we are working in the bandit setting, we may only observe the value of $c_t(x_t, u_t)$ and have no access to the function $c_t$ itself. Therefore, the learner cannot apply $c_t$ to a different set of inputs, nor take gradients over them. As such, previous approaches to non-stochastic control such as in \cite{agarwal2019online,hazan2019nonstochastic} are no longer viable, as these rely on the learner being capable of executing both of these operations.

\paragraph{Assumptions.} From hereon out, we assume that the perturbations are bounded, i.e. $\norm{w_t} \leq W$, and that all $x_t$'s and $u_t$'s are bounded such that $\norm{x_t}, \norm{u_t} \leq D$. We additionally bound the norm of the dynamics $\norm{A} \leq \kappa_A, \norm{B} \leq \kappa_B$, and assume that the cost functions $c_t$ are Lipschitz and $\beta$-smooth.

\paragraph{Comparator class.} As in the existing literature, we measure our performance against the class of disturbance action controllers.

\begin{definition}\textbf{(Disturbance Action Controller)}
A disturbance action controller is parametrized by a sequence of $H$ matrices $M = [M^{[i]}]_{i=1}^{H}$ and a stabilizing $K$, acting according to $u_t = -K x_t + \sum\limits_{s=1}^{H} M^{[s]} w_{t-s}$.
\end{definition}

\noindent\textbf{DAC Policy Class.}
We define $\mathcal{M}$ to be the set of all disturbance action controllers (for a fixed $H$ and $K$) with geometrically decreasing component norms, i.e. $\mathcal{M} \doteq \{M \text{ s.t. } ||M^{[i]}|| \leq \kappa^3 \kappa_B(1-\gamma)^i\}$.

\paragraph{Performance metric.} For algorithm $\mathcal{A}$ that goes through the states $x_0, \ldots, x_T$, selects actions $u_0, \ldots, u_T$, and observes the sequence of perturbations $w=(w_0, \ldots, w_T)$, we define the expected total cost over any randomness in the algorithm given the observed disturbances to be
\begin{align*}
    J_T\left(\mathcal{A} | w \right) & = \mathop{\mathbb{E}}\limits_{\mathcal{A}} \left[\sum_{t=0}^T c_t(x_t, u_t)\right].
\end{align*}

With some slight abuse of notation, we will use $J_T\left(M | w \right)$ to denote the cost of the fixed DAC policy that chooses $u_t = -Kx_t + \sum\limits_{s=1}^{H} M^{[s]} w_{t-s}$ and observes the same perturbation sequence $w$. Following the literature on non-stochastic control, our metric of performance is regret, which for algorithm $\A$ is defined as
\begin{align}
\mathop{\text{Regret}} = \sup_{w_{1:T}}\left[  J_T\left(\mathcal{A}|w\right) - \min_{M \in \mathcal{M}}  \left[J_T\left(M|w\right)\right] \right]. \label{regret_form_2}
\end{align}

\section{Non-stochastic control of known systems}\label{sec:known}

We now give an algorithm for controlling known time-invariant linear dynamical systems in the bandit setting. Our approach is to design a disturbance action controller and to train it using our algorithm for BCO with memory. Formally, at time $t$ we choose the action $u_t = -Kx_t + \sum_{i=1}^H M_t^{[i]} w_{t-i} \label{controller_dyn}$ where $M_t = \{M_t^{[1]}, \ldots, M_t^{[H]}\} \in \mathbb{R}^{H \times m \times n}$ are the learnable parameters and we denote $w_t = 0, \; \forall t < 0$, for convenience. Note that $K$ does not update over time, and only exists to make sure that the system remains stable under the initial policy. \\

In order to train these controllers in the bandit setting, we identify the costs $c_t(x_t, u_t)$ with a loss function with memory that takes as input the past $H$ controllers $M_{t-\bar{H}}, \ldots, M_t$, and apply our results from algorithm \ref{alg1}. We denote the corresponding Minkowski subset of $\mathcal{M}$ by $\mathcal{M}_{\delta}$. Our algorithm is given below, and the main performance guarantee for it is given in the Theorem \ref{thm:control-known}, along with its proof.

\begin{algorithm}
\caption{Bandit Perturbation Controller}
\label{alg2}
\begin{algorithmic}[1]
    \STATE \textbf{Input:} $K, H, T, \{ \eta_t \}, \delta$, and $\mathcal{M}$
    \STATE Initialize $M_{0} = \cdots = M_{\bar{H}} \in \mathcal{M}_{\delta}$ arbitrarily
    \STATE Sample $\epsilon_{0}, \ldots, \epsilon_{\bar{H}} \in_{\mathbb{R}} S_1^{H \times m \times n}$
    \STATE Set $\widetilde{M}_i = M_i + \epsilon_i$ for $i = 0, \ldots, \bar{H}$
    \STATE Set $g_i = 0$ for $i = -\bar{H}, \ldots, 0, \ldots, \bar{H}$
    \FOR {$t = 0, \ldots, T$} 
        \STATE choose action $u_t = -K x_t + \sum_{i=1}^H \widetilde{M}_t^{[i]} w_{t-i}$
        \STATE suffer loss $c_t(x_{t}, u_{t})$
        \STATE observe new state $x_{t+1}$
        \STATE record $w_t = x_{t+1} - A x_t - B u_t$
        \STATE store $g_t = \dfrac{mnH}{\delta} c_t(x_{t}, u_{t}) \sum\limits_{i=0}^{\bar{H}} \epsilon_{t-i} \:$ if $t \geq H$ else 0
        \STATE set $M_{t+1} = \Pi_{\mathcal{M}_{\delta}} \left[M_{t} - \eta_{t} \, g_{t-\bar{H}} \right]$
        \STATE sample $\epsilon_{t+1} \in_\text{R} S_{1}^{H \times m \times n}$
        \STATE set $\widetilde{M}_{t+1} = M_{t+1} + \delta \epsilon_{t+1}$
        \ENDFOR
    \STATE return
\end{algorithmic}
\end{algorithm}

\begin{theorem}\label{thm:control-known}
If we set $\eta_t$ and $\delta$ as in theorem $\ref{thm:bco_main}$ and $H = \Theta\left(\log{T}\right)$, the regret incurred by Algorithm \ref{alg2} satisfies
\begin{align*}
    \text{\emph{Regret}} \leq \mathcal{O}\left(T^{3/4}\log^{5/2}T\right).
\end{align*}
\end{theorem}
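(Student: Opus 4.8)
The plan is to reduce bandit control of the known system to the BCO-with-memory problem solved by Theorem~\ref{thm:bco_main}, run over the decision set $\mathcal M$, and to control the error introduced by the reduction. \textbf{Step 1 (the memory loss).} Following the convex-relaxation idea of \cite{agarwal2019online,hazan2019nonstochastic}: since $K$ is stabilizing, the closed-loop map $\Apicl = A - BK$ is $(1-\gamma)$-contractive in a suitable norm, and iterating $x_{t+1}=Ax_t+Bu_t+w_t$ with $u_t=-Kx_t+\sum_{i=1}^H\widetilde M_t^{[i]}w_{t-i}$ shows $x_t$ is an affine function of all past controllers and disturbances. Define the \emph{ideal state} $y_t(M_{t-\bar{H}:t})$ and \emph{ideal action} $v_t(M_{t-\bar{H}:t})$ to be the state/action obtained when $M_{t-\bar{H}},\dots,M_t$ are played over the last $H$ steps and everything before is truncated to zero, and set $f_t(M_{t-\bar{H}:t})\doteq c_t\big(y_t(\cdot),v_t(\cdot)\big)$. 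Because the $-Kx_t$ term does not involve the learnable matrices, $y_t$ and $v_t$ are affine in each argument with coefficients that are products of at most $2H$ copies of $\Apicl$ times disturbances of norm $\le W$; hence, using convexity, Lipschitzness and $\beta$-smoothness of $c_t$, each $f_t$ is convex in $(M_{t-\bar{H}},\dots,M_t)$ jointly, $G$-Lipschitz, $\beta'$-smooth and bounded by $C$, with $G,\beta',C=\poly(\kappa_A,\kappa_B,\kappa,\gamma^{-1},W,D)\cdot\poly(H)$ (contractivity makes the relevant geometric sums converge). The set $\mathcal M$ has diameter $D_{\mathcal M}=O(\kappa^3\kappa_B/\sqrt\gamma)$ and ambient dimension $d=Hmn$; boundedness by $C>1$ is handled by the remark after Lemma~\ref{lem:gt_norm} (rescale the estimator by $1/C$, pay a factor $C$), which is exactly why line~\ref{gt_def} of Algorithm~\ref{alg2} uses the constant $\tfrac{mnH}{\delta}=\tfrac{d}{\delta}$.

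\textbf{Step 2 (truncation and the regret decomposition).} Since $x_0=0$ and $\Apicl$ contracts at rate $1-\gamma$, the true $(x_t,u_t)$ produced by Algorithm~\ref{alg2} differ from $\big(y_t(\widetilde M_{t-\bar{H}:t}),v_t(\widetilde M_{t-\bar{H}:t})\big)$ by $\poly(\text{params})(1-\gamma)^{H}$; taking the hidden constant in $H=\Theta(\log T)$ large enough makes this $1/\poly(T)$, hence negligible everywhere — in particular in $\big|\tsum_t c_t(x_t,u_t)-\tsum_t f_t(\widetilde M_{t-\bar{H}:t})\big|$, in $\big|\min_{M\in\mathcal M}J_T(M|w)-\min_{M\in\mathcal M}\tsum_t f_t(M,\dots,M)\big|$, and in the error from feeding the estimator $g_t$ of Algorithm~\ref{alg2} the observed scalar $c_t(x_t,u_t)$ in place of $f_t(\widetilde M_{t-\bar{H}:t})$. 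Recording $w_t=x_{t+1}-Ax_t-Bu_t$ is exact for a known system, so the $f_t$'s carry the correct adversary-chosen coefficients, and since $\epsilon_{t+1}$ is drawn fresh each round the independence used in Claim~\ref{obs:independence} holds. (The fact that the same controller was not actually played on all of the last $H$ steps is not patched here — it is precisely what the memory in $f_t$ together with the slow-movement bound of Corollary~\ref{cor:stability} takes care of inside Theorem~\ref{thm:bco_main}.) Consequently, uniformly over admissible $w_{1:T}$,
\[
\mathop{\text{Regret}}\;\le\;\underbrace{\Big(\E\big[\tsum_{t=H}^{T} f_t(\widetilde M_{t-\bar{H}:t})\big]-\min_{M\in\mathcal M}\tsum_{t=H}^{T} f_t(M,\dots,M)\Big)}_{\text{BCO-with-memory regret of Algorithm~\ref{alg1} on the $f_t$'s}}\;+\;O(1).
\]

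\textbf{Step 3 (invoke Theorem~\ref{thm:bco_main} and collect logarithms).} Algorithm~\ref{alg2} is precisely Algorithm~\ref{alg1} instantiated with $\mathcal K=\mathcal M$ on the loss sequence $\{f_t\}$, which satisfies all the hypotheses of Step~1 uniformly in $w$; so with the prescribed $\eta_t,\delta$ the bracketed term is $O\big(T^{3/4}H^{3/2}d\,D_{\mathcal M}^{4/3}G^{2/3}\big)$. Substituting $d=Hmn$, $D_{\mathcal M}=O(1)$, $G=\poly(\text{system})\cdot\poly(H)$ and $H=\Theta(\log T)$, the $T$-dependence is $T^{3/4}$ and the dominant $H$-factors — the explicit $H^{3/2}$ times the dimensional $H$ — combine to $H^{5/2}=\Theta(\log^{5/2}T)$, giving $\mathop{\text{Regret}}=O(T^{3/4}\log^{5/2}T)$ once the $\poly(\text{system-constants})$ factors are suppressed.

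\textbf{Main obstacle.} The load-bearing work is Steps~1–2: proving $f_t$ is \emph{jointly convex} in its $H$ arguments (this needs the affine parametrization, hence the fact that $K$ is fixed and not optimized), pinning down $G$ and $\beta'$ with only $\poly(H)$ — ideally polylogarithmic — dependence so the final bound stays $\widetilde O(T^{3/4})$, and checking that the truncation error $(1-\gamma)^H\poly(\text{params})$ is $1/\poly(T)$ for $H=\Theta(\log T)$. These are the disturbance-based parametrization estimates of \cite{agarwal2019online,hazan2019nonstochastic} adapted to this setting; given them, the reduction to Theorem~\ref{thm:bco_main} is mechanical, and the one genuinely new ingredient — the delay $\bar{H}$ in the gradient update — has already been absorbed there.
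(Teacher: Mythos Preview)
Your proposal is correct and follows essentially the same route as the paper: identify $c_t(x_t,u_t)$ with a memory-$H$ loss $f_t$ over the last $H$ policies via the disturbance-based parametrization of \cite{agarwal2019online}, verify convexity/Lipschitzness/smoothness, invoke Theorem~\ref{thm:bco_main}, and read off the $H^{5/2}=\log^{5/2}T$ factor from $d=Hmn$ times the explicit $H^{3/2}$. If anything you are more careful than the paper's own proof, which defines $f_t$ by conditioning on the algorithm's actual state $x_{t-\bar H}$ (rather than your truncated ideal state) and never explicitly bounds the residual gap between $\sum_t f_t(M^\star,\ldots,M^\star)$ and $J_T(M^\star)$, implicitly deferring that $(1-\gamma)^{H}$-order truncation argument to \cite{agarwal2019online}.
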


\begin{proof}
Observe that, if we fix $x_{t-\bar{H}}$ (the state starting $\bar{H}$ time steps back) and the observed disturbances $w_{t-2\bar{H}-1}, \ldots, w_t$, then the state $x_t$ and action $u_t$ at $\bar{H}$ time steps later are uniquely determined by the sequence of $H$ policies $M_{t-\bar{H}}, \ldots M_t$, which means that $c_t(x_t, u_t)$ can be considered as an implicit functions of the past $H$ policies played. It then follows that $\forall c_t$, $\exists$ unique $f_t$ such that
\begin{align*}
    f_t(M_{t-\bar{H}}, \ldots M_t) \equiv c_t \left( x_t (M_{t-\bar{H}:t}), u_t (M_{t-\bar{H}:t}) | x_{t-H}, w_{t-2\bar{H}-1:t}\right).
\end{align*}

\noindent
Due to the analysis by \cite{agarwal2019online}, sections 4.3 and 4.4, we know that $f_t$ is convex with respect to $M_{t-\bar{H}}, \ldots, M_t$ when $x_{t-\bar{H}}$, $K$, and the perturbations $w_t$ are fixed. Furthermore, because $c_t$ is Lipschitz and smooth, $f_t$ is Lipschitz and smooth as well. This means we can successfully apply the approach in algorithm \ref{alg1} to our current setting. Therefore, by theorem \ref{thm:bco_main} we get that for any fixed initial $(\kappa, \gamma)$-stable $K$, if we denote the actions taken by Algorithm 2 as $u^K_0, \ldots, u^K_T$ , and $M^* = \arg\min\limits_{M \in \mathcal{M}} \sum_{t=H}^T c_t(x_t^K(M), u_t^K(M))$ the best DAC policy in hindsight, then
\begin{align*}
    & \quad \mathbb{E} \left[\sum_{t=0}^T c_t(x_t^K, u_t^K)\right] -  \sum_{t=0}^T c_t(x_t^K(M^{\star}), u_t^K(M^{\star})) \\
    & \leq \frac{D^2}{\eta_{T}} + \frac{d^2 H}{2\delta^2} \cdot \sum_{t = 1}^{T} \eta_t + 4 \frac{\beta H^{3}dGD}{\delta} \sum_{t = 1}^{T} \eta_{t} + 4 d \delta H^{3/2} GDT 
\end{align*}
\noindent
where $d=Hmn$ because each policy $M_{t}$ consists of $H$ matrices of dimension $m \times n$. Setting $H = \Theta{(\log T)}$ and the other parameters as in \ref{thm:bco_main}, we get $J_T(BPC) -  J_T(M^*) \leq \O(T^{3/4}\log^{5/2}T)$, where the factor $\log^{5/2}T$ follows from $d = \Theta{(H)}$ and $H = \Theta{(\log T)}$.

\end{proof}

\section{Non-stochastic control of unknown systems}\label{sec:unknown}

We now extend our algorithm to unknown systems, yielding a controller that achieves \textbf{sublinear regret for both unknown costs and unknown dynamics} in the non-stochastic adversarial setting. The main challenge in this scenario is that we are competing with the best linear policy $K\in \mathcal{K}$ that has access to the true dynamics. Moreover, if we don't know the the system, we are also unable to deduce the true $w_t$'s. While this initially may appear to be specially problematic for the class of perturbation-based controllers, we show that it is still possible to attain sublinear regret.

\subsection{System identification via method of moments}

Our approach to control of unknown systems follows the explore-then-commit paradigm, identifying the underlying dynamics up to some desireble accuracy using random inputs in the exploration phase, followed by running algorithm \ref{alg2} on the estimated dynamics. The approximate system dynamics allow us to obtain estimates of the perturbations, thus facilitating the execution of the perturbation-based controller. The procedure used to estimate the system dynamics is given in algorithm \ref{alg:sys-id}. 

One essential property we need is strong controllability, as defined by \cite{cohen2018online}. Controllability for a linear system is characterized by the ability to drive the system to any desired state through appropriate control inputs in the presence of deterministic dynamics, i.e. when the perturbations $w_t$ are 0.

\begin{definition}\label{sec:strongly-controllable}
A linear dynamical system with dynamics matrices $A, B$ is \textit{controllable} with controllability index $k \geq 1$ if the matrix
\begin{align*}
    C_k = \left[ B, AB, \ldots, A^{k-1}B \right] \enspace \in \mathbb{R}^{n \times km}
\end{align*}
has full row-rank. In addition, such as system is also considered $(k, \kappa)$-\textit{strongly controllable} if $\norm{(C_k C_k^{\top})^{-1}} \leq \kappa$.
\end{definition}

In order to prove regret bounds in the setting of unknown systems, we must ensure that the system remains somewhat controllable during the exploration phase, which we do by introducing the following assumptions which are slightly stronger than the ones required in the known system setting:

\begin{assumption}
We assume that the perturbation sequence is chosen at the start of the interaction, implying that this sequence $w_t$ does not depend on the choice of $u_t$.
\end{assumption}

\begin{assumption}
The learner knows a linear controller $\mathbb{K}$ that is $(\kappa, \gamma)$-strongly stable for the true, but unknown, transition matrices $(A,B)$ defining the dynamical system.
\end{assumption} 

\begin{assumption}
The linear dynamical system $(A - B\mathbb{K}, B)$ is $(k, \kappa)$-strongly controllable.
\end{assumption}

Note then that $\mathbb{K}$ is any stabilizing controller ensuring that the system remains controllable under the random actions, and $k$ the controllability index of the system.

\begin{algorithm}
\caption{System identification via random inputs}
\label{alg:sys-id}
\begin{algorithmic}[1]
    \STATE \textbf{Input:} $T_0, \mathbb{K}$
    \FOR {$t = 0, \ldots, T_0$}
        \STATE sample $\xi_t \in_\text{R} \{\pm 1\}^m$
        \STATE choose action $u_t = -\mathbb{K} x_t + \xi_t$
        \STATE Incur loss $c_t(x_t, u_t)$,  record $x_t$
    \ENDFOR
    \STATE set $N_j = \frac{1}{T_0 - k}\sum\limits_{t=0}^{T_0 - k - 1} x_{t+j+1} \xi_t^T$ for all $j$ in $[k]$
    \STATE Let $C_0 = (N_0, \ldots N_{k-1})$,  $C_1 = (N_1, ... N_k)$
    \STATE set $\hat{A} = C_1 C_0^T (C_0 C_0^T)^{-1} + N_0 K$ and $\hat{B} = N_0$
\end{algorithmic}
\end{algorithm}

\subsection{The algorithm and regret guarantee}

Combining algorithm \ref{alg2} with the system identification method in algorithm \ref{alg:sys-id}, we obtain the following algorithm for the control of unknown systems.

\begin{algorithm}
\caption{BPC with system identification}
\label{alg3}
\begin{algorithmic}[1]
    \STATE \textbf{Input:} $H, T_0, T, \{ \eta_t \}, \delta, \mathcal{M}, \mathbb{K}, K$
    \STATE \textbf{Phase 1:} Run Algorithm \ref{alg:sys-id} with a budget of $T_0$ to obtain system estimates $\hat{A}, \hat{B}$
    \STATE\textbf{Phase 2:} Run Algorithm \ref{alg2} with the dynamics $\hat{A}, \hat{B}$ for $T - T_0$ timesteps, and $\hat{w}_{T_0} = x_{T_0+1}$
\end{algorithmic}
\end{algorithm}

The performance guarantee for algorithm \ref{alg3} is given in the following theorem. Note that $\hat{\delta}$ is the probability of failure for algorithm \ref{alg:sys-id}.

\begin{theorem} \label{theorem3}
If our system satisfies the assumptions put forth, setting $T_0 = \Theta \left(T^{2/3} \log\hat{\delta}^{-1}\right)$, $\hat{\delta} = \Theta(T^{-1})$, and $\eta_t, \delta$, and $H$ as in theorem \ref{thm:control-known}, we have that the regret incurred by Algorithm \ref{alg3} satisfies
\begin{align*}
    \text{\emph{Regret}} \leq \mathcal{O}\left(T^{3/4}\log^{5/2}T\right). 
\end{align*}
\end{theorem}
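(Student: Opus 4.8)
\textbf{Proof proposal for Theorem \ref{theorem3}.}

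The plan is to decompose the regret of Algorithm \ref{alg3} into three pieces corresponding to the two phases and to the mismatch between the estimated dynamics and the true ones, then balance the exploration budget $T_0$ against the degradation this mismatch causes in Phase 2. Write $\mathrm{Regret} = \underbrace{\big(J_{T_0}(\mathrm{Phase\ 1}|w) - \text{benchmark on } [0,T_0]\big)}_{(\mathrm{I})} + \underbrace{\big(J_{T-T_0}(\mathrm{Phase\ 2}|w) - \text{benchmark on } [T_0,T]\big)}_{(\mathrm{II})}$, where the benchmark on each interval is against the best DAC $M\in\mathcal M$ that has access to the true $(A,B)$. Since costs are bounded by a constant (after the normalization in Section \ref{sec:preliminaries}) and the exploration inputs $\xi_t\in\{\pm1\}^m$ together with the strongly stable $\mathbb K$ keep states bounded, term $(\mathrm I)$ is trivially $\mathcal O(T_0)$; with $T_0 = \Theta(T^{2/3}\log\hat\delta^{-1})$ and $\hat\delta = \Theta(T^{-1})$ this contributes $\tilde{\mathcal O}(T^{2/3})$, which is dominated by the $T^{3/4}$ target.

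The substance is in term $(\mathrm{II})$, which I would split further as
\begin{align*}
(\mathrm{II}) \;=\; \underbrace{\big(J(\mathrm{Phase\ 2}|w) - J(\widehat M^\star \text{ on } \widehat A,\widehat B)\big)}_{(\mathrm{II}_a)} \;+\; \underbrace{\big(J(\widehat M^\star\text{ on }\widehat A,\widehat B) - J(M^\star\text{ on }A,B)\big)}_{(\mathrm{II}_b)},
\end{align*}
where $\widehat M^\star$ is the best DAC in hindsight for the estimated system and $M^\star$ the best for the true system. Term $(\mathrm{II}_a)$ is exactly the regret bound of Theorem \ref{thm:control-known} applied to the system $(\widehat A,\widehat B)$ — this is legitimate provided the estimated system inherits the boundedness/stability/smoothness constants (up to constant factors), which holds once $\|\widehat A - A\|,\|\widehat B-B\|$ are small enough; this gives $\mathcal O(T^{3/4}\log^{5/2}T)$. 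Term $(\mathrm{II}_b)$ is a perturbation/sensitivity bound: one shows that running the same DAC policy class on two nearby linear systems produces trajectories, and hence costs, that differ by $\mathcal O(\mathrm{poly}(\text{natural params})\cdot\epsilon_{\mathrm{sys}})$ per step, where $\epsilon_{\mathrm{sys}} = \max(\|\widehat A-A\|,\|\widehat B-B\|)$; summed over $T$ steps this is $\mathcal O(T\,\epsilon_{\mathrm{sys}})$. Here one must also control the fact that the $\widehat w_t$ reconstructed in Phase 2 from $\widehat A,\widehat B$ differ from the true $w_t$ by $\mathcal O(\epsilon_{\mathrm{sys}})$, which feeds into both the algorithm's iterates and the benchmark; the geometric decay $\|M^{[i]}\|\le \kappa^3\kappa_B(1-\gamma)^i$ in the definition of $\mathcal M$ is what keeps this error from compounding across the horizon $H=\Theta(\log T)$.

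The remaining ingredient is the estimation guarantee for Algorithm \ref{alg:sys-id}: under Assumptions on strong controllability of $(A-B\mathbb K, B)$ and strong stability of $\mathbb K$, the method-of-moments estimator built from the correlations $N_j = \frac1{T_0-k}\sum_t x_{t+j+1}\xi_t^\top$ satisfies $\epsilon_{\mathrm{sys}} = \tilde{\mathcal O}(\sqrt{\log\hat\delta^{-1}/T_0})$ with probability $1-\hat\delta$ — this is the non-stochastic system identification result of \cite{hazan2019nonstochastic, simchowitz2019learning} and I would cite it rather than reprove it; the key point is that the i.i.d.\ sign inputs $\xi_t$ make the adversarial $w_t$ act like bounded noise uncorrelated with the inputs, so least-squares/method-of-moments is consistent. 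Plugging in $T_0 = \Theta(T^{2/3}\log\hat\delta^{-1})$ gives $\epsilon_{\mathrm{sys}} = \tilde{\mathcal O}(T^{-1/3})$, hence $(\mathrm{II}_b) = \tilde{\mathcal O}(T\cdot T^{-1/3}) = \tilde{\mathcal O}(T^{2/3})$. Adding the three contributions — $\tilde{\mathcal O}(T^{2/3})$ from Phase 1, $\mathcal O(T^{3/4}\log^{5/2}T)$ from $(\mathrm{II}_a)$, $\tilde{\mathcal O}(T^{2/3})$ from $(\mathrm{II}_b)$ — and taking a union bound over the $\hat\delta = \Theta(T^{-1})$ failure event (whose contribution to expected regret is $\mathcal O(\hat\delta \cdot T) = \mathcal O(1)$) yields the claimed $\mathcal O(T^{3/4}\log^{5/2}T)$. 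I expect the main obstacle to be the bookkeeping in $(\mathrm{II}_b)$: carefully propagating $\epsilon_{\mathrm{sys}}$ through the reconstructed disturbances, the closed-loop state sequences of both the algorithm and the comparator, and the memory-$H$ loss functions, while keeping every constant polynomial in the natural parameters and every geometric series summable.
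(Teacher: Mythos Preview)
Your proposal is correct and follows essentially the same three-term decomposition as the paper's proof: $\mathcal{O}(T_0)$ for exploration, Theorem~\ref{thm:control-known} applied to $(\widehat A,\widehat B)$ for the Phase-2 regret (the paper calls the equivalence you rely on here the ``Simulation Lemma''), and a sensitivity bound of order $\mathcal{O}(T\epsilon_{\mathrm{sys}})$ for the comparator mismatch, citing the same system-identification and perturbation lemmas from \cite{hazan2019nonstochastic}. The only cosmetic difference is that the paper routes the intermediate comparator through $M^\star$ (the true-system optimum) rather than your $\widehat M^\star$, but since $J(\widehat M^\star|\widehat A,\widehat B)\le J(M^\star|\widehat A,\widehat B)$ the two decompositions are equivalent.
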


\begin{proof}
We split the regret incurred by algorithm \ref{alg3}, which we will denote by $\mathcal{A}$, into
\begin{align*}
    \regret = \text{Regret}_1 + \text{Regret}_2 + \text{Regret}_3
\end{align*}
where the first term corresponds to the regret of the system identification phase, the second term to the regret of algorithm \ref{alg2} relative to the optimal DAC policy $M^\star$, and the final term to the difference between the performance of $M^\star$ on the estimated and true dynamics. Specifically, for $M^{\star} \doteq \arg \min\limits_{M \in \mathcal{M}} \left[ J(M|A, B, w) \right]$ we have
\begin{align}
    \text{Regret}_1 & = J_{T_0}(\text{System identification}) \\ 
    \text{Regret}_2 & = J_{T-T_0}(\mathcal{A} |\hat{A}, \hat{B}, \hat{w}) - J_{T-T_0}(M^{\star}|\hat{A}, \hat{B}, \hat{w}) \label{reg_2}\\
    \text{Regret}_3 & = J_{T-T_0}(M^{\star}|\hat{A}, \hat{B}, \hat{w}) - J_{T-T_0}(M^{\star}|A, B, w). \label{reg_3}
\end{align}
By Lemma 20 in \cite{hazan2019nonstochastic}, the cost incurred during the system identification phase adds up to $\text{Regret}_1 = \mathcal{O}(T_0) = \mathcal{O}(T^{2/3} \log \hat{\delta}^{-1}) = \mathcal{O}(T^{2/3} \log T)$, and since the regret incurred by the second phase of the algorithm has an $\mathcal{O}(T^{3/4}\log^{5/2}T)$ bound, Regret$_1$ is insignificant to our final result.

Next, since $J(M^{\star}|\hat{A}, \hat{B}, \hat{w}) \geq \min\limits_{M \in \mathcal{M}} J(M|\hat{A}, \hat{B}, \hat{w})$ and phase 2 corresponds to running Algorithm \ref{alg2} on $\hat{A}, \hat{B}$ by the Simulation Lemma, Theorem \ref{thm:control-known} implies
\begin{align*}
    \text{Regret}_2 \leq \mathcal{O}\left(T^{3/4}\log^{5/2}T\right)
\end{align*}

We now move on to Regret$_3$. Let $A, B$ denote the true, unknown dynamics and let $\hat{A}, \hat{B}$ be output of Phase 1 after $T_0$ exploration rounds. By Theorem 19 in \cite{hazan2019nonstochastic}, with probability $1 - \hat{\delta}$, we have that 
\begin{align}
    \norm{A - \hat{A}}_F, \norm{B - \hat{B}}_F \leq \eps_{A, B} \label{eqn:sysid}
\end{align}
where $T_0 = \Theta\left(\eps^{-2}_{A, B} \log{\hat{\delta}^{-1}}\right)$. Our choice of $T_0$ therefore implies that $\eps_{A, B} = \Theta\left( T^{-1/3}\log^{-1/2}{\hat{\delta}^{-1}}\right)$. Now, by our assumptions on the bound on the perturbations there exists a constant $\eps_w$ such that $\norm{w_t - \hat{w}_t} \leq \eps_w$. Observe that if $\hat{A}, \hat{B}$ satisfy \ref{eqn:sysid}, then 
\begin{align*}
    \norm{w_t - \hat{w}_t} = \:\:\: &\norm{(x_{t+1} - Ax_t - Bu_t) - (x_{t+1} - \hat{A}x_t - \hat{B}u_t)} & \\
    \leq & \norm{A - \hat{A}} \cdot \norm{x_t} + \norm{B - \hat{B}} \cdot \norm{u_t}  & \text{($\bigtriangleup$-inequality)} \\
    = \:\:\: & \mathcal{O}(\eps_{A, B}) &
\end{align*}
since by assumption $x_t$ and $u_t$ are bounded, which means that the smallest value for $\eps_w$ satisfies $\eps_w = \mathcal{O}(\eps_{A,B})$. By Lemma 17 in \cite{hazan2019nonstochastic} and the formula of state evolution, it follows that for any $M \in \mathcal{M}$:
\begin{align*}
    |J(M|\hat{A},\hat{B},\hat{w})- J(M|A,B,w)| 
    &\leq |J(M|\hat{A},\hat{B},\hat{w})- J(M|A,B,\hat{w})| + |J(M|A, B,\hat{w})- J(M|A,B,w)| \\
    &\leq \mathcal{O}( T(\eps_w + \eps_{A,B}) ) \\
    &\leq \mathcal{O}(T^{2/3}\log^{-1/2}{\hat{\delta}^{-1}}) \label{reg_1}
\end{align*}
with probability $1 - \hat{\delta}$, and hence Regret$_3 = \mathcal{O}(T^{2/3})$ with probability $1 - \hat{\delta}$ as well.

Adding up everything we get that with probability $1 - \hat{\delta}$
\begin{align*}
    \text{Regret} \leq \mathcal{O}\left( T^{2/3}\log\hat{\delta}^{-1} +  T^{3/4} \log^{5/2}T + T^{2/3}\log^{-1/2}{\hat{\delta}^{-1}} \right).
\end{align*}
With at most probability $\hat{\delta}$ we obtain worst-case regret of $\mathcal{O}(T)$ since our costs are bounded. Thus we can set $\hat{\delta} = \Theta(T^{-1})$ and obtain our final regret bound
\begin{align*}
\text{Regret} &\leq \mathcal{O}\left(T^{2/3} \log\hat{\delta}^{-1} + T^{3/4}\log^{5/2}T + + T^{2/3}\log^{-1/2}{\hat{\delta}^{-1}} + \hat{\delta} T\right) \\
&\leq \mathcal{O}(T^{3/4}\log^{5/2}T).
\end{align*}
\end{proof}

\begin{remark}
We see that, for our approach, Algorithm \ref{alg3} enjoys the same regret bound as Algorithm \ref{alg2} despite acting in an unknown system. This is because both the regret incurred during exploration and the difference in performance between the $\hat{A}, \hat{B}$-optimal DAC and the true optimal DAC are of lower order than the regret incurred by Algorithm \ref{alg2}.
\end{remark}

\section{Experimental Results}\label{sec:experiments}

We now provide empirical results of our algorithms' performance on different dynamical systems and under various noise distributions. In all figures, we average the results obtained over 25 runs and include the corresponding confidence intervals. All algorithm implementations are available at \cite{TigerControl}.

\subsection{Control with known dynamics}
We first evaluate our Algorithm \ref{alg2} (BPC) while comparing to GPC \cite{agarwal2019online}, as well as the baseline method Linear Quadratic Regulator (LQR) \cite{kalman1960general}. For both BPC and GPC we initialize $K$ to be the infinite-horizon LQR solution given dynamics $A$ and $B$ in all of the settings below in order to observe the improvement provided by the two perturbation controllers relative to the classical approach.

We consider four different loss functions: 
\begin{enumerate}
    \item $L_2^2$-norm: $c_t(x, u) = ||x||^2 +  ||u||^2$ (also known as \textit{quadratic cost}), 
    \item $L_1$-norm: $c_t(x, u) = ||x||_1 + ||u||_1$, 
    \item $L_{\infty}$-norm: $c_t(x, u) = ||x||_{\infty} + ||u||_{\infty}$,
    \item ReLU: $c_t(x, u) = ||\max(0, x)||_1 + ||\max(0, u)||_1$ (each max taken element-wise).
\end{enumerate}
We run the algorithms on two different linear dynamical system, the $n=2, m=1$ double integrator system defined by $A = \begin{bmatrix} 1 & 1 \\ 0 & 1 \end{bmatrix}$ and $B = \begin{bmatrix} 0  \\ 1\end{bmatrix}$, as well as one additional setting on a larger LDS with $n=5, m=3$ for sparse but non-trivial $A$ and $B$. We analyze the performance of our algorithms for the following 3 noise specifications.

\begin{enumerate}
    \item \textbf{Sanity check.} We run our algorithms with i.i.d Gaussian noise terms $w_t \sim \mathcal{N}(0, I)$. We see that decaying learning rates allow the GPC and BPC to converge to the LQR solution which is optimal for this setup.
    \item \textbf{Sinusoidal noise.} In this setup, we look at the sinusoidal $w_t = sin(t / (20\pi))$. In this correlated noise setting, the LQR policy is sub-optimal, and we see that both BPC and GPC outperform it.
    \item \textbf{Gaussian random walk.} In the Gaussian random walk setting, each noise term is distributed normally, with the previous noise term as its mean, i.e. $w_{t+1} = \mathcal{N}\left(w_t, \frac{1}{T}\right)$. Since $T = 1000$, we have approximately that $w_{t+1} - w_t \sim \mathcal{N} (0, 0.3^2)$.
\end{enumerate}

\begin{figure}[htp!]
    \centering
    \begin{tabular}{cc}
    \subfloat[Sanity check with fixed learning rate]{\includegraphics[width=0.24\textwidth] {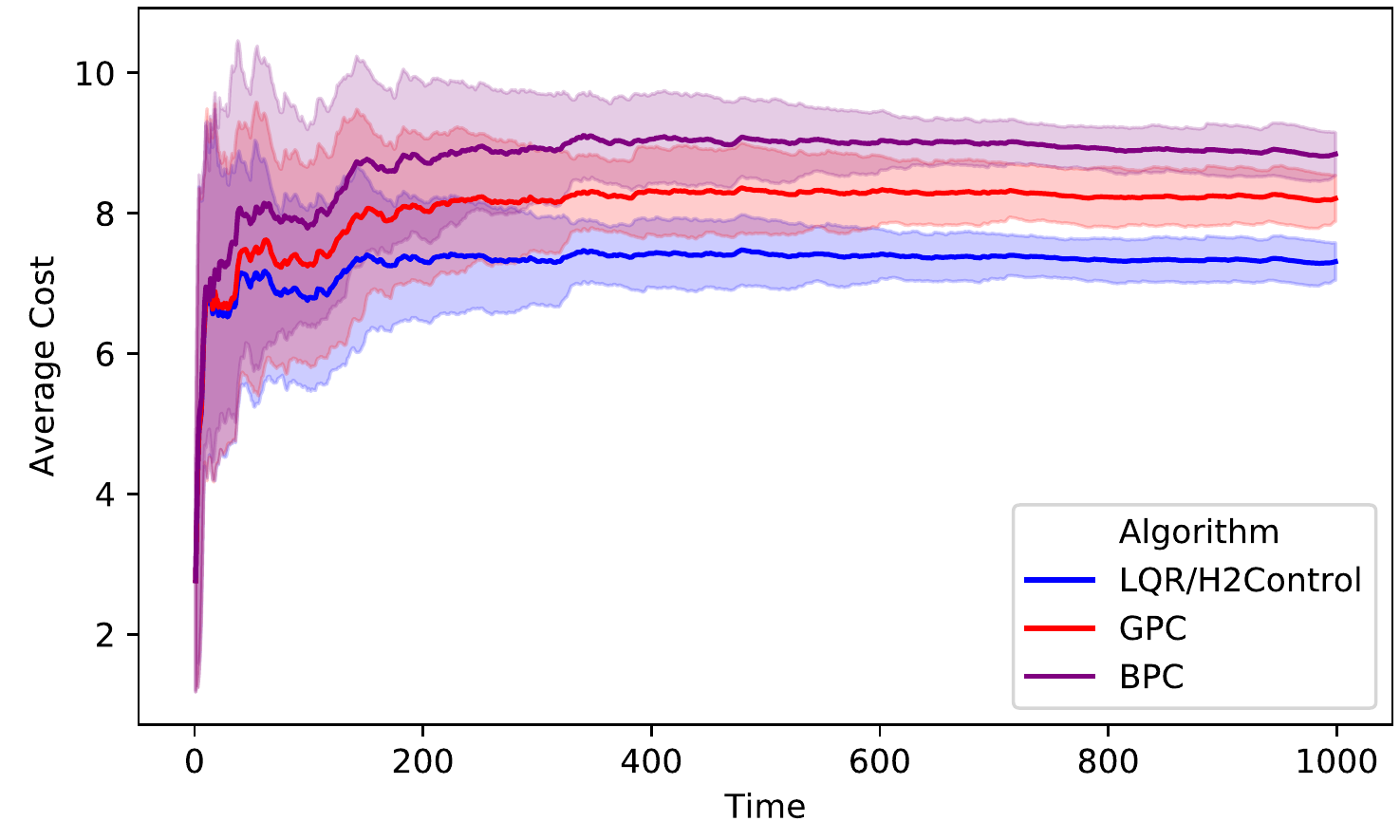}}
    \enspace
    \subfloat[Sanity check with decaying learning rate.]{\includegraphics[width=0.24\textwidth] {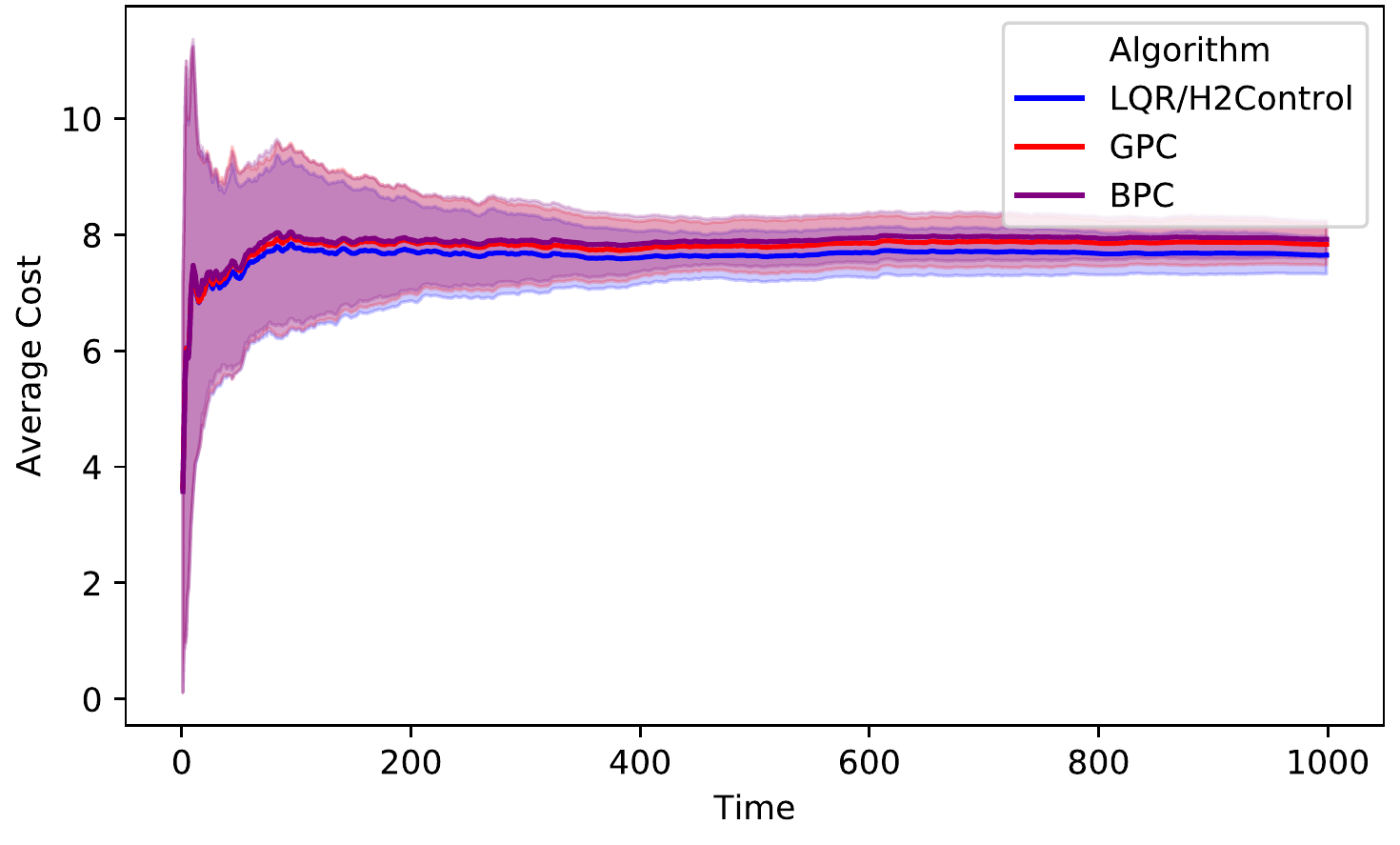}}
    \enspace
    \subfloat[Sinusoidal noise and quadratic costs.]{\includegraphics[width=0.25\textwidth]{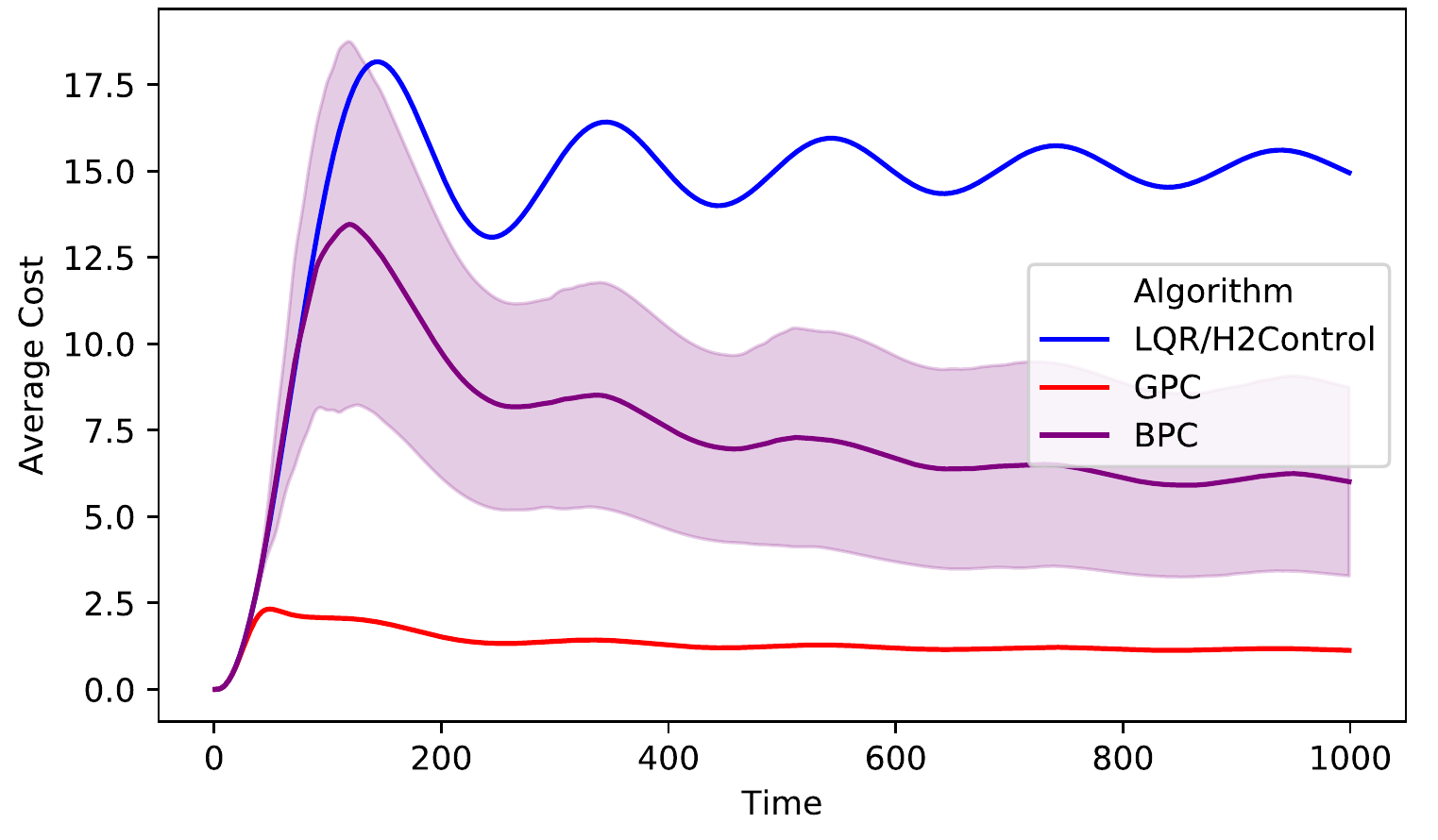}}
    \enspace
    \subfloat[Sinusoidal noise and $L_1$ costs.]{\includegraphics[width=0.22\textwidth] {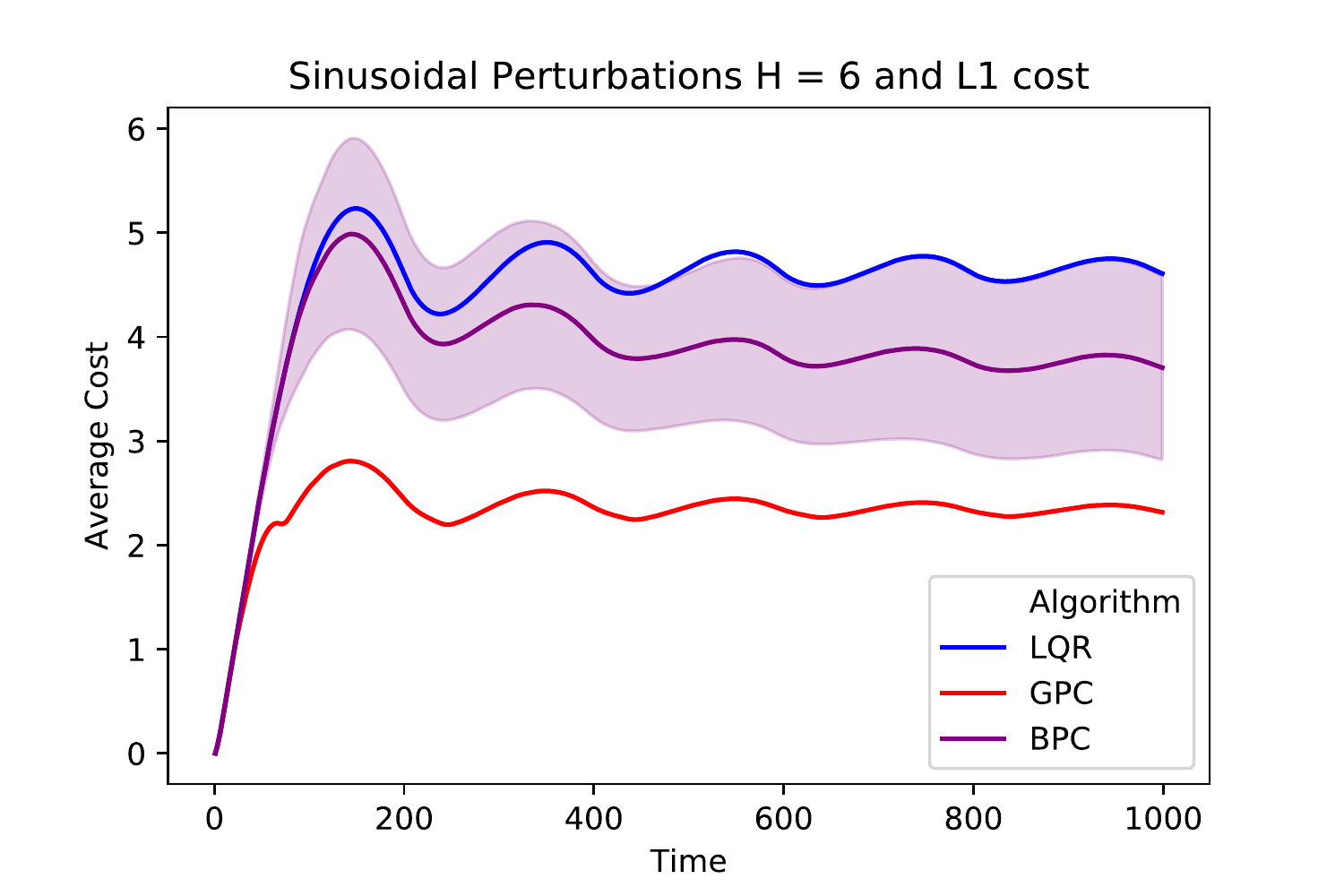}}
    \enspace \\
    \subfloat[Random walk noise on simple LDS and quadratic costs.]{\includegraphics[width=0.25\textwidth] {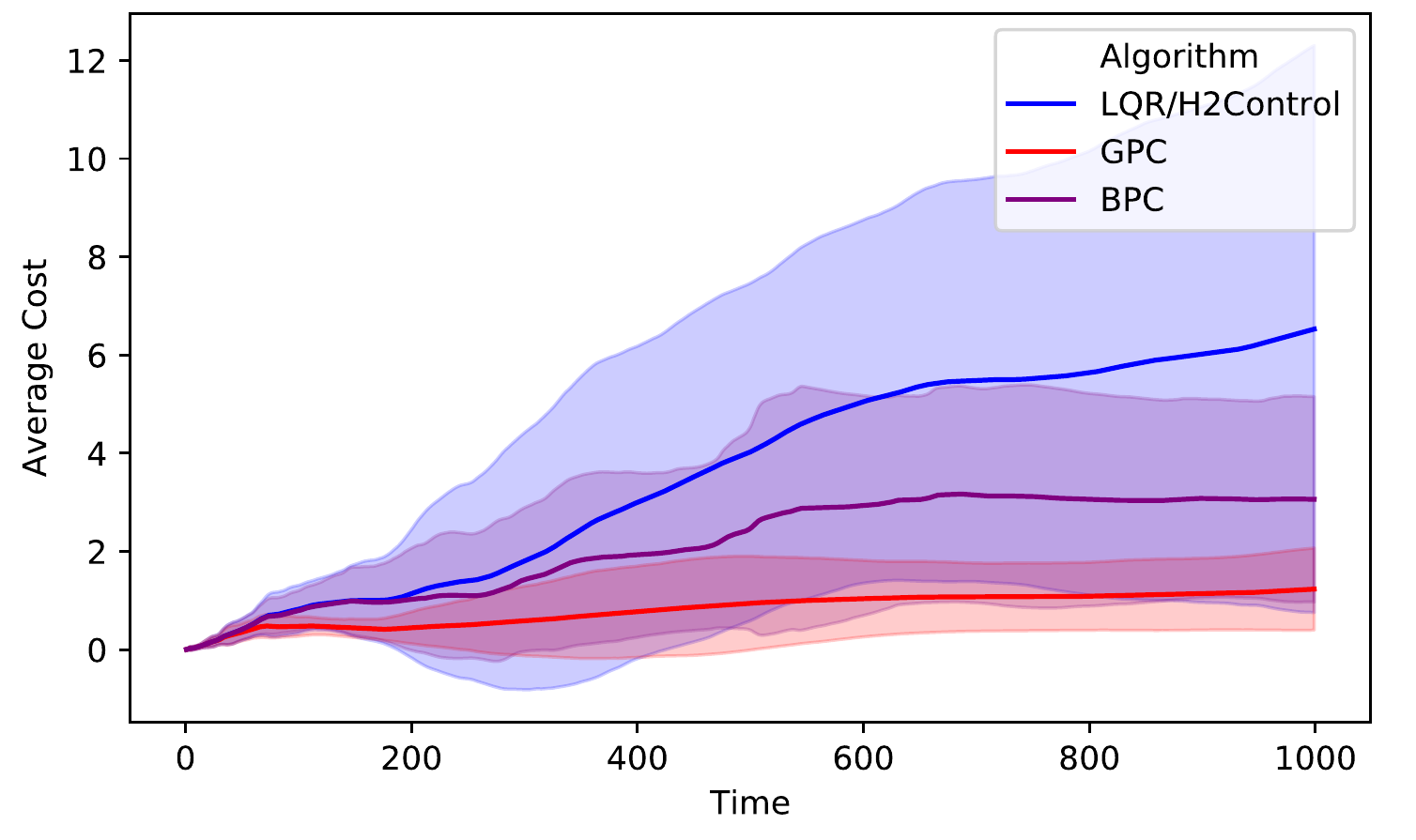}}
    \enspace
    \subfloat[Random walk noise on simple LDS and $L_1$ costs.]{\includegraphics[width=0.23\textwidth] {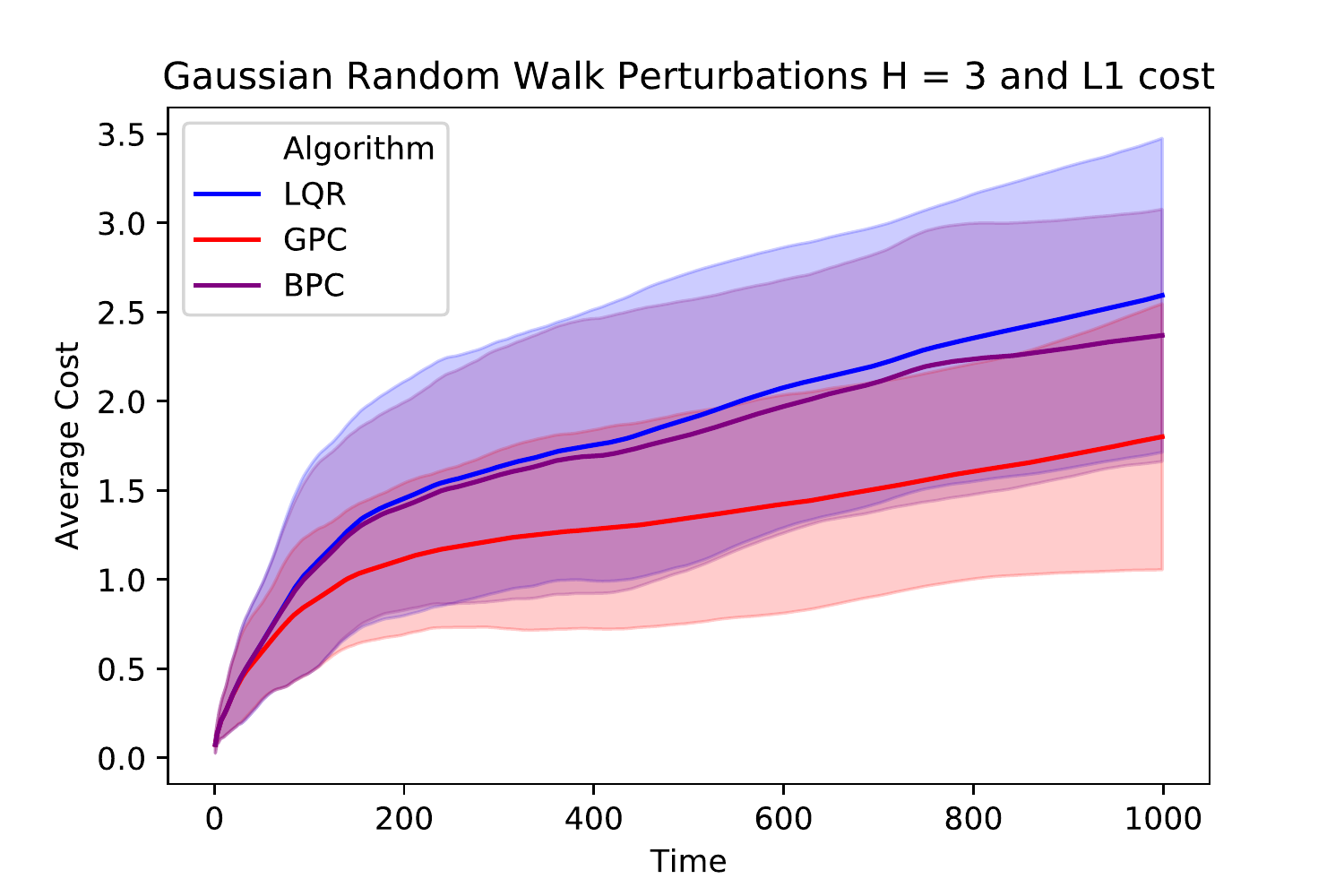}}
    \enspace
    \subfloat[Sinusoidal noise on complex LDS and $L_{\infty}$ cost.]{\includegraphics[width=0.23\textwidth] {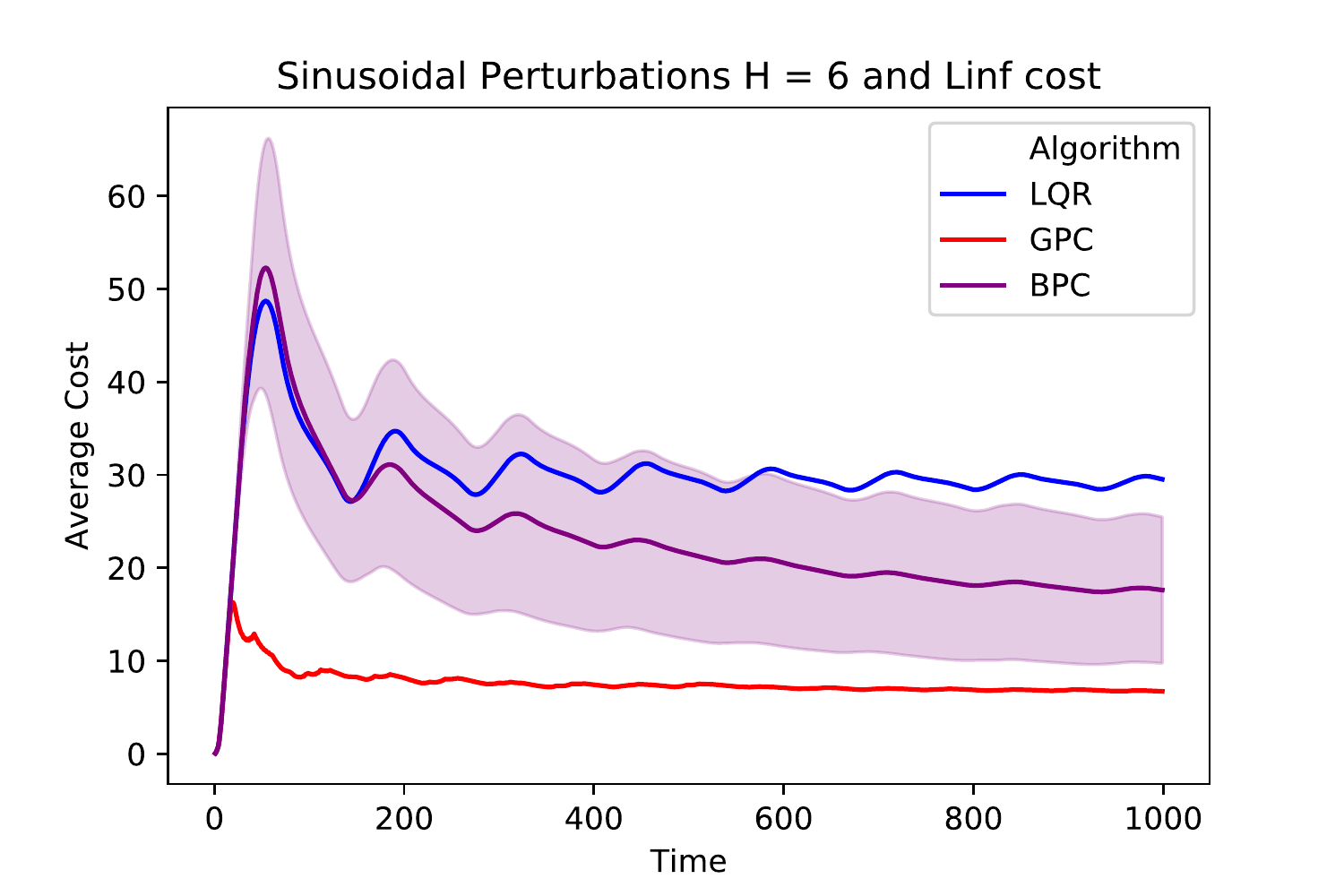}} 
    \enspace
    \subfloat[Sinusoidal noise on complex LDS and ReLU costs.]{\includegraphics[width=0.23\textwidth] {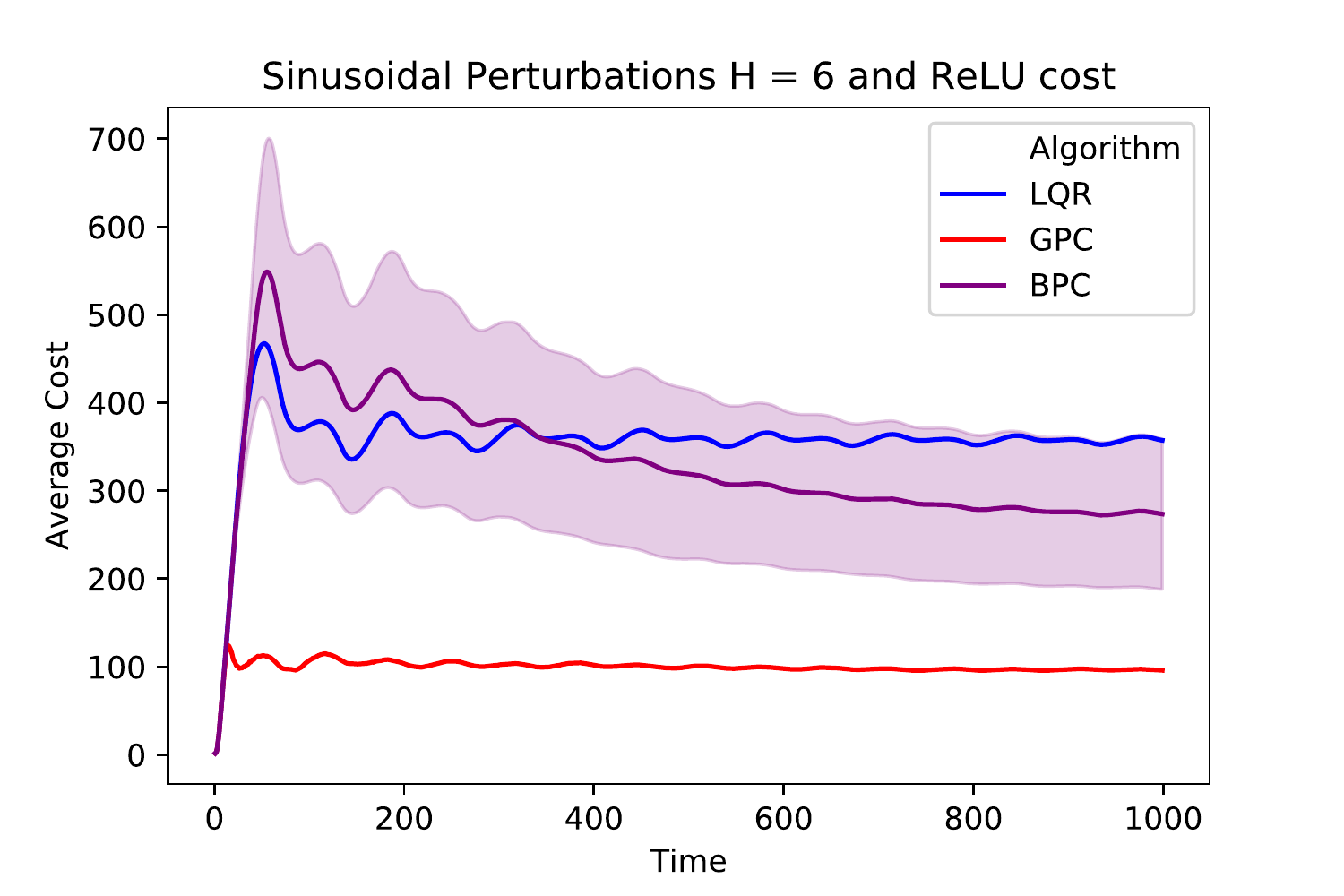}}
    \end{tabular}
    \caption{Known dynamics, small and large LDS setting.}
\end{figure}

\subsection{Control with unknown dynamics}
Next, we evaluate algorithm \ref{alg3} on unknown dynamical systems. We obtain estimates $\hat{A}$ and $\hat{B}$ of the system dynamics using two different types of system identification methods, the first being the method described in algorithm \ref{alg:sys-id}, and the second being regular linear regression based on all observations up to the current time point. We then proceed with the experiments as in the previous section, with all algorithms being given $\hat{A}$ and $\hat{B}$ instead of the true $A, B$. That is, LQR produces policy $\hat{K}$ based on the solution of the algebraic Riccati equation given by $\hat{A}$ and $\hat{B}$, and both BPC and GPC start from this initial $\hat{K}$ and obtain estimates of the disturbances $\hat{w}_t$ based on the approximate dynamics. 

We run experiments with quadratic costs for the first LDS described in the known dynamics section, with scaled down dynamics matrices $A$ and $B$ such that their nuclear norm is strictly less than 1 so that the dynamical system remains stable during the identification phase. The system identification phase is repeated for each experiment and runs for $T_0 = 5000$ time steps and with initial control matrix $K$ set to 0.

\begin{figure}[H]
    \centering
    \begin{tabular}{cc}
    \subfloat[Sanity check]{\includegraphics[width=0.25\textwidth] {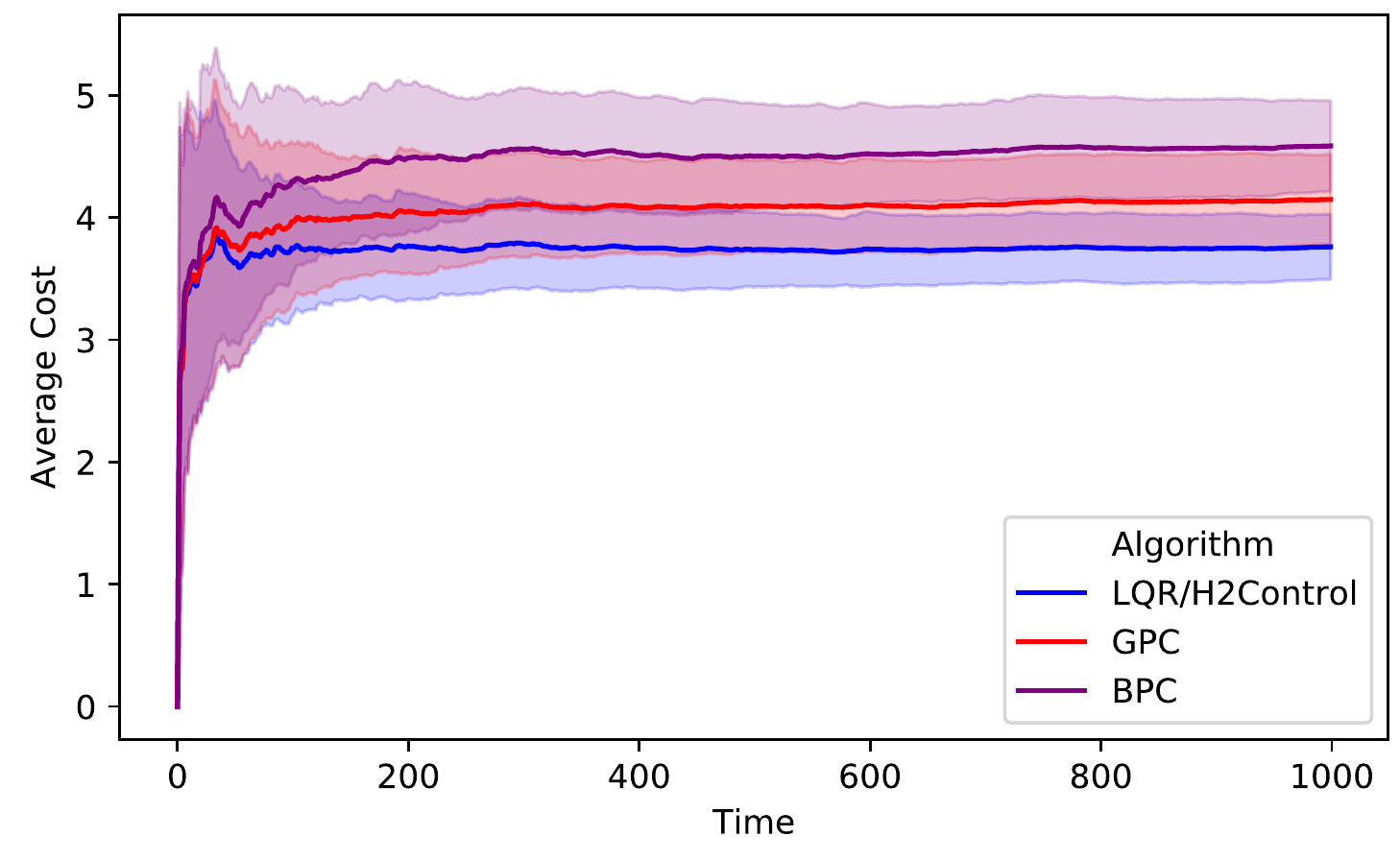}}
    \enspace
    \subfloat[Sinusoidal noise]{\includegraphics[width=0.25\textwidth] {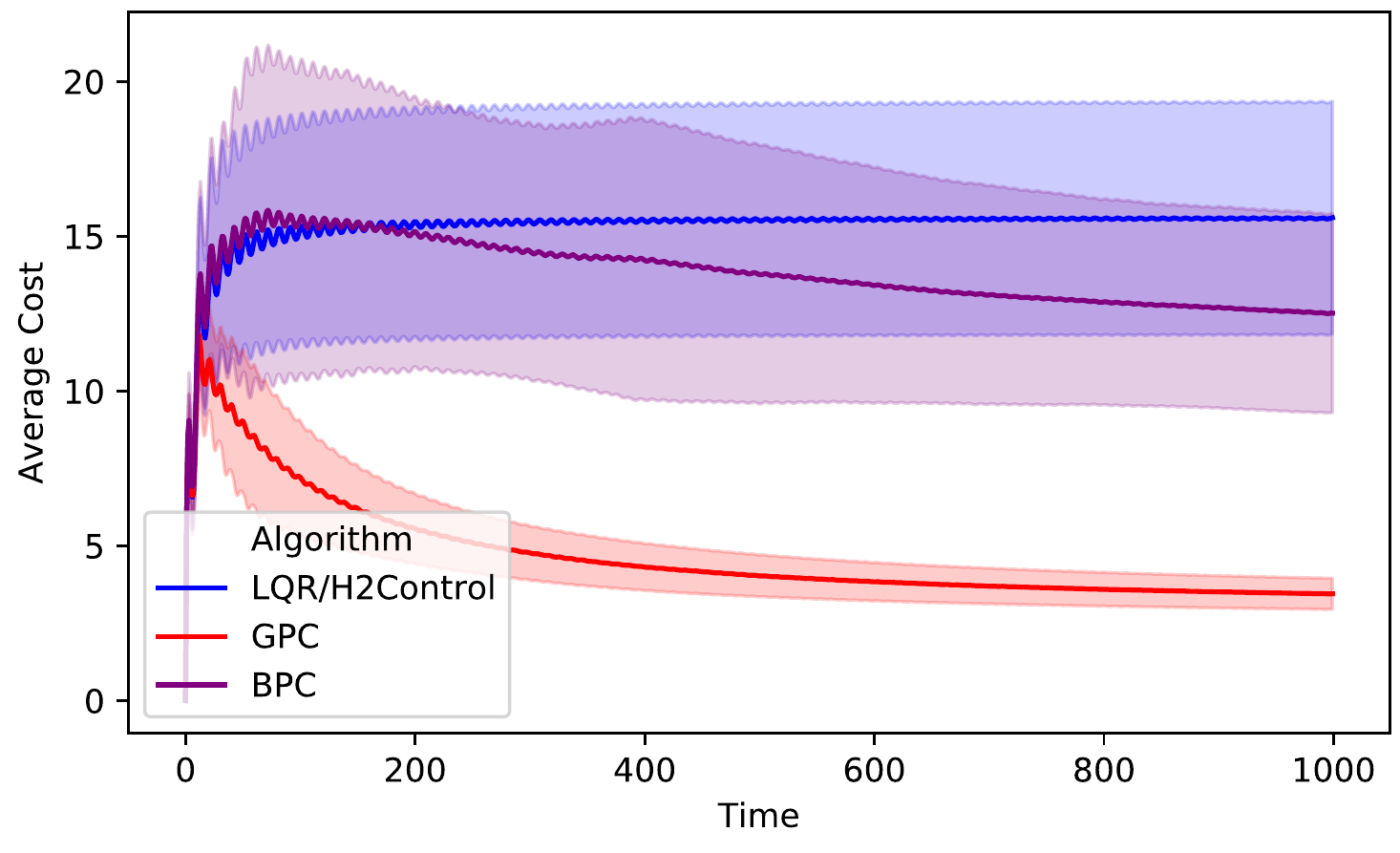}}
    \enspace
    \subfloat[Random walk noise]{\includegraphics[width=0.25\textwidth] {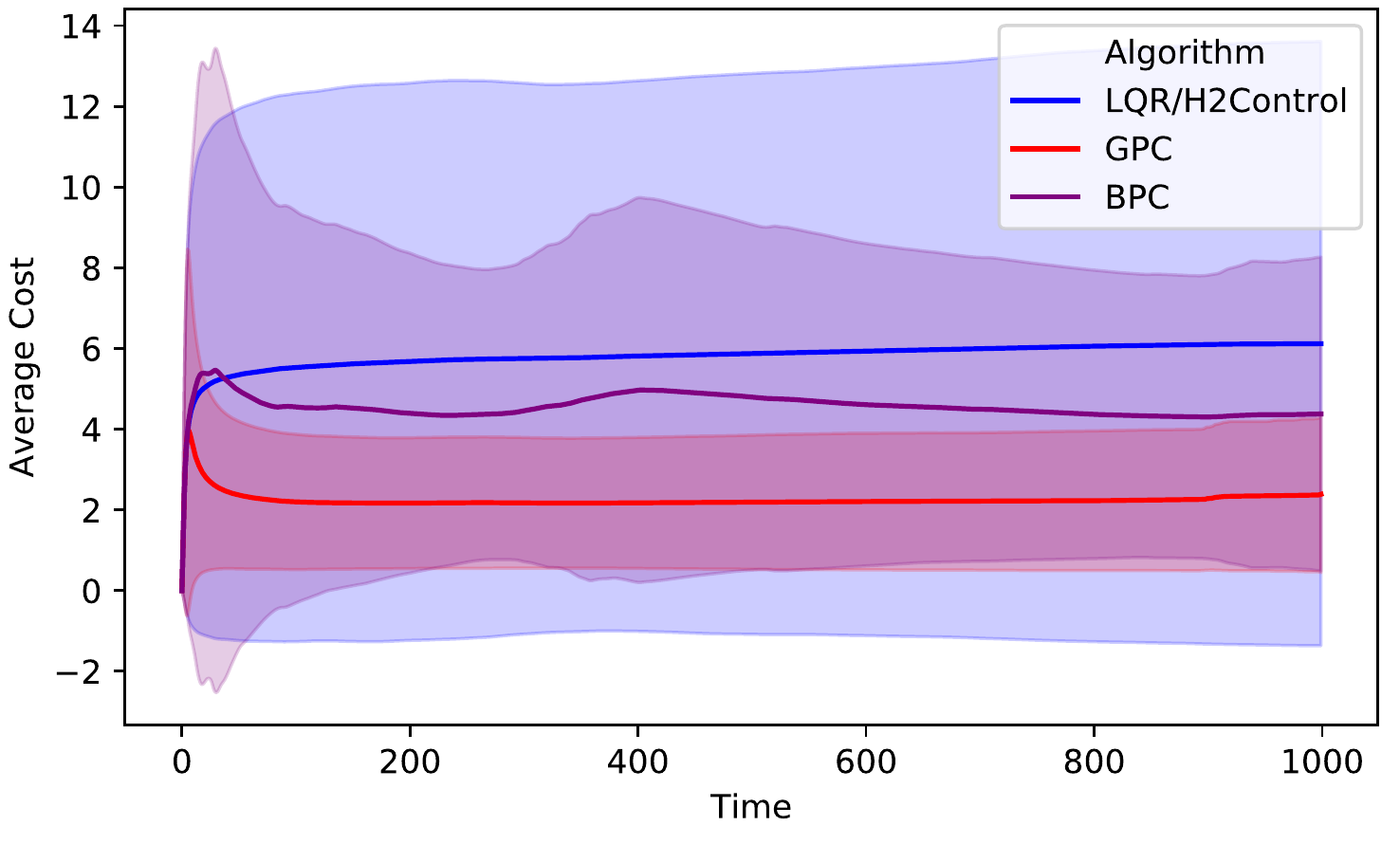}}
    \\
    \subfloat[Sanity check]{\includegraphics[width=0.25\textwidth] {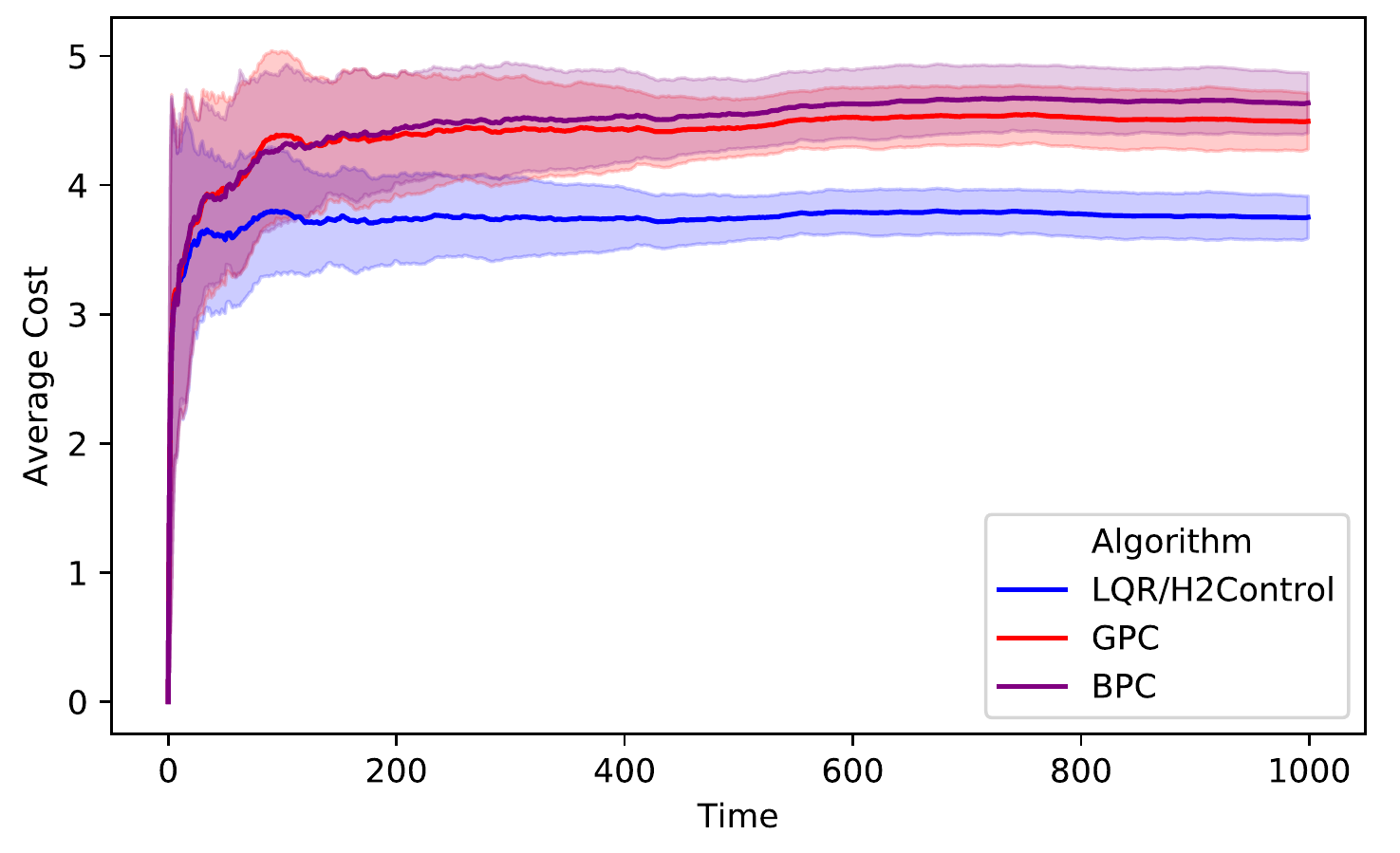}}
    \enspace
    \subfloat[Sinusoidal noise]{\includegraphics[width=0.25\textwidth] {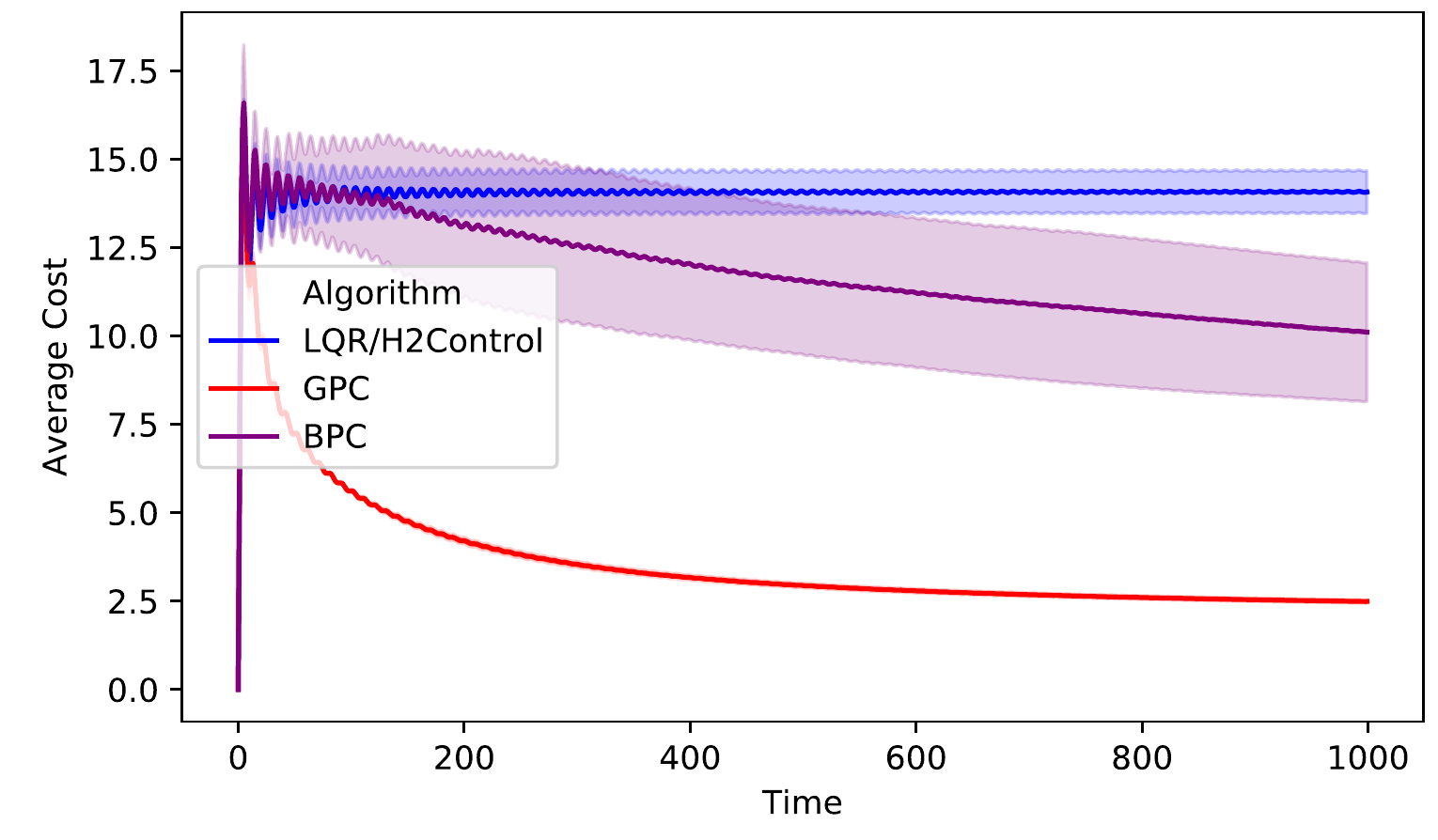}}
    \enspace
    \subfloat[Random walk noise]{\includegraphics[width=0.25\textwidth] {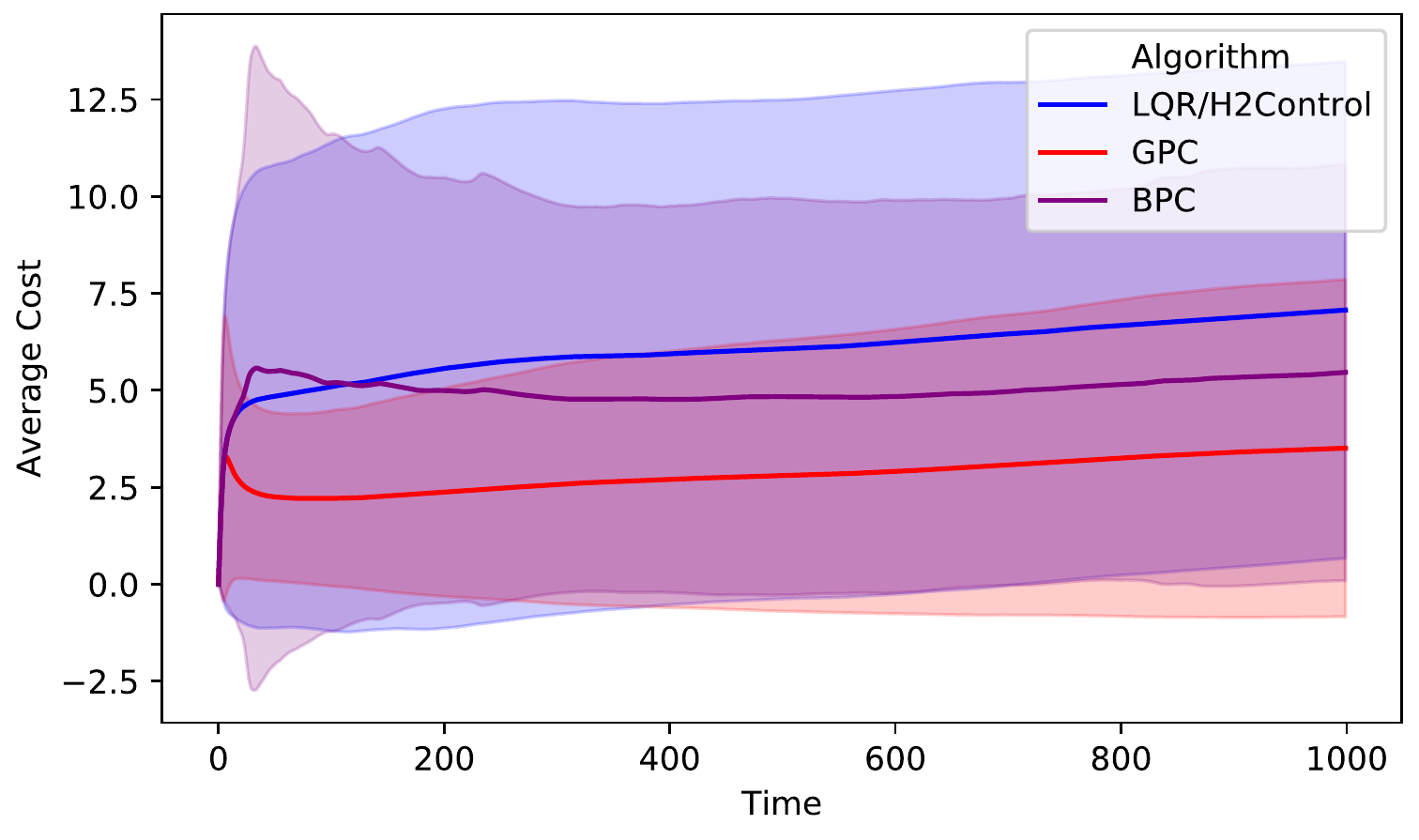}}
    \end{tabular}
    \caption{Unknown dynamics, the top row uses the system identification method in algorithm \ref{alg:sys-id}, and the bottom row uses linear regression.}
\end{figure}

\section{Conclusions and Open Questions}

We have considered the non-stochastic control problem with the additional difficulty of learning with only bandit feedback. We give an efficient method with sublinear regret for this challenging problem in the case of linear dynamics based upon a new algorithm for bandit convex optimization with memory, which may be of independent interest.  The application of bandit optimization to control is complicated due to time dependency issues, which required introducing an artificial delay in our online learning method. 

The setting of control with general convex losses was proposed in 1987 by Tyrrell Rockafellar \cite{rockafellar1987linear} in order to handle constraints on state and control. It remains open to add constraints (such as safety constaints) to online nonstochastic control. 
Other questions that remain open are quantitative: the worst case attainable regret bounds can be potentially improved to $\sqrt{T}$. The dependence on the system dimensions can also be tightened.


\bibliography{cited.bib}
\bibliographystyle{plain}

\newpage
\appendix

\end{document}